\documentclass[10pt]{article}

\usepackage{mystyle}
\usepackage{colortbl}
\RequirePackage[top=2.25cm, bottom=2.5cm, left=2.5cm, right=2.5cm, columnsep=0.65cm]{geometry}

\title{\LARGE Linear Programming Representations and Strongly Polynomial Algorithms for Robust Markov Decision Processes}
\author{
  Han Zhong\thanks{Shanghai Jiao Tong University. Email: \texttt{han.zhong@sjtu.edu.cn}.}
  \and
  Yinyu Ye\thanks{Shanghai Jiao Tong University, SIMIS, and Stanford University. Email: \texttt{yinyu-ye@stanford.edu}.}
}
\date{}

\begin{document}
\maketitle

\begin{abstract}
We study linear programming (LP) representations and strongly polynomial
algorithms for robust Markov decision processes (RMDPs) with rational
polyhedral state-action rectangular uncertainty in rewards and transitions.
By encoding a finite sequence of robust policy-iteration steps,
we construct a single LP whose optimal solutions recover
the robust optimal value and all optimal stationary randomized policies.
At fixed discount, the LP has polynomial dimension and encoding length
and can be constructed in strongly polynomial time.
We also develop a general complexity analysis of robust policy iteration
that combines the cost of minimizing over uncertainty sets with the
number of iterations needed to evaluate a policy.
For a fixed discount factor, we use this analysis to improve the known
complexity bounds for $\ell_1$ and $\ell_\infty$ RMDPs and establish
new strongly polynomial bounds for general interval, weighted $\ell_1$,
and Wasserstein RMDPs, as well as turn-based stochastic games with
these uncertainty sets.
\end{abstract}

\noindent\textbf{Keywords:} robust Markov decision processes;
linear programming representations; strongly polynomial algorithms;
robust policy iteration.

\section{Introduction}
\label{sec:introduction}

Markov decision processes (MDPs) are a fundamental model of sequential
decision making, with applications in operations research, control,
and reinforcement learning. A policy chooses actions as the state
evolves, seeking to maximize the expected cumulative reward.
Although this return generally depends nonlinearly on the policy,
a finite discounted MDP admits a linear programming (LP)
representation \citep{Manne1960,Ye2011}. This LP determines the
optimal value and identifies the optimal actions, connecting dynamic
programming with linear optimization.

General LP algorithms therefore solve rational discounted MDPs in time
polynomial in the numbers of states and actions and the bit lengths of
the model parameters \citep{Schrijver1986}. This dependence on numerical
precision motivates the study of strongly polynomial algorithms, for
which the number of arithmetic operations and comparisons is bounded
by a polynomial in the numbers of states and actions alone.
For a fixed discount factor, \citet{Ye2005} develops such an algorithm
using an interior-point method that exploits the structure of the LP
for MDPs. \citet{Ye2011} establishes strongly polynomial bounds for
the simplex method with Dantzig's rule \citep{Dantzig1963} and Howard's
policy iteration \citep{Howard1960}.

The MDP formulation and the algorithms for solving it
assume that rewards and transition probabilities are known.
In applications, however, these parameters may be uncertain because of
noisy data or model misspecification. A policy optimized for a nominal
model can then perform poorly when the true model differs.
Robust Markov decision processes (RMDPs) address this uncertainty
by maximizing the worst-case expected return
over uncertainty sets for rewards and transitions
\citep{Iyengar2005,NilimElGhaoui2005}.

The additional worst-case optimization complicates the connection
with linear programming. Direct extensions of the LP for MDPs
through robust Bellman inequalities can produce nonconvex optimization
problems \citep{Iyengar2005,GrandClementPetrik2025}. This nonconvexity
does not rule out a different convex formulation, or even an LP
representation. \citet{GrandClementPetrik2025} obtain convex formulations
of entropy-regularized RMDPs through a change of variables.
These formulations approximate the original RMDP, leaving open the
question of a tractable exact formulation. This motivates our first question.
\begin{center}
\textbf{Q1:} Can a single LP exactly represent the optimal value and optimal policies
of an RMDP?
\end{center}

Recent work establishes strongly polynomial algorithms at fixed discount
for RMDPs with $\ell_1$ and $\ell_\infty$ uncertainty
\citep{AsadiEtAl2026,AsadiChatterjeeMontaseriShafiee2026}.
It remains to determine whether these complexity bounds can be sharpened
and whether such guarantees extend to broader uncertainty models,
such as weighted $\ell_1$ balls
\citep{HoPetrikWiesemann2018} and Wasserstein balls
\citep{RamaniGhate2022}.
\begin{center}
\textbf{Q2:} Can a unified analysis yield strongly polynomial guarantees
for broader classes of RMDPs?
\end{center}

\subsection{Our contributions}
\label{subsec:contributions}

We formulate a general polyhedral framework for RMDPs under state-action
rectangularity \citep{Iyengar2005,NilimElGhaoui2005}, with uncertainty in
rewards and transitions specified by rational linear constraints with auxiliary variables
(Section~\ref{subsec:general-sa-rectangular}).
The framework covers $\ell_1$ and weighted $\ell_1$ uncertainty,
interval uncertainty including weighted $\ell_\infty$ balls,
Wasserstein uncertainty, and turn-based stochastic games
(Examples~\ref{ex:halfspace-rmdp}--\ref{ex:turn-based-games}).
We address both questions through robust policy iteration
\citep{Iyengar2005}, which alternates between computing the worst-case
value of the current policy and updating its actions.
Our main contributions are as follows.

\begin{itemize}[leftmargin=*]
  \item \textbf{LP representation.}
  By encoding a finite sequence of policy evaluations and greedy
  improvements, we construct a single LP whose optimal solutions recover the
  robust optimal value and all optimal stationary randomized policies
  (Theorem~\ref{thm:exact-lp}). At fixed discount, the LP has $O(AN)$
  variables and constraints and total encoding length polynomial
  in the input length, and can be constructed directly from the input
  in strongly polynomial time.
  Here $A$ is the number of state-action pairs and $N$ is the
  input size defined in Section~\ref{sec:model}.
  To our knowledge, this gives the first strongly polynomial construction
  of an exact LP representation for this class. It answers the convex formulation
  question raised by \citet{GrandClementPetrik2025}
  without regularization or approximation.
  \looseness=-1 With this representation, general LP algorithms can solve RMDPs,
  and advances in linear programming can yield new RMDP algorithms
  and complexity bounds.

  \item \textbf{Strongly polynomial algorithms.}
  We develop a unified complexity analysis of robust policy iteration.
  We give conditions that bound the number of iterations
  required to evaluate a policy and combine these bounds with the cost
  of minimizing over the uncertainty sets. Under these conditions,
  Theorem~\ref{thm:residual-criterion} bounds the total number of arithmetic
  operations and comparisons for robust policy iteration on an RMDP
  with $S$ states at fixed discount by
  \begin{equation*}
    O\left((A-S+1)\left[(1+\mathsf G)(S^3+\mathsf C_{\rm in})
                       +\mathsf C_{\rm out}\right]\right).
  \end{equation*}
  Here, $\mathsf C_{\rm in}$ and $\mathsf C_{\rm out}$ bound the costs
  of minimizing over the uncertainty sets for one action per state
  and for all state-action pairs, respectively. The parameter
  $\mathsf G$ controls the number of iterations used to compute the
  worst-case value of a fixed policy.
  Polynomial bounds in $N$ on these three parameters therefore yield
  strongly polynomial algorithms. By deriving explicit bounds on these
  parameters, we improve
  the complexity bounds of \citet{AsadiChatterjeeMontaseriShafiee2026}
  for $\ell_1$ and $\ell_\infty$ RMDPs and obtain strongly polynomial
  guarantees for general interval uncertainty, including weighted $\ell_\infty$ balls
  (Corollary~\ref{cor:norm-planning}). We also obtain explicit strongly
  polynomial bounds for weighted $\ell_1$ and Wasserstein RMDPs
  (Corollary~\ref{cor:budget-models}). For turn-based stochastic games,
  we establish complexity bounds in terms of the sizes of their
  polyhedral descriptions (Theorem~\ref{thm:game-progress-criterion}).
  Table~\ref{tab:result-summary} summarizes the general and
  model-specific bounds.
\end{itemize}

We also establish representation and complexity boundaries. General LP
reduces to our RMDP model even at fixed discount
(Proposition~\ref{prop:general-lp-hardness}), so a strongly polynomial
algorithm for the full class would resolve the longstanding strongly
polynomial LP problem \citep[Problem~9]{Smale1998}.
When the discount is part of the input, we use the reduction of
\citet{SuilenPerez2026} to show that a polynomial-size exact LP
representation with polynomial-time construction and recovery of the
optimal value or an optimal policy would yield a polynomial-time
algorithm for parity games
(Corollary~\ref{prop:parity-boundary}), resolving a longstanding open
problem \citep{CaludeEtAl2017}. Under state rectangularity
\citep{WiesemannKuhnRustem2013},
a rational instance can have an irrational optimal value and a unique
irrational optimal policy, precluding rational affine recovery from a
rational LP (Proposition~\ref{prop:s-rectangular-irrationality}).
Figure~\ref{fig:result-overview} summarizes these results and their scope.

\begin{figure}[t]
  \centering
\begingroup
\usetikzlibrary{arrows.meta}
\definecolor{overviewInk}{HTML}{29343A}
\definecolor{overviewBlue}{HTML}{536F86}
\definecolor{overviewBlueFill}{HTML}{F0F4F7}
\definecolor{overviewGreen}{HTML}{607B70}
\definecolor{overviewGreenFill}{HTML}{EDF3EF}
\begin{tikzpicture}[
  x=1pt,y=1pt,
  every node/.style={text=overviewInk,font=\fontsize{9}{11}\selectfont,
    inner sep=0pt,align=center},
  overview arrow/.style={-{Stealth[length=4pt,width=3.4pt]},line width=.65pt},
  overview heading/.style={font=\fontsize{10}{12}\selectfont\bfseries},
  overview result/.style={draw=overviewBlue,line width=.55pt,
    minimum width=172pt,minimum height=19pt,
    font=\fontsize{10}{12}\selectfont}
]
\path[use as bounding box] (0,60) rectangle (470,231);

\draw[draw=overviewBlue,fill=overviewBlueFill,line width=.65pt]
  (0.5,94) rectangle (469.5,227);
\node[overview heading] at (235,212)
  {Polyhedral SA-rectangular RMDPs
   {\normalfont\fontsize{8}{10}\selectfont (fixed discount, rational data)}};
\node[overview result,text=overviewBlue] at (123.5,193)
  {Exact LP representation};

\node[overview result] at (346.5,193)
  {General case: $\ge_{\mathrm{sp}}$ General LP};

\draw[draw=overviewGreen,fill=overviewGreenFill,line width=.6pt]
  (12,103) rectangle (458,177);
\node[overview heading,text=overviewGreen] at (235,165)
  {Strongly polynomial algorithms
   {\normalfont\fontsize{8}{10}\selectfont (under structural conditions)}};
\draw[overviewGreen!35,line width=.45pt] (12,155) -- (458,155);
\draw[overviewGreen!35,line width=.45pt] (12,123) -- (458,123);
\draw[overviewGreen!35,line width=.45pt] (160.667,123) -- (160.667,155);
\draw[overviewGreen!35,line width=.45pt] (309.333,123) -- (309.333,155);
\node at (86.333,139) {(Weighted) $\ell_1$ RMDPs};
\node at (235,139) {Interval RMDPs\\
  {\fontsize{8}{10}\selectfont (including weighted $\ell_\infty$ RMDPs)}};
\node at (383.667,139) {Wasserstein RMDPs};
\node at (235,113) {Turn-based stochastic games
  {\normalfont\fontsize{8}{10}\selectfont (with the above uncertainty sets)}};

\draw[overview arrow,overviewBlue] (117.5,94) -- (117.5,77);
\draw[overview arrow,overviewBlue] (352.5,94) -- (352.5,77);
\node at (117.5,68) {\textbf{Discount as input}: parity-game hardness};
\node at (352.5,68) {\textbf{S-rectangular}: irrational values and policies};

\end{tikzpicture}
\endgroup

  \caption{We give an exact LP representation for the blue class and
  strongly polynomial algorithms for the green classes under structural
  conditions (Sections~\ref{sec:exact-lp}--\ref{sec:applications}).
  Here $\ge_{\mathrm{sp}}$ denotes a strongly polynomial reduction from
  general LP to RMDPs. The arrows summarize the boundaries in
  Section~\ref{sec:boundaries}.}
  \label{fig:result-overview}
\end{figure}

\subsection{Related work}
\label{subsec:related-work}

\paragraph{LP representation for MDPs and RMDPs.}
Classical LP representations underlie algorithms and complexity
results for MDPs \citep{Manne1960,PapadimitriouTsitsiklis1987}.
\citet{GrandClementPetrik2025} establish convex formulations for
entropy-regularized RMDPs with state or state-action rectangular
uncertainty.
\citet{SuilenPerez2026} use LP duality to obtain a polynomial-size LP
for evaluating a prescribed deterministic policy under state-action
rectangular polyhedral transition uncertainty.
For the more challenging task of finding optimal values and policies,
we give a single LP that represents the robust optimal value and all
optimal stationary randomized policies in our more general polyhedral
framework.
Encoding the steps of robust policy iteration in an LP relates our
construction to LP representations of dynamic
programs \citep{MartinRardinCampbell1990} and the
algorithm-to-LP constructions of \citet{AvisEtAl2019}.
At fixed discount, our LP representation has polynomial dimension and
can be constructed in strongly polynomial time.

\paragraph{Strongly polynomial algorithms for MDPs and RMDPs.}
For MDPs at fixed discount, \citet{Ye2005,Ye2011} establishes
strongly polynomial bounds for interior-point, simplex, and
policy-iteration methods.
When the discount is part of the input, the simplex method with
Dantzig's rule remains strongly polynomial for deterministic MDPs
\citep{PostYe2015}, whereas Howard's policy iteration can require
exponentially many iterations on MDPs with stochastic or deterministic
transitions
\citep{HollandersDelvenneJungers2012,ZhongYe2026}.
For MDPs with rational polyhedral action sets and rational linear reward
and transition functions, \citet{FercoqEtAl2013} establish polynomial-time
solvability.
In the robust setting, \citet{SuilenPerez2026} establish polynomial-time
solvability of polyhedral state-action rectangular RMDPs at fixed
discount by using their policy-evaluation LP within robust policy iteration.
For $\ell_1$ and $\ell_\infty$ uncertainty,
\citet{AsadiEtAl2026,AsadiChatterjeeMontaseriShafiee2026} establish
strongly polynomial bounds for robust policy iteration at fixed discount.
Our analysis improves the total arithmetic complexity at fixed discount
for $\ell_1$ and $\ell_\infty$ RMDPs and establishes strongly polynomial
guarantees for general interval uncertainty, including weighted
$\ell_\infty$ balls.
For weighted $\ell_1$ uncertainty, \citet{HoPetrikWiesemann2018}
develop fast exact Bellman updates.
\citet{HoPetrikWiesemann2021} develop partial policy iteration
with linear convergence guarantees.
\looseness=-1 We bound both the cost of computing minimizing transitions and the
number of iterations required to evaluate a policy, obtaining strongly
polynomial algorithms for weighted $\ell_1$ and Wasserstein RMDPs.

\paragraph{Distributionally robust optimization and learning in RMDPs.}
Distributionally robust optimization optimizes worst-case expected
performance over a set of probability distributions, as in the inventory
model of \citet{Scarf1958} and the moment-based formulations of
\citet{DelageYe2010}.
RMDPs apply this criterion to sequential decision making, seeking policies
that maximize worst-case expected cumulative rewards.
When the nominal model is unknown, robust reinforcement learning studies
how to learn these policies from data.
This learning problem has been studied with access to a generative model
\citep{YangZhangZhang2022,PanagantiKalathil2022,LiuEtAl2022,XuPanagantiKalathil2023,WangSiBlanchetZhou2024,ShiEtAl2026},
from offline data
\citep{ZhouEtAl2021,PanagantiEtAl2022,BlanchetEtAl2023,ShiChi2024,LiuXu2024Offline},
and through interactive data collection
\citep{LuEtAl2024,LiuXu2024Online}.
Our work studies LP representations and the arithmetic complexity of
solving RMDPs whose uncertainty sets are specified by rational linear
constraints.

\paragraph{Stochastic games.}
At fixed discount, \citet{HansenMiltersenZwick2013} establish strong
polynomiality of strategy iteration for finite-action turn-based
stochastic games. \citet{AkianGaubert2013} sharpen the bound on the
number of strategy updates.
\citet{JiaWenYe2020} develop simplex strategy-iteration variants
and establish their strong polynomiality at fixed discount.
\citet{MeiSunYe2026} obtain discount-independent strongly polynomial
bounds for deterministic turn-based games in which no directed cycle
contains states controlled by both players.
Our analysis gives a general arithmetic complexity bound for games
with polyhedral action sets and yields strongly polynomial
algorithms at fixed discount for games with $\ell_1$, interval,
weighted $\ell_1$, or Wasserstein transition uncertainty.

\paragraph{Notation.}
We use lowercase boldface letters for vectors and uppercase boldface
letters for matrices. All vectors are column vectors, and vector
inequalities are componentwise. We write $[\mathbf x]_i$ for the $i$-th
component of a vector and $[\mathbf M]_{ij}$ for the $(i,j)$ entry of a
matrix. For index sets $\mathcal I,\mathcal J$,
$[\mathbf x]_{\mathcal I}$ denotes the subvector on $\mathcal I$, and
$[\mathbf M]_{\mathcal I,\mathcal J}$ denotes the submatrix with rows in
$\mathcal I$ and columns in $\mathcal J$.
A colon in place of an index set selects all indices.
For a finite set $\mathcal B$, we write
$\Delta(\mathcal B)$ for its probability simplex.  For a positive integer
$n$, we write $[n]=\{1,\ldots,n\}$.
We write $\mathbf0$ and $\mathbf1$ for the zero and all-one vectors,
$\mathbf e_i$ for the $i$-th standard basis vector, and $\mathbf I$ for
the identity matrix, with dimensions determined by context.

\section{Preliminaries}
\label{sec:model}

\subsection{Polyhedral SA-rectangular RMDPs}
\label{subsec:general-sa-rectangular}

We represent a discounted RMDP by the tuple
$(\mathcal S,\{\mathcal A_s\}_{s\in\mathcal S},\gamma,
\{\mathcal Q_{sa}\}_{s,a})$.
Here $\mathcal S=[S]$ is the state space, $\mathcal A_s$ is the finite,
nonempty action set available at state $s$, and $\gamma\in[0,1)$ is the
discount factor.  We write $A=\sum_s|\mathcal A_s|$ for the total number
of state-action pairs.  The set $\mathcal Q_{sa}$ describes the joint
uncertainty in the reward and transition probabilities at $(s,a)$ through
the extended linear formulation
\begin{equation}
  \mathcal Q_{sa}
  =\left\{\mathbf q=(r,\mathbf p,\mathbf u):
    \mathbf D_{sa}\mathbf q\le\mathbf b_{sa},\quad
    \mathbf E_{sa}\mathbf q=\mathbf f_{sa}\right\}.
  \label{eq:local-ambiguity}
\end{equation}
Let $n_{sa}$ and $m_{sa}$ denote the numbers of variables and
constraints, respectively.
The vector $\mathbf q\in\mathbb R^{n_{sa}}$ consists of the expected
one-period reward $r$, the transition distribution $\mathbf p\in\mathbb R^S$,
and auxiliary variables $\mathbf u\in\mathbb R^{n_{sa}-S-1}$.
The rational matrices $\mathbf D_{sa}$ and $\mathbf E_{sa}$ specify the inequality
and equality constraints, with respective rational right-hand-side vectors
$\mathbf b_{sa}$ and $\mathbf f_{sa}$, where
$(\mathbf b_{sa}^{\top},\mathbf f_{sa}^{\top})^{\top}\in\mathbb R^{m_{sa}}$.
We denote the set of admissible reward-transition pairs at $(s,a)$ by
\begin{equation*}
  \mathcal M_{sa}
  :=\{(r,\mathbf p):\exists\,\mathbf u,
                      (r,\mathbf p,\mathbf u)\in\mathcal Q_{sa}\}.
\end{equation*}
We assume that $\mathcal M_{sa}$ is a nonempty bounded subset of
$\mathbb R\times\Delta(\mathcal S)$.  As a polyhedral projection,
$\mathcal M_{sa}$ is closed and therefore compact.  The auxiliary variables
$\mathbf u$ may be unbounded even for a fixed pair $(r,\mathbf p)$.
The local constraints may couple rewards and transitions. For example,
the reward may be an affine function of the transition probabilities,
so changes in the two need not be independent.

\paragraph{Policies and robust values.}
A stationary randomized policy $\pi=(\boldsymbol\pi_s)_{s\in\mathcal S}$ assigns an action
distribution $\boldsymbol\pi_s\in\Delta(\mathcal A_s)$ to each state $s$.
The component $[\boldsymbol\pi_s]_a$ is the probability of choosing action $a$ in state $s$.
A policy is deterministic if each $\boldsymbol\pi_s$ assigns probability one to a single action.
We denote this action at state $s$ by $\pi(s)$.
We assume state-action rectangularity, meaning that the global ambiguity
set is the product of the local sets $\mathcal M_{sa}$
\citep{Iyengar2005,NilimElGhaoui2005}.
We define the robust value of $\pi$ by
\begin{equation*}
  [\mathbf v^\pi]_s
  :=\min_{\substack{(r_{ia},\mathbf p_{ia})\in\mathcal M_{ia}\\
                   \forall i\in\mathcal S,\ a\in\mathcal A_i}}
       \mathbb E\left[
         \sum_{t=0}^{\infty}\gamma^t r_{s_ta_t}
         \,\middle|\,s_0=s
       \right].
\end{equation*}
The minimum is over models with fixed reward-transition pairs for all
state-action pairs. Under each model, $a_t$ is drawn from
$\boldsymbol\pi_{s_t}$, and $s_{t+1}$ is drawn from $\mathbf p_{s_ta_t}$.
We write
$[\mathbf v^\star]_s=\sup_{\pi}[\mathbf v^{\pi}]_s$
for the optimum over stationary randomized policies.
We define the robust Bellman operators by
\begin{equation}
  \mathscr T_{sa}(\mathbf v)
  :=\min_{(r,\mathbf p,\mathbf u)\in\mathcal Q_{sa}}
       \{r+\gamma\mathbf p^\top\mathbf v\},
  \qquad
  \mathscr T_s(\mathbf v)
  :=\max_{a\in\mathcal A_s}\mathscr T_{sa}(\mathbf v).
  \label{eq:bellman-operator}
\end{equation}
The minimum is attained because the objective depends only on
$(r,\mathbf p)$ and $\mathcal M_{sa}$ is compact.
We define $\mathscr T$ by
$[\mathscr T(\mathbf v)]_s=\mathscr T_s(\mathbf v)$ for every $s\in\mathcal S$.
The robust value $\mathbf v^{\pi}$ is
the unique fixed point of the policy operator
\begin{equation}
  [\mathscr T_{\pi}(\mathbf v)]_s
  =\sum_{a\in\mathcal A_s}[\boldsymbol\pi_s]_a\mathscr T_{sa}(\mathbf v),
  \qquad
  \mathbf v^{\pi}
  =\mathscr T_{\pi}(\mathbf v^{\pi}).
  \label{eq:policy-operator}
\end{equation}
The robust optimal value $\mathbf v^\star$ is the unique solution of
$\mathbf v^\star=\mathscr T(\mathbf v^\star)$
\citep{Iyengar2005}.
The set of policies optimal from every state is
\begin{equation}
  \Pi^\star=\prod_{s\in\mathcal S}\Delta(\mathcal A_s^\star),
  \qquad
  \mathcal A_s^\star
  =\{a\in\mathcal A_s:\mathscr T_{sa}(\mathbf v^\star)=[\mathbf v^\star]_s\}.
  \label{eq:optimal-actions}
\end{equation}

\paragraph{Input size and complexity.}
Using the variable counts $n_{sa}$ and constraint counts $m_{sa}$ in~\eqref{eq:local-ambiguity}, we define the \emph{input size} as
\begin{equation*}
  N:=S+A+\sum_{s,a}(n_{sa}+m_{sa}).
\end{equation*}
All input coefficients are rational and encoded by an integer numerator
and a positive integer denominator, both written in binary.
We write $L$ for the total input bit length, including the $L_\gamma$ bits
used to encode the discount factor $\gamma$.
We use the following complexity conventions.
\begin{itemize}
  \item \emph{Polynomial time.} The number of bit operations is bounded
  by a polynomial in $N$ and $L$.
  \item \emph{Strong polynomiality.} The number of rational arithmetic
  operations and comparisons is bounded by a polynomial in $N$ alone.
  Every intermediate rational number has bit length polynomial in $L$.
\end{itemize}

\subsection{Examples}
\label{subsec:model-examples}

The following uncertainty models and turn-based stochastic games
fall within the framework of Section~\ref{subsec:general-sa-rectangular}.
For simplicity, we present the uncertainty-set examples with fixed rewards
$r_{sa}$ and focus on transition uncertainty, so
$\mathcal M_{sa}=\{r_{sa}\}\times\mathcal U_{sa}$.
Analogous polyhedral sets for joint reward-transition uncertainty
also fit the representation in~\eqref{eq:local-ambiguity}.

\begin{example}[Polyhedral uncertainty, \citealp{WhiteEldeib1994}]
\label{ex:halfspace-rmdp}
For a rational matrix $\mathbf B_{sa}$ and vector
$\mathbf c_{sa}$, the polyhedral transition uncertainty set is
\begin{equation*}
  \mathcal U_{sa}
  =\{\mathbf p\in\Delta(\mathcal S):
       \mathbf B_{sa}\mathbf p\le\mathbf c_{sa}\}.
\end{equation*}
It is obtained from~\eqref{eq:local-ambiguity} by taking
$\mathcal Q_{sa}=\{(r_{sa},\mathbf p):\mathbf p\in\mathcal U_{sa}\}$,
with no auxiliary variables.
\end{example}

\begin{example}[$\ell_1$ uncertainty, \citealp{Iyengar2005}]
\label{ex:norm-ambiguity}
\label{ex:tv-ambiguity}
An $\ell_1$ uncertainty set consists of transition distributions
within a prescribed distance of a nominal distribution
$\overline{\mathbf p}_{sa}\in\Delta(\mathcal S)$ and is given by
\begin{equation}
  \mathcal U_{sa}
  =\{\mathbf p\in\Delta(\mathcal S):
       \|\mathbf p-\overline{\mathbf p}_{sa}\|_1\le2\sigma_{sa}\}.
  \label{eq:norm-ambiguity}
\end{equation}
Here $\sigma_{sa}\ge0$ is the total-variation radius, so the $\ell_1$
radius is $2\sigma_{sa}$.
The corresponding reward-transition set has the extended linear formulation
\begin{equation}
  \mathcal Q_{sa}
  =\left\{(r_{sa},\mathbf p,\mathbf u):
    \mathbf p\in\Delta(\mathcal S),\quad \mathbf u\ge\mathbf0,\quad
    {-}\mathbf u\le\mathbf p-\overline{\mathbf p}_{sa}\le\mathbf u,\quad
    \mathbf1^\top\mathbf u\le2\sigma_{sa}\right\}.
  \label{eq:tv-lift}
\end{equation}
\end{example}

\begin{example}[Weighted $\ell_1$ uncertainty, \citealp{HoPetrikWiesemann2018,HoPetrikWiesemann2021}]
\label{ex:weighted-l1-ambiguity}
\label{ex:budget-models}
For a nominal distribution
$\overline{\mathbf p}_{sa}\in\Delta(\mathcal S)$, weights $\mathbf w_{sa}\ge\mathbf0$,
and budget $\rho_{sa}\ge0$,
the weighted $\ell_1$ set is
\begin{equation}
  \mathcal U_{sa}
  =\left\{\mathbf p\in\Delta(\mathcal S):
       \sum_{i\in\mathcal S}[\mathbf w_{sa}]_i|[\mathbf p]_i-[\overline{\mathbf p}_{sa}]_i|
       \le\rho_{sa}\right\}.
  \label{eq:weighted-l1-set}
\end{equation}
We obtain $\mathcal Q_{sa}$ by
replacing the last constraint in~\eqref{eq:tv-lift} with
$\mathbf w_{sa}^\top\mathbf u\le\rho_{sa}$.
\end{example}

\begin{example}[Interval and weighted $\ell_\infty$ uncertainty, \citealp{GivanLeachDean1997,GivanLeachDean2000}]
\label{ex:interval-ambiguity}
Interval uncertainty places lower and upper bounds
$\mathbf l_{sa}$ and $\mathbf h_{sa}$ on the transition probabilities, giving
\begin{equation}
  \mathcal U_{sa}
  =\{\mathbf p\in\Delta(\mathcal S):
       \mathbf l_{sa}\le\mathbf p\le\mathbf h_{sa}\}.
  \label{eq:interval-ambiguity}
\end{equation}
The bounds satisfy
$\mathbf0\le\mathbf l_{sa}\le\mathbf h_{sa}\le\mathbf1$ and
$\mathbf1^\top\mathbf l_{sa}\le1\le\mathbf1^\top\mathbf h_{sa}$.
In particular, the weighted $\ell_\infty$ ball
$\max_{i\in\mathcal S}[\mathbf w_{sa}]_i
|[\mathbf p]_i-[\overline{\mathbf p}_{sa}]_i|\le\delta_{sa}$, with nominal
distribution $\overline{\mathbf p}_{sa}\in\Delta(\mathcal S)$,
weights $\mathbf w_{sa}\ge\mathbf0$, and radius
$\delta_{sa}\ge0$, is obtained by setting
\begin{equation*}
  [\mathbf l_{sa}]_i=\max\left\{0,[\overline{\mathbf p}_{sa}]_i-
    \frac{\delta_{sa}}{[\mathbf w_{sa}]_i}\right\},
  \qquad [\mathbf h_{sa}]_i=\min\left\{1,[\overline{\mathbf p}_{sa}]_i+
    \frac{\delta_{sa}}{[\mathbf w_{sa}]_i}\right\},
  \quad [\mathbf w_{sa}]_i>0.
\end{equation*}
For $[\mathbf w_{sa}]_i=0$, we set
$[\mathbf l_{sa}]_i=0$ and $[\mathbf h_{sa}]_i=1$.
The choice $\mathbf w_{sa}=\mathbf1$ gives the $\ell_\infty$ ball.
\end{example}

\begin{example}[Wasserstein transition uncertainty, \citealp{RamaniGhate2022,NeufeldSester2024}]
\label{ex:wasserstein-ambiguity}
For a metric $d$ on $\mathcal S$ and $\ell\ge1$,
the $\ell$-Wasserstein distance between
$\overline{\mathbf p},\mathbf p\in\Delta(\mathcal S)$ \citep{PeyreCuturi2019}
and its uncertainty ball are defined by
\begin{equation*}
  \mathbb W_{\ell,d}(\overline{\mathbf p},\mathbf p)^\ell
  :=\min_{\boldsymbol\Pi\ge0,\;\boldsymbol\Pi\mathbf1=\overline{\mathbf p},\;
        \boldsymbol\Pi^\top\mathbf1=\mathbf p}
      \langle\mathbf C,\boldsymbol\Pi\rangle,
  \qquad
  \mathcal U_{sa}
  =\{\mathbf p\in\Delta(\mathcal S):
       \mathbb W_{\ell,d}(\overline{\mathbf p}_{sa},\mathbf p)^\ell\le\rho_{sa}\},
\end{equation*}
where $[\mathbf C]_{ij}=d(i,j)^\ell$ are rational transport costs,
$\overline{\mathbf p}_{sa}\in\Delta(\mathcal S)$ is the nominal distribution,
and $\rho_{sa}\ge0$ is the budget.
The corresponding reward-transition set has the extended linear formulation
\begin{equation}
  \mathcal Q_{sa}
  =\left\{(r_{sa},\mathbf p,\boldsymbol\Pi):
    \mathbf p\in\Delta(\mathcal S),\quad \boldsymbol\Pi\ge0,\quad
    \boldsymbol\Pi\mathbf1=\overline{\mathbf p}_{sa},\quad
    \boldsymbol\Pi^\top\mathbf1=\mathbf p,\quad
    \langle\mathbf C,\boldsymbol\Pi\rangle\le\rho_{sa}
  \right\}.
  \label{eq:finite-wasserstein-lift}
\end{equation} 
Here $[\boldsymbol\Pi]_{ij}$ is the mass transported from state $i$
to state $j$.
The entries of $\boldsymbol\Pi$ form
$\mathbf u$ in~\eqref{eq:local-ambiguity}.
\end{example}

\begin{example}[Turn-based stochastic games{, \citealp{Shapley1953}}]
\label{ex:turn-based-games}
The state space is partitioned into maximizing and
minimizing states, $\mathcal S=\mathcal S_{\max}\cup\mathcal S_{\min}$.
At state $s$, the acting player chooses an action
$a$ from a finite set $\mathcal A_s$, with rational reward-transition
pair $(r_{sa},\mathbf p_{sa})$.
The players respectively maximize and minimize the
expected total discounted reward, with rational discount factor $\gamma\in[0,1)$.

This game fits the polyhedral RMDP framework of
Section~\ref{subsec:general-sa-rectangular}. At maximizing states,
the RMDP has the same actions, with singleton uncertainty set
$\{(r_{sa},\mathbf p_{sa})\}$ for each action $a$. At each minimizing state $s$, it has
a single action with extended uncertainty set
\begin{equation*}
  \mathcal Q_s
  =\left\{(r,\mathbf p,\mathbf u):
     \mathbf u\in\Delta(\mathcal A_s),\quad
     (r,\mathbf p)=\sum_{a\in\mathcal A_s}
        [\mathbf u]_a(r_{sa},\mathbf p_{sa})\right\}.
\end{equation*}
Here $\mathbf u$ gives the action probabilities.
By linearity, minimizing the Bellman objective over
$\mathcal Q_s$ is equivalent to minimizing over the original actions.
The RMDP has the same Bellman operator and value as the game.
\end{example}

\section{LP representation for RMDPs}
\label{sec:exact-lp}

We aim to construct a single rational LP whose optimal face represents the
robust optimal value and the full set of optimal stationary randomized
policies.  The usual Bellman inequalities do not immediately give such
an LP.  Since the minimum defining $\mathscr T_{sa}(\mathbf v)$ is attained,
\begin{equation}
\begin{aligned}
 [\mathbf v]_s\ge\mathscr T_s(\mathbf v)
 &\iff \forall a\in\mathcal A_s\ \exists(r,\mathbf p,\mathbf u)\in\mathcal Q_{sa}:
       [\mathbf v]_s\ge r+\gamma\mathbf p^\top\mathbf v,\\
 [\mathbf v]_s\le\mathscr T_s(\mathbf v)
 &\iff \exists a\in\mathcal A_s\ \forall(r,\mathbf p,\mathbf u)\in\mathcal Q_{sa}:
       [\mathbf v]_s\le r+\gamma\mathbf p^\top\mathbf v.
\end{aligned}
 \label{eq:bellman-quantifiers}
\end{equation}
In the first line, we must choose a reward-transition pair for each
action. Both $\mathbf p$ and $\mathbf v$ then become decision variables,
making $\mathbf p^\top\mathbf v$ bilinear.
In the second line, fixing an action $a$ leaves one linear inequality
in $\mathbf v$ for each fixed $(r,\mathbf p,\mathbf u)\in\mathcal Q_{sa}$.
LP duality replaces this family of inequalities with finitely many
linear constraints. At each state, however, the constraints for at
least one action must hold, so the feasible value vectors form a union
of polyhedra, one for each combination of actions across states.
This union need not be convex.

One possible construction is to enumerate all deterministic stationary
policies and combine their evaluation LPs into a single LP.
This requires one copy for each of the $\prod_s|\mathcal A_s|$
deterministic policies, so the number of copies can grow exponentially
with the number of states. The following theorem gives an LP
representation with polynomially many variables and constraints at
fixed discount. For general $\gamma$, we define
\begin{equation}
  T_\gamma:=\min\{k\in\mathbb Z_{\ge1}:\gamma^k<(1-\gamma)/2\}.
  \label{eq:discount-block}
\end{equation}
For fixed $\gamma$, $T_\gamma$ is constant, and more generally,
\begin{equation}
  T_\gamma=O\left(1+\frac{1}{1-\gamma}
                         \log\frac{2}{1-\gamma}\right).
  \label{eq:discount-horizon}
\end{equation}
For $0<\gamma<1$, this follows from
$T_\gamma\le1+\frac{\log(2/(1-\gamma))}{-\log\gamma}$ and
$-\log\gamma\ge1-\gamma$. At $\gamma=0$, $T_\gamma=1$.
\begin{theorem}[Exact LP representation of RMDPs]
\label{thm:exact-lp}
Under the standing input model, one can construct a continuous rational LP
with value variables $\mathbf v\in\mathbb R^S$, policy variables
$\pi\in\prod_s\Delta(\mathcal A_s)$, and auxiliary
variables.  Its feasible region is nonempty and bounded, and it has a
single linear objective.  Its optimal face $\mathcal F$ satisfies
\begin{equation}
  \operatorname{proj}_{(\mathbf v,\pi)}\mathcal F
  =\{\mathbf v^\star\}\times\Pi^\star.
  \label{eq:lp-pi-projection}
\end{equation}
The LP has $O(ANT_\gamma)$ variables and constraints
and total encoding length polynomial in $N,L,T_\gamma$.
It can be constructed using polynomially many arithmetic operations and
comparisons in $N,T_\gamma$, with all intermediate numbers having bit
length polynomial in $N,L,T_\gamma$.
The LP is given explicitly in~\eqref{eq:lp-pi-objective}.
\end{theorem}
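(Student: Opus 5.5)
The plan is to avoid the nonconvex Bellman inequalities~\eqref{eq:bellman-quantifiers} entirely. Instead I would write an LP in two kinds of variables. On the \emph{value side}, for a fixed action-distribution, the inner minimum in $\mathscr T_{sa}$ is replaced by its LP dual. On the \emph{occupancy side}, randomization is handled by perspective (homogenized) variables.

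\textbf{Step 1 (linearity of both sides).} For a single stage with a \emph{given} vector $\mathbf w$, LP duality on~\eqref{eq:local-ambiguity} gives
\[
\mathscr T_{sa}(\mathbf w)=\max\{\mathbf b_{sa}^\top\mathbf y+\mathbf f_{sa}^\top\mathbf z:\ \mathbf D_{sa}^\top\mathbf y+\mathbf E_{sa}^\top\mathbf z=(1,\gamma\mathbf w,\mathbf 0),\ \mathbf y\le\mathbf0\}.
\]
This expression is jointly linear in $(\mathbf w,\mathbf y,\mathbf z)$. The mixture $\sum_a[\boldsymbol\pi_s]_a\mathscr T_{sa}$ would, however, multiply $\boldsymbol\pi_s$ against these dual variables. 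To avoid that product, I would use scaled variables $\mathbf q'_{sa}=[\boldsymbol\pi_s]_a\mathbf q_{sa}$ satisfying $\mathbf D_{sa}\mathbf q'_{sa}\le[\boldsymbol\pi_s]_a\mathbf b_{sa}$ and $\mathbf E_{sa}\mathbf q'_{sa}=[\boldsymbol\pi_s]_a\mathbf f_{sa}$. This is the standard perspective of a polyhedron, and it is exact because $\mathcal M_{sa}$ is bounded, so $[\boldsymbol\pi_s]_a=0$ forces the $(r,\mathbf p)$-part to vanish. Against a value vector that is itself a variable, the term $\gamma\mathbf p'^\top\mathbf v$ is still bilinear. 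The remedy is to \emph{unroll}: the value on the right-hand side is taken from the previous layer of an encoded iteration, and that layer is written in dual (value) form while the current layer is written in primal (occupancy/perspective) form. Alternating primal and dual layers in this way keeps every product linear, as in the algorithm-to-LP constructions of \citet{AvisEtAl2019}.

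\textbf{Step 2 (what is unrolled, and why $T_\gamma$).} I would encode a block of $T_\gamma$ applications of the policy operator~\eqref{eq:policy-operator}, starting from a crude rational bound such as $\mathbf 0$ or $\pm R_{\max}/(1-\gamma)$. Because $\gamma^{T_\gamma}<(1-\gamma)/2$, one block contracts the error by a factor that is polynomially separated from~$1$. A polynomial number of blocks (the factor $A$ in $O(ANT_\gamma)$ suggests $O(A)$ blocks, one per action-elimination event) then pins the value down well enough to separate optimal from suboptimal actions. The intended elimination argument is of Ye's type: whenever $\mathscr T_{sa}(\mathbf v^\star)<[\mathbf v^\star]_s$, the gap is detected within one block. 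Each layer costs $O(N)$ variables and constraints, consistent with the claimed $O(ANT_\gamma)$ size. The construction is purely combinatorial in the input matrices, which gives strongly polynomial construction time and polynomial encoding length.

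\textbf{Step 3 (objective and optimal face).} The objective would be $\max\,\mathbf 1^\top\mathbf v$, together with constraints stating that $\mathbf v$ is bounded above by the unrolled robust evaluation of $\pi$. For every fixed $\pi$, the maximum over the remaining variables equals $\mathbf v^\pi$ up to the truncation, which the final layer closes using a fixed-point constraint $\mathbf v\le\mathscr T_\pi(\mathbf v)$ written in dual form. Since $\mathbf v^\pi\le\mathbf v^\star$ componentwise, with equality exactly for $\pi\in\Pi^\star$~\eqref{eq:optimal-actions}, the optimal face projects onto $\{\mathbf v^\star\}\times\Pi^\star$. Boundedness of the feasible region follows from boundedness of $\mathcal M_{sa}$ together with explicit box constraints on $\mathbf v$ and on the dual and auxiliary variables; these boxes must be added because $\mathbf u$ may be unbounded. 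The box radii can be computed with polynomial bit length via Cramer's rule.

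\textbf{Main obstacle.} The crux is Step~1 combined with Step~3: making $\pi$ enter linearly while $\mathbf v$ is also a variable. I have to verify that the alternating primal/dual unrolling does not reintroduce the disjunction over actions. The risk is that the LP optimum achieves a value above $\mathbf v^\star$ by using different $\pi$'s in different layers. My first attempt would be to share one $\pi$ across all layers. If that fails, I would instead impose the optimality certificate directly through complementary-slackness-free dominance inequalities: $\mathbf v\ge\mathscr T_{sa}(\mathbf v)$ for every action, written using the perspective variables of the unrolled occupancy measure rather than raw transitions.
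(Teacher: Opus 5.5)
There is a genuine gap, and it sits exactly at the point you flag as the main obstacle. Step~3 needs an LP in which, for every fixed randomized $\pi$, maximizing over the remaining variables returns $\mathbf v^\pi$. No polyhedron can do this. If $P$ is a polyhedron, then $\pi\mapsto\max\{\mathbf 1^\top\mathbf v:(\mathbf v,\pi,\cdot)\in P\}$ is concave, because convex combinations of feasible points are feasible. But $\pi\mapsto\mathbf v^\pi$ is not concave, even for a nominal MDP. Take $\gamma=1/2$ and an absorbing zero-reward state. Add a state $s_1$ whose action $a$ has reward $0$ and moves to the absorbing state, and whose action $b$ has reward $1$ and stays at $s_1$. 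Choosing $a$ with probability $x$ gives $[\mathbf v^\pi]_{s_1}=2(1-x)/(1+x)$, which is strictly convex in $x$. One could still hope for a relaxation whose optimal face happens to be correct, but your argument relies on the exact fiberwise property and you give no such relaxation.

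The primal/dual alternation does not remove the product. After the first layer, $\mathbf v^k$ depends on $\pi$, so the scaled dual constraint $\mathbf D_{sa}^\top\mathbf y'+\mathbf E_{sa}^\top\mathbf z'=([\boldsymbol\pi_s]_a,\gamma[\boldsymbol\pi_s]_a\mathbf v^k,\mathbf 0)$ still contains $[\boldsymbol\pi_s]_a\mathbf v^k$. Your closing constraint $\mathbf v\le\mathscr T_\pi(\mathbf v)$ with variable $\pi$ has the same product. Your fallback, $\mathbf v\ge\mathscr T_{sa}(\mathbf v)$ for all $a$, is the first line of \eqref{eq:bellman-quantifiers}, which is precisely the bilinear direction. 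Step~2 also conflates two mechanisms. Unrolling $\mathscr T_\pi$ for a shared $\pi$ only approximates $\mathbf v^\pi$ geometrically. Ye-type action elimination is a property of the sequence of greedy policies produced by policy iteration, and your LP does not encode that sequence.

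The missing idea is to never evaluate a randomized policy inside the LP. The paper encodes $AT_\gamma$ steps of robust policy iteration from a fixed deterministic $\pi^0$. A hierarchy of objectives forces every intermediate $\pi^k$ to be the deterministic greedy policy. This uses flags $\mathbf z^k$ built from the uniform gap $\varepsilon$ in \eqref{eq:lp-policy-gap}, together with a tie-breaking objective. Because $\pi^k$ is deterministic at the optimum, the linear big-$M$ constraint \eqref{eq:lp-pi-activation} encodes $\mathbf v^k\le\mathscr T_{\pi^k}(\mathbf v^k)$ exactly, with no product terms. Lemma~\ref{lem:outer-pi} then gives $\mathbf v^{AT_\gamma}=\mathbf v^\star$. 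The returned $\pi$ is decoupled from every value constraint: it appears only in \eqref{eq:lp-pi-terminal}, $0\le[\boldsymbol\pi_s]_a\le[\mathbf z^{AT_\gamma}]_{sa}$, so its projection is exactly $\Pi^\star$. The lexicographic order is collapsed into a single linear objective using a weight $W$ obtained from vertex-denominator bounds. Your proposal lacks all three ingredients: forcing the internal policies, $\varepsilon$-separation with scalarization, and decoupling the output policy. Without them it does not produce an LP with the claimed optimal face.
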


\begin{proof}[Proof Sketch]
\looseness=-1 The construction represents a finite sequence of policy evaluations
and the updates~\eqref{eq:controller-improvement} within one LP.
The policy values and action
choices are determined when the LP is solved, while every coefficient is
computed from the input.  Section~\ref{subsec:lp-pi-system}
establishes the required iteration bound and gives the LP construction.
Section~\ref{subsec:lp-pi-exactness} proves that the single weighted
objective recovers the robust optimal value and all optimal stationary
randomized policies, and establishes the bounds on size,
encoding length, and construction time.
\end{proof}

Theorem~\ref{thm:exact-lp} applies to the general uncertainty model
in~\eqref{eq:local-ambiguity}, allowing joint reward-transition
uncertainty specified by rational extended formulations.
The size bound uses the numbers of variables and constraints in these
descriptions.
To our knowledge, this is the first explicit exact LP representation for
this class of RMDPs with
polynomial dimension and a strongly polynomial
construction at fixed discount.
Its optimal solutions recover the robust optimal value and all optimal stationary
randomized policies without regularization or
approximation.  For this class of RMDPs, it answers the convex
formulation question raised by \citet{GrandClementPetrik2025}. In particular, the theorem covers the RMDPs in
Examples~\ref{ex:halfspace-rmdp}--\ref{ex:wasserstein-ambiguity} and the
turn-based stochastic games represented as RMDPs in
Example~\ref{ex:turn-based-games}.

The LP representation makes general-purpose LP
algorithms applicable to these RMDPs (and turn-based stochastic games). Advances in linear programming
can therefore yield new RMDP algorithms and improved complexity bounds.
At fixed discount, the
LP has $O(AN)$ variables and constraints and polynomial total encoding length,
so standard polynomial-time LP algorithms solve the RMDP exactly.
The LP can also be constructed using
polynomially many arithmetic operations and comparisons in $N$,
with all intermediate numbers having bit length polynomial in $N,L$
(Appendix~\ref{app:lp-encoding}).
Thus, for example, a strongly polynomial algorithm for this
family of LPs would yield a strongly polynomial
algorithm for these RMDPs.

\subsection{LP construction via robust policy iteration}
\label{subsec:controller-pi}
\label{subsec:lp-pi-system}

Algorithm~\ref{alg:exact-policy-iteration} gives robust policy
iteration \citep{Iyengar2005}, combining exact policy evaluation with
the update
\begin{equation}
  \pi^{k+1}(s)\in
  \operatorname*{arg\,max}_{a\in\mathcal A_s}
         \mathscr T_{sa}(\mathbf v^{\pi^k}),
  \qquad \forall s\in\mathcal S.
  \label{eq:controller-improvement}
\end{equation}
\begin{algorithm}[t]
\caption{Robust policy iteration}
\label{alg:exact-policy-iteration}
\begin{algorithmic}[1]
  \STATE Choose a deterministic initial policy $\pi^0$.
  \FOR{$k=0,1,\ldots$}
    \STATE Compute the policy value $\mathbf v^{\pi^k}$.
      \label{line:policy-evaluation}
    \STATE Compute $\mathscr T_{sa}(\mathbf v^{\pi^k})$ for every
      $s\in\mathcal S$ and $a\in\mathcal A_s$ by solving the minimization
      in~\eqref{eq:bellman-operator}.
      \label{line:policy-bellman-lps}
    \STATE \textbf{if} $\mathscr T(\mathbf v^{\pi^k})=\mathbf v^{\pi^k}$
      \textbf{then return} $\pi^k$ and $\mathbf v^{\pi^k}$.
      \label{line:policy-optimality-check}
    \STATE Choose $\pi^{k+1}$ according to~\eqref{eq:controller-improvement},
      using the values from Line~\ref{line:policy-bellman-lps}.
      \label{line:policy-improvement}
  \ENDFOR
\end{algorithmic}
\end{algorithm}

\begin{lemma}[Robust policy-iteration bound]
\label{lem:outer-pi}
Algorithm~\ref{alg:exact-policy-iteration} returns $\mathbf v^\star$
and an optimal deterministic policy after at most $(A-S)T_\gamma$ updates.
It performs at most $1+(A-S)T_\gamma$ policy evaluations and
$A[1+(A-S)T_\gamma]$ minimizations defining $\mathscr T_{sa}$ in~\eqref{eq:bellman-operator}.
\end{lemma}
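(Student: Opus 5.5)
We adapt Ye's strongly polynomial analysis of Howard's policy iteration \citep{Ye2011} to the robust setting. The argument has three parts: monotonicity and contraction of the iterates, elimination of one non-optimal action in every block of $T_\gamma$ updates, and counting.

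\textbf{Monotonicity and contraction.} First, $\mathscr T$ and $\mathscr T_\pi$ are monotone $\gamma$-contractions in $\|\cdot\|_\infty$. Each $\mathscr T_{sa}$ is a minimum of affine maps $r+\gamma\mathbf p^\top\mathbf v$ with $\mathbf p\in\Delta(\mathcal S)$. If the algorithm does not stop at iteration $k$, then $\mathscr T_{\pi^{k+1}}(\mathbf v^{\pi^k})=\mathscr T(\mathbf v^{\pi^k})\ge\mathbf v^{\pi^k}$. Iterating $\mathscr T_{\pi^{k+1}}$ gives $\mathbf v^{\pi^{k+1}}\ge\mathscr T(\mathbf v^{\pi^k})\ge\mathbf v^{\pi^k}$. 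Combined with $\mathbf v^{\pi^k}\le\mathbf v^\star$ and contraction of $\mathscr T$, this yields
\[
\|\mathbf v^\star-\mathbf v^{\pi^{k+1}}\|_\infty\le\gamma\|\mathbf v^\star-\mathbf v^{\pi^k}\|_\infty .
\]
Next, we set up a robust analogue of Ye's advantage argument. Define the nonpositive gaps $g_{sa}:=\mathscr T_{sa}(\mathbf v^\star)-[\mathbf v^\star]_s$. For any deterministic $\pi$, we need a lower bound of the form
\[
\|\mathbf v^\star-\mathbf v^\pi\|_\infty\ \ge\ \max_s(-g_{s\pi(s)}).
\]
To get it, fix the worst-case model $P^\star$ attaining $\mathscr T_{s\pi(s)}(\mathbf v^\star)$. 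Then $[\mathbf v^\pi]_s\le \mathscr T_{s\pi(s)}(\mathbf v^\pi)\le \mathscr T_{s\pi(s)}(\mathbf v^\star)$, so $[\mathbf v^\star-\mathbf v^\pi]_s\ge -g_{s\pi(s)}$. We also need an upper bound with the factor $1/(1-\gamma)$:
\[
\|\mathbf v^\star-\mathbf v^\pi\|_\infty\le\frac{1}{1-\gamma}\max_s(-g_{s\pi(s)}).
\]
This follows from $\mathbf v^\star-\mathbf v^\pi=\mathbf v^\star-\mathscr T_\pi(\mathbf v^\star)+\mathscr T_\pi(\mathbf v^\star)-\mathscr T_\pi(\mathbf v^\pi)$ together with $\gamma$-contraction of $\mathscr T_\pi$.

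\textbf{Elimination of actions.} Fix a non-terminal iterate $\pi^k$ and let $s$ be a state attaining $\max_s(-g_{s\pi^k(s)})=:\delta_k>0$. After $T_\gamma$ further updates, the contraction step gives
\[
\|\mathbf v^\star-\mathbf v^{\pi^{k+T_\gamma}}\|_\infty\le\gamma^{T_\gamma}\frac{\delta_k}{1-\gamma}<\frac{\delta_k}{2}.
\]
Here we use the definition of $T_\gamma$ in~\eqref{eq:discount-block}. The lower bound then forces $-g_{s\pi^j(s)}<\delta_k$ for all $j\ge k+T_\gamma$, since the error is nonincreasing. Hence action $\pi^k(s)$ never reappears at $s$ after iteration $k+T_\gamma$. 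The factor $2$ in $T_\gamma$ gives slack; only strictness is needed.

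The main obstacle is that an action attaining the maximum gap could already be optimal, and we must rule this out. This is fine: $\delta_k>0$ means $g_{s\pi^k(s)}<0$, so $\pi^k(s)\notin\mathcal A_s^\star$. We must also confirm that $\delta_k>0$ whenever $\pi^k$ is non-optimal. This holds because $\pi^k$ with all gaps zero satisfies $\mathscr T_{\pi^k}(\mathbf v^\star)=\mathbf v^\star$, hence $\mathbf v^{\pi^k}=\mathbf v^\star$, and the stopping test would fire at the next check.

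\textbf{Counting.} Each block of $T_\gamma$ updates permanently eliminates a distinct non-optimal state-action pair. At most $A-S$ pairs can ever be eliminated, because every state keeps at least one optimal action. Therefore the algorithm stops after at most $(A-S)T_\gamma$ updates, which gives at most $1+(A-S)T_\gamma$ evaluations. Each evaluation is followed by $A$ minimizations in Line~\ref{line:policy-bellman-lps}, for a total of $A[1+(A-S)T_\gamma]$ minimizations. At termination, $\mathbf v^{\pi^k}=\mathscr T(\mathbf v^{\pi^k})$, so by uniqueness of the fixed point $\mathbf v^{\pi^k}=\mathbf v^\star$, and $\pi^k$ is an optimal deterministic policy.
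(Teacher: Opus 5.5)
Your proposal is correct and takes essentially the same route as the paper: the same gap sandwich $\max_s(-g_{s\pi(s)})\le\|\mathbf v^\star-\mathbf v^\pi\|_\infty\le\max_s(-g_{s\pi(s)})/(1-\gamma)$ (your $g$ has the opposite sign to the paper's), elimination of the maximal-gap action after $T_\gamma$ contraction steps, and a count over the at most $A-S$ positive-gap actions. One detail is left implicit, as it is in the paper: the pairs eliminated in different blocks are distinct, because each is still in use at the start of its own block while every earlier one has already been eliminated.
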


\begin{proof}
We use the standard action-elimination argument of \citet{Ye2011},
applied to robust Bellman operators.
See also \citet{AsadiChatterjeeMontaseriShafiee2026} for a robust
policy-iteration bound of the same order.
We include the full proof for completeness and to make explicit the
iteration bound used in the LP construction.
We fix a deterministic policy $\pi$ and obtain $\pi'$ from $\pi$
by~\eqref{eq:controller-improvement}. Then
$\mathscr T_\pi\le\mathscr T$ and
$\mathscr T_{\pi'}(\mathbf v^\pi)=\mathscr T(\mathbf v^\pi)
\ge\mathbf v^\pi$.  Monotonicity and contraction give
\begin{equation}
  \mathbf v^\pi
  \le\mathscr T(\mathbf v^\pi)
  \le\mathbf v^{\pi'}\le\mathbf v^\star,
  \qquad
  \|\mathbf v^\star-\mathbf v^{\pi'}\|_\infty
  \le\gamma\|\mathbf v^\star-\mathbf v^\pi\|_\infty.
  \label{eq:outer-contraction}
\end{equation}
For each state-action pair, we define the optimality gap
$g_{sa}=[\mathbf v^\star]_s-\mathscr T_{sa}(\mathbf v^\star)\ge0$.
The policy fixed-point identity, monotonicity, and the $\gamma$-Lipschitz
property give
\[
  g_{s,\pi(s)}
  \le[\mathbf v^\star]_s-[\mathbf v^\pi]_s
  \le g_{s,\pi(s)}
       +\gamma\|\mathbf v^\star-\mathbf v^\pi\|_\infty.
\]
Taking maxima over states yields
\begin{equation}
  \max_s g_{s,\pi(s)}
  \le\|\mathbf v^\star-\mathbf v^\pi\|_\infty
  \le\frac{\max_s g_{s,\pi(s)}}{1-\gamma}.
  \label{eq:outer-gap-sandwich}
\end{equation}
At a suboptimal iteration $k$, we choose a state $s$ maximizing
$g_{s,\pi^k(s)}$ and set $a=\pi^k(s)$.  Then $g_{sa}>0$, and for
every $h\ge T_\gamma$, the definition~\eqref{eq:discount-block} gives
\[
  \|\mathbf v^\star-\mathbf v^{\pi^{k+h}}\|_\infty
  \le\gamma^h\|\mathbf v^\star-\mathbf v^{\pi^k}\|_\infty
  \le\frac{\gamma^{T_\gamma}}{1-\gamma}g_{sa}
  <\frac{g_{sa}}{2}.
\]
Thus $\pi^{k+h}(s)\ne a$, since using $a$ at state $s$ would imply
$\|\mathbf v^\star-\mathbf v^{\pi^{k+h}}\|_\infty\ge g_{sa}$.
Each block of $T_\gamma$ updates in~\eqref{eq:controller-improvement}
permanently eliminates a
distinct positive-gap action.  There are at most $A-S$ such actions,
since every state has a zero-gap action.  Once the policy uses only
zero-gap actions,~\eqref{eq:outer-gap-sandwich} gives
$\mathbf v^\pi=\mathbf v^\star$, proving the update bound and hence
the stated computation counts.
\end{proof}

We now construct a single LP that represents the robust policy-iteration
sequence of Algorithm~\ref{alg:exact-policy-iteration} through iteration
$AT_\gamma$, with the same fixed initial policy.
We continue the update~\eqref{eq:controller-improvement} after
optimality is reached. Since this update preserves optimality,
Lemma~\ref{lem:outer-pi} guarantees that the value at the final
iteration is $\mathbf v^\star$.

\paragraph{Step 1. Dual representation of $\mathscr T_{sa}$.}
We recall from~\eqref{eq:bellman-operator} that
$\mathscr T_{sa}(\mathbf v)$ is the minimum of
$r+\gamma\mathbf p^\top\mathbf v$ over
$\mathbf q=(r,\mathbf p,\mathbf u)\in\mathcal Q_{sa}$.
To use its dual representation in the LP, we introduce positive
integers $V,M$.
The bound $V$ applies to the values $\mathbf v$ and
$\mathscr T_{sa}(\mathbf v)$, and optimal dual multipliers can be chosen with every coordinate
bounded in absolute value by $M$. The integers $V$ and $M$ can be constructed directly from
the input and have polynomial bit length
(see Appendix~\ref{app:lp-dual-bounds}).
The vectors $\mathbf v^\pi$ and $\mathbf v^\star$ belong to
$[-V,V]^S$ for every policy $\pi$, and
\begin{equation}
  \mathscr T_{sa}(\mathbf v)\in[-V,V],
  \qquad \forall\mathbf v\in[-V,V]^S.
  \label{eq:lp-value-invariance}
\end{equation}
For $\mathbf v\in[-V,V]^S$, the primal problem $(\mathrm P)$ below
computes $\mathscr T_{sa}(\mathbf v)$. Its dual has the
same optimal value by strong duality. The choice of $M$ guarantees
an optimal dual solution within the displayed bounds, so imposing
these bounds in the dual problem $(\mathrm D)$ preserves that value.
\begin{equation}
 \begin{alignedat}[t]{3}
   &(\mathrm P)\quad &&{\min\nolimits_{\mathbf q}}
   &\quad&r+\gamma\mathbf p^\top\mathbf v\\
   &&&\text{\rm s.t.}
   &\quad&\left\{\begin{aligned}
     &\mathbf D_{sa}\mathbf q\le\mathbf b_{sa},\\
     &\mathbf E_{sa}\mathbf q=\mathbf f_{sa}.
   \end{aligned}\right.
 \end{alignedat}
 \qquad
 \begin{alignedat}[t]{3}
   &(\mathrm D)\quad
   &&{\max\nolimits_{\boldsymbol\mu_{sa},\boldsymbol\nu_{sa}}}
   &\quad&-\mathbf b_{sa}^{\top}\boldsymbol\mu_{sa}
           +\mathbf f_{sa}^{\top}\boldsymbol\nu_{sa}\\
   &&&\text{\rm s.t.}
   &\quad&\left\{\begin{aligned}
     &{-}\mathbf D_{sa}^{\top}\boldsymbol\mu_{sa}
          +\mathbf E_{sa}^{\top}\boldsymbol\nu_{sa}
       =(1,\gamma\mathbf v,\mathbf0),\\
     &\mathbf0\le\boldsymbol\mu_{sa}\le M\mathbf1,
       \qquad -M\mathbf1\le\boldsymbol\nu_{sa}\le M\mathbf1.
   \end{aligned}\right.
 \end{alignedat}
  \label{eq:lp-uniform-dual}
\end{equation}
\paragraph{Step 2. Representing the policy Bellman equation.}
For each iteration $k=0,\ldots,AT_\gamma$, we combine the dual
constraints in $(\mathrm D)$ of~\eqref{eq:lp-uniform-dual}, using a
shared variable $\mathbf v^k$ and dual variables
$\boldsymbol\mu^k_{sa}$ and $\boldsymbol\nu^k_{sa}$ for each $(s,a)$.
For a fixed deterministic policy
$\pi^k\in\prod_{s\in\mathcal S}\Delta(\mathcal A_s)$, we solve
$\mathbf v^k=\mathscr T_{\pi^k}(\mathbf v^k)$ and compute
$[\mathbf t^k]_{sa}=\mathscr T_{sa}(\mathbf v^k)$ through the following
lexicographic LP.
\begin{subequations}
\label{eq:lp-pi-block}
\newcommand{\policyblocklayout}[1]{%
  \ensuremath{\begin{aligned}[b]
    &\LPopt{lex\,max}{\mathbf v^k,\mathbf t^k,\boldsymbol\mu^k_{sa},\boldsymbol\nu^k_{sa}}\quad
      \left(\sum_s[\mathbf v^k]_s,\ \sum_{s,a}[\mathbf t^k]_{sa}\right)\\
    &#1
  \end{aligned}}%
}
\begin{empheq}[
  box=\policyblocklayout,
  left={\makebox[\widthof{$\operatorname{lex\,max}$}][l]{$\mathrm{s.t.}$}\quad\empheqlbrace}
]{align}
 &{-}\mathbf D_{sa}^{\top}\boldsymbol\mu^k_{sa}
   +\mathbf E_{sa}^{\top}\boldsymbol\nu^k_{sa}
   =(1,\gamma\mathbf v^k,\mathbf0),
   \label{eq:lp-pi-dual}\\
 &[\mathbf t^k]_{sa}\le-\mathbf b_{sa}^{\top}\boldsymbol\mu^k_{sa}
   +\mathbf f_{sa}^{\top}\boldsymbol\nu^k_{sa},
   \label{eq:lp-pi-action-bound}\\
 &[\mathbf v^k]_s\le[\mathbf t^k]_{sa}
   +2V(1-[\boldsymbol\pi_s^k]_a),
   \label{eq:lp-pi-activation}\\
 &{-}V\le[\mathbf v^k]_s,\ [\mathbf t^k]_{sa}\le V,
   \label{eq:lp-pi-value-bounds}\\
 &\mathbf0\le\boldsymbol\mu^k_{sa}\le M\mathbf1,
   \qquad -M\mathbf1\le\boldsymbol\nu^k_{sa}\le M\mathbf1.
   \label{eq:lp-pi-multiplier-bounds}
\end{empheq}
\end{subequations}
The constraints apply to every $s\in\mathcal S$ and $a\in\mathcal A_s$.
Here $\operatorname{lex\,max}$ means that we first maximize
$\sum_s[\mathbf v^k]_s$ and then, among the resulting maximizers,
maximize $\sum_{s,a}[\mathbf t^k]_{sa}$.
Constraints~\eqref{eq:lp-pi-dual} and~\eqref{eq:lp-pi-multiplier-bounds}
ensure that $(\boldsymbol\mu^k_{sa},\boldsymbol\nu^k_{sa})$ is feasible for
$(\mathrm D)$ in~\eqref{eq:lp-uniform-dual} at $\mathbf v^k$.
Together with~\eqref{eq:lp-pi-action-bound}, weak duality gives
$[\mathbf t^k]_{sa}
 \le-\mathbf b_{sa}^{\top}\boldsymbol\mu^k_{sa}
     +\mathbf f_{sa}^{\top}\boldsymbol\nu^k_{sa}
 \le\mathscr T_{sa}(\mathbf v^k)$.
For the selected action, $[\boldsymbol\pi_s^k]_a=1$, so~\eqref{eq:lp-pi-activation} gives
$[\mathbf v^k]_s\le[\mathbf t^k]_{sa}$. We obtain
$\mathbf v^k\le\mathscr T_{\pi^k}(\mathbf v^k)
 \le\mathscr T_{\pi^k}^{2}(\mathbf v^k)\le\cdots
 \longrightarrow\mathbf v^{\pi^k}$,
by monotonicity, $\gamma$-contraction, and the fixed-point
identity~\eqref{eq:policy-operator}.

To show that this upper bound is attainable, we set
$\mathbf v^k=\mathbf v^{\pi^k}\in[-V,V]^S$ and choose
an optimal solution $(\boldsymbol\mu^k_{sa},\boldsymbol\nu^k_{sa})$
of $(\mathrm D)$ in~\eqref{eq:lp-uniform-dual} for every $(s,a)$.
For these optimal solutions, strong duality in~\eqref{eq:lp-uniform-dual} gives
$-\mathbf b_{sa}^{\top}\boldsymbol\mu^k_{sa}
 +\mathbf f_{sa}^{\top}\boldsymbol\nu^k_{sa}
 =\mathscr T_{sa}(\mathbf v^{\pi^k})$, so setting
$[\mathbf t^k]_{sa}=\mathscr T_{sa}(\mathbf v^{\pi^k})$
makes~\eqref{eq:lp-pi-action-bound} hold at equality.
The bounds~\eqref{eq:lp-pi-value-bounds} follow from~\eqref{eq:lp-value-invariance}, and the policy Bellman equation~\eqref{eq:policy-operator} makes~\eqref{eq:lp-pi-activation} hold at
equality for selected actions.
For every unselected action, $[\boldsymbol\pi_s^k]_a=0$ and~\eqref{eq:lp-pi-value-bounds} gives
$[\mathbf v^k]_s-[\mathbf t^k]_{sa}\le2V$, so~\eqref{eq:lp-pi-activation} holds automatically.
Every feasible $\mathbf v^k$ satisfies
$\mathbf v^k\le\mathbf v^{\pi^k}$, and any strict inequality
in a coordinate gives
$\sum_s[\mathbf v^k]_s<\sum_s[\mathbf v^{\pi^k}]_s$.
Since $\mathbf v^k=\mathbf v^{\pi^k}$ is feasible,
maximizing $\sum_s[\mathbf v^k]_s$ forces this equality.
With $\mathbf v^k$ fixed, optimal dual multipliers in~\eqref{eq:lp-uniform-dual} simultaneously attain all bounds
$[\mathbf t^k]_{sa}\le\mathscr T_{sa}(\mathbf v^k)$.
The same argument for maximizing $\sum_{s,a}[\mathbf t^k]_{sa}$ forces
$[\mathbf t^k]_{sa}=\mathscr T_{sa}(\mathbf v^k)$ for every $(s,a)$.

\paragraph{Step 3. Identifying maximizing actions.}
Step~2 fixes $[\mathbf t^k]_{sa}
=\mathscr T_{sa}(\mathbf v^{\pi^k})$ for every $(s,a)$.
The policy update~\eqref{eq:controller-improvement} therefore selects
an action attaining $\max_{a\in\mathcal A_s}[\mathbf t^k]_{sa}$ at each
state. To identify all such actions, we introduce a positive rational
number $\varepsilon$ as a
uniform lower bound on positive gaps, so that every deterministic
policy $\pi$ satisfies
\begin{equation}
 \max_{a'\in\mathcal A_s}
       \mathscr T_{sa'}(\mathbf v^\pi)
       -\mathscr T_{sa}(\mathbf v^\pi)
       \ \in\ \{0\}\cup[\varepsilon,\infty),
 \qquad \forall s\in\mathcal S,\ a\in\mathcal A_s.
 \label{eq:lp-policy-gap}
\end{equation}
The constant $\varepsilon$ can be constructed directly from the input
and has polynomial bit length (see Appendix~\ref{app:lp-action-separation}).
With $\mathbf t^k$ fixed, we introduce $\mathbf w^k\in\mathbb R^S$
so that $[\mathbf w^k]_s$ records
$\max_{a\in\mathcal A_s}[\mathbf t^k]_{sa}$ at each state, and
$\mathbf z^k\in\mathbb R^A$ so that $[\mathbf z^k]_{sa}$ equals one
if $[\mathbf t^k]_{sa}=[\mathbf w^k]_s$ and zero otherwise.
The following lexicographic LP gives these values.
\begin{equation}
 \begin{alignedat}{2}
   &\LPopt{lex\,min}{\mathbf w^k,\mathbf z^k}
   &\quad&\left(\sum_s[\mathbf w^k]_s,\ \sum_{s,a}[\mathbf z^k]_{sa}\right)\\
   &\text{\rm s.t.}
   &\quad&\left\{\begin{aligned}
     &[\mathbf w^k]_s\ge[\mathbf t^k]_{sa},
       \qquad -V\le[\mathbf w^k]_s\le V,\\
     &[\mathbf z^k]_{sa}\ge
       1-\frac{[\mathbf w^k]_s-[\mathbf t^k]_{sa}}{\varepsilon},
       \qquad 0\le[\mathbf z^k]_{sa}\le1.
   \end{aligned}\right.
 \end{alignedat}
 \label{eq:lp-pi-flag}
\end{equation}
The constraints apply to every $s\in\mathcal S$ and $a\in\mathcal A_s$.
Every feasible $\mathbf w^k$ satisfies
$[\mathbf w^k]_s\ge\max_{a\in\mathcal A_s}[\mathbf t^k]_{sa}$.
These lower bounds are attained simultaneously with $\mathbf z^k=\mathbf1$,
so minimizing $\sum_s[\mathbf w^k]_s$ forces equality at every state.
For a maximizing action, the gap
$[\mathbf w^k]_s-[\mathbf t^k]_{sa}$ is zero, so~\eqref{eq:lp-pi-flag} forces $[\mathbf z^k]_{sa}=1$.
For every other action,~\eqref{eq:lp-policy-gap} makes the gap at least
$\varepsilon$ because $\pi^k$ is deterministic.
Hence $1-([\mathbf w^k]_s-[\mathbf t^k]_{sa})/\varepsilon\le0$,
so $[\mathbf z^k]_{sa}=0$ is feasible simultaneously for all such actions.
With $\mathbf w^k$ fixed, minimizing $\sum_{s,a}[\mathbf z^k]_{sa}$ gives
$[\mathbf z^k]_{sa}=1$ for maximizing actions and
$[\mathbf z^k]_{sa}=0$ for all other actions.

\paragraph{Step 4. Policy improvement and the final policy.}
For $k<AT_\gamma$, we use $\mathbf z^k$ from Step~3 to select a
deterministic policy $\pi^{k+1}$ satisfying the update~\eqref{eq:controller-improvement}, so that Step~2 applies at the next
iteration. To resolve ties, we let
$\omega(s,a)\in\{1,\ldots,|\mathcal A_s|\}$ denote the position of
action $a$ in the input ordering at state $s$.
With $\mathbf z^k$ fixed, we solve the LP
\begin{equation}
 \begin{alignedat}{2}
   &\LPopt{min}{\pi^{k+1}}
   &\quad&\sum_{s,a}\omega(s,a)[\boldsymbol\pi_s^{k+1}]_a\\
   &\text{\rm s.t.}
   &\quad&\sum_{a\in\mathcal A_s}[\boldsymbol\pi_s^{k+1}]_a=1,
     \qquad 0\le[\boldsymbol\pi_s^{k+1}]_a\le[\mathbf z^k]_{sa},
     \qquad \forall s\in\mathcal S,\ a\in\mathcal A_s.
 \end{alignedat}
 \label{eq:lp-pi-next}
\end{equation}
The constraints allow any probability distribution over the maximizing
actions at each state. Since the indices $\omega(s,a)$ are distinct
at each state, the unique optimum places probability one on
the maximizing action with the smallest $\omega(s,a)$.
At iteration $AT_\gamma$, we introduce the final policy $\pi$ with constraints
\begin{equation}
 \sum_{a\in\mathcal A_s}[\boldsymbol\pi_s]_a=1,
 \qquad 0\le[\boldsymbol\pi_s]_a\le[\mathbf z^{AT_\gamma}]_{sa},
 \qquad \forall s\in\mathcal S,\ a\in\mathcal A_s.
 \label{eq:lp-pi-terminal}
\end{equation}
The final policy $\pi$ appears in none of the objectives, so
every distribution over the maximizing actions at the final iteration
is permitted. Since $\mathbf v^{AT_\gamma}=\mathbf v^\star$ by
Lemma~\ref{lem:outer-pi}, these actions are exactly
$\mathcal A_s^\star$ from~\eqref{eq:optimal-actions}.
Thus~\eqref{eq:lp-pi-terminal} permits exactly the policies in
$\Pi^\star=\prod_s\Delta(\mathcal A_s^\star)$.

\paragraph{Step 5. The complete LP.}
We write the full variable vector as
\begin{equation*}
 \mathbf x=\Bigl(
   \{\mathbf v^k,\mathbf t^k,\mathbf w^k,\mathbf z^k\}_{k=0}^{AT_\gamma},
   \{\boldsymbol\mu^k_{sa},\boldsymbol\nu^k_{sa}\}_%
     {\substack{k=0,\ldots,AT_\gamma\\s\in\mathcal S,\ a\in\mathcal A_s}},
   \{\pi^k\}_{k=1}^{AT_\gamma},\pi
 \Bigr).
\end{equation*}
For each $k=0,\ldots,AT_\gamma-1$, we write the objectives in~\eqref{eq:lp-pi-block},~\eqref{eq:lp-pi-flag}, and~\eqref{eq:lp-pi-next}
as maximizations in the order in which they are optimized,
\begin{equation}
 \bigl(f_{5k+1}(\mathbf x),\ldots,f_{5k+5}(\mathbf x)\bigr)
 =\left(\sum_s[\mathbf v^k]_s,\ \sum_{s,a}[\mathbf t^k]_{sa},\
        -\sum_s[\mathbf w^k]_s,\ -\sum_{s,a}[\mathbf z^k]_{sa},\
        -\sum_{s,a}\omega(s,a)[\boldsymbol\pi_s^{k+1}]_a\right).
 \label{eq:lp-pi-priorities}
\end{equation}
For $k=AT_\gamma$, we use only the first four definitions.
We maximize $f_1,\ldots,f_{5AT_\gamma+4}$ in this order, each over the
maximizers of all preceding objectives. This completes Steps~2--4
for each iteration before proceeding to the next.
We compute a positive integer $W$ directly from the input
(see Appendix~\ref{app:lp-scalarization}) so that the weighted objective
below has exactly these lexicographic maximizers. With the initial policy
$\pi^0$ fixed as in Algorithm~\ref{alg:exact-policy-iteration},
the complete LP is
\begin{equation}
  \begin{alignedat}{2}
    &\LPopt{max}{\mathbf x}
    &\quad&\sum_{j=1}^{5AT_\gamma+4}W^{5AT_\gamma+4-j}f_j(\mathbf x)\\
    &\text{\rm s.t.}
    &\quad&\left\{\begin{aligned}
      &\text{constraints in~\eqref{eq:lp-pi-block},~\eqref{eq:lp-pi-flag}},
        &&\forall k=0,\ldots,AT_\gamma,\\
      &\text{constraints in~\eqref{eq:lp-pi-next}},
        &&\forall k=0,\ldots,AT_\gamma-1,\\
      &\text{constraints~\eqref{eq:lp-pi-terminal}}.
    \end{aligned}\right.
  \end{alignedat}
  \label{eq:lp-pi-objective}
\end{equation}
We denote the feasible region of~\eqref{eq:lp-pi-objective} by
$\mathcal P$ and its optimal face by $\mathcal F$.
We prove below that on $\mathcal F$, $\mathbf v^{AT_\gamma}=\mathbf v^\star$
and $\pi$ ranges over exactly $\Pi^\star$, giving~\eqref{eq:lp-pi-projection} with $\mathbf v=\mathbf v^{AT_\gamma}$.

\subsection{Proof of Theorem~\ref{thm:exact-lp}}
\label{subsec:lp-pi-exactness}

\begin{proof}[Proof of Theorem~\ref{thm:exact-lp}]
Steps~2--4 in Section~\ref{subsec:lp-pi-system},
applied along the complete robust policy-iteration
sequence with the tie rule in Step~4, give a feasible
point of~\eqref{eq:lp-pi-objective}.
The feasible point constructed above and the explicit bounds show that
$\mathcal P$ is a nonempty bounded rational polytope.
We apply Steps~2--4 inductively in the objective order.
At each stage, this point attains the bound on the current objective
together with all preceding optimal objective values.
Every maximizer at that stage therefore satisfies the corresponding
equality conditions.
Lemma~\ref{lem:outer-pi} and Step~3 then imply that every lexicographic
maximizer satisfies $\mathbf v^{AT_\gamma}=\mathbf v^\star$ and
$[\mathbf z^{AT_\gamma}]_{sa}=\mathbf1\{a\in\mathcal A_s^\star\}$.

For a linear objective over $\mathcal P$, the optimal solutions are
exactly the convex combinations of its maximizing vertices.
This also holds for lexicographic maximization by applying the same
fact successively. Hence it suffices to show that the weighted
objective in~\eqref{eq:lp-pi-objective} and lexicographic maximization
of~\eqref{eq:lp-pi-priorities} have the same maximizing vertices.
For two vertices $\mathbf x,\mathbf y$ whose objective values differ,
we let $j$ be the first index with $f_j(\mathbf x)\ne f_j(\mathbf y)$.
The sign of this difference determines their lexicographic order.
We choose $W$ so that the remaining objectives cannot reverse this
comparison for any pair of vertices. We use a positive integer
$D_{\mathrm{LP}}$, computable from the input, such that the coordinates
of each vertex share a denominator at most $D_{\mathrm{LP}}$
(see Appendix~\ref{app:lp-scalarization}). Since each $f_i$ has
integer coefficients, the nonzero difference
$f_j(\mathbf x)-f_j(\mathbf y)$ has denominator at most
$D_{\mathrm{LP}}^2$, giving
$|f_j(\mathbf x)-f_j(\mathbf y)|\ge D_{\mathrm{LP}}^{-2}$.
The objective definitions~\eqref{eq:lp-pi-priorities}, variable bounds,
and simplex constraints give
$|f_i(\mathbf x)|,|f_i(\mathbf y)|\le AV$
(see~\eqref{eq:lp-priority-range}), so
$|f_i(\mathbf x)-f_i(\mathbf y)|\le2AV$ for every $i$.
With these bounds, we take $W=2AVD_{\mathrm{LP}}^2+2$.
Dividing the weighted objective difference by the positive weight
$W^{5AT_\gamma+4-j}$ leaves
$f_j(\mathbf x)-f_j(\mathbf y)$ as the first nonzero term.
The remaining terms satisfy
\[
 \left|\sum_{i=j+1}^{5AT_\gamma+4}
   W^{j-i}\bigl(f_i(\mathbf x)-f_i(\mathbf y)\bigr)\right|
 \le\frac{2AV}{W-1}<D_{\mathrm{LP}}^{-2}
 \le|f_j(\mathbf x)-f_j(\mathbf y)|.
\]
The bound shows that the remaining terms cannot change the comparison
at the first differing objective. If two vertices have the same value
for every objective, their weighted objective values are also equal.
Thus the two formulations have the same maximizing vertices and hence
the same optimal solution set.
At every optimal solution of~\eqref{eq:lp-pi-objective},
$\mathbf v^{AT_\gamma}=\mathbf v^\star$, and~\eqref{eq:lp-pi-terminal}
permits exactly $\pi\in\Pi^\star$.
Since $\pi$ appears in no other constraint or objective,
every such policy occurs, proving
the projection identity~\eqref{eq:lp-pi-projection}.

Finally, counting the variables and constraints
in~\eqref{eq:lp-pi-objective} over its $AT_\gamma+1$ iterations gives
\begin{equation}
  \#\text{variables}\le 3(AT_\gamma+1)N,\qquad
  \#\text{inequalities}\le 10(AT_\gamma+1)N,\qquad
  \#\text{equalities}\le (AT_\gamma+1)N.
  \label{eq:lp-pi-counts}
\end{equation}
Counting each equality as two inequalities gives $O(ANT_\gamma)$
variables and constraints. Appendices~\ref{app:lp-scalarization}
and~\ref{app:lp-encoding} bound the total LP encoding length and
the bit lengths of all intermediate rationals by polynomials in $N,L,T_\gamma$.
Appendix~\ref{app:lp-encoding} also shows that construction uses
polynomially many arithmetic operations and comparisons in $N,T_\gamma$.
Together with the bit-length bounds, this gives construction time
polynomial in $N,L,T_\gamma$.
This proves Theorem~\ref{thm:exact-lp}.
\end{proof}

\section{A general complexity bound for robust policy iteration}
\label{sec:strong-polynomial}
\label{subsec:residual-criterion}

Theorem~\ref{thm:exact-lp} gives an LP representation of polynomial
dimension at fixed discount. We next ask when the RMDP can be solved
in strongly polynomial time. Without restrictions on the uncertainty
descriptions, a strongly polynomial algorithm for computing the robust
value, even in a one-state model, would also solve general LPs in
strongly polynomial time.

\begin{proposition}[Reduction from general linear programming]
\label{prop:general-lp-hardness}
Suppose a strongly polynomial algorithm computes the robust optimal
value for every RMDP in the model of Section~\ref{sec:model} with one
state, one action, and \(\gamma=\tfrac12\). Then general rational LPs
can be solved in strongly polynomial time.
\end{proposition}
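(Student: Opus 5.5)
With one state and one action, the robust value is the fixed point of $v=\mathscr T_{11}(v)=\min_{(r,p,\mathbf u)\in\mathcal Q_{11}}\{r+\gamma p v\}$. Since $\mathcal M_{11}\subseteq\mathbb R\times\Delta(\{1\})$, we have $p=1$, so $v=\min r+\tfrac12 v$. Hence
\[
v^\star=2\min\{r:(r,1,\mathbf u)\in\mathcal Q_{11}\}.
\]
The plan is to encode an arbitrary rational LP $\min\{\mathbf c^\top\mathbf x:\mathbf G\mathbf x\le\mathbf h\}$ as this minimization. We take $\mathbf u=\mathbf x$ and add the constraints $p=1$ and $r=\mathbf c^\top\mathbf u$.

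The formulation must respect the standing assumption that $\mathcal M_{11}$ is nonempty and bounded, while $\mathbf u$ itself may be unbounded. A general LP may, however, be infeasible or unbounded. So we first reduce general LP, in strongly polynomial fashion, to computing the optimal value of LPs that are known to be feasible with bounded optimal value. One concrete route is the following.
\begin{itemize}
\item Solving an LP (deciding feasibility, unboundedness, and finding an optimum) reduces to optimal-value computations. Feasibility of $\mathbf G\mathbf x\le\mathbf h$ is the question of whether $\min\{t:\mathbf G\mathbf x-t\mathbf1\le\mathbf h,\ t\ge0\}=0$. This auxiliary LP is always feasible and bounded below by $0$.
\item Boundedness reduces to feasibility of the dual.
\item An optimal solution can then be recovered by standard strongly polynomial self-reduction. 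We fix constraints to equality one at a time and test whether the optimal value is unchanged, using polynomially many value oracle calls with the same encoding sizes.
\end{itemize}

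It remains to make $\mathcal M_{11}$ bounded for a feasible, bounded LP. We would impose $r\ge\ell$ and $r\le U$ on the reward coordinate. This does not affect the minimum, since $\min r$ equals the LP value, so we need only $\ell\le\mathrm{OPT}\le U$. We choose $\ell$ and $U$ from the input as rationals of polynomial bit length. By standard vertex bounds, Cramer's rule gives $|\mathrm{OPT}|\le 2^{\mathrm{poly}(L)}$. Alternatively, and to stay strongly polynomial, we can avoid computing such constants. For the auxiliary feasibility LP, use $r=t\in[0,1]$ after rescaling: replace the constraint by $\mathbf G\mathbf x-t\mathbf1\le\mathbf h$ with $t\le1$ feasible whenever $t$ is large enough. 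We achieve this by first normalizing $\mathbf h$ so that $t=1+\max_i|h_i|$ works. Then $r=t/(1+\max_i|h_i|)\in[0,1]$.

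Once $\mathcal M_{11}$ is a nonempty bounded interval, the RMDP has size $N=O(\text{LP size})$ and bit length polynomial in $L$. Its value is twice the LP value, so a strongly polynomial RMDP oracle yields strongly polynomial LP value computation. Combined with the self-reduction, this gives a strongly polynomial LP solver.

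The main obstacle is guaranteeing that $\mathcal M_{11}$ is bounded without first knowing the LP value, since we cannot use bit-length-dependent bounds in the arithmetic count. I expect the cleanest fix is to reduce only the canonical feasibility question, which already captures general LP in strongly polynomial fashion, via the normalized auxiliary LP above, where $r$ lies in $[0,1]$ by construction. Even so, care is needed to check that the reduction from full LP solving to such bounded feasibility instances keeps the operation count polynomial in the dimensions alone.
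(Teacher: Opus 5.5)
Your encoding (one state, $p=1$, $r=\mathbf c^\top\mathbf u$, so $v^\star=2\,\mathrm{OPT}$) matches the paper's, and your capped feasibility test $\min\{t:\mathbf G\mathbf x-t\mathbf1\le\mathbf h,\ 0\le t\le 1+\max_i|h_i|\}$ is a legitimate oracle call. The gap is in the step that actually solves the LP. Your self-reduction (``fix constraints to equality one at a time and test whether the optimal value is unchanged'') calls the value oracle on instances whose reward range $\{\mathbf c^\top\mathbf x:\mathbf G\mathbf x\le\mathbf h,\dots\}$ is in general unbounded above. It also calls it on instances that are infeasible, i.e.\ have empty $\mathcal M_{11}$, whenever the tightened row cannot be tight. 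Both violate the standing assumption, so the oracle does not apply. You then withdraw your only remedy, a cap $r\le U$, and end with ``care is needed,'' so the reduction is never completed. Your reason for dropping the cap is also mistaken. A Cramer-type bound of the form $n!\,C^{n}D_{\mathrm{in}}$, with $C,D_{\mathrm{in}}$ built from products and sums of input numerators and denominators, costs only $\mathrm{poly}(m,d)$ operations and has polynomial bit length. This is how the paper builds its constants in Appendix~\ref{app:lp-arithmetic}. Strong polynomiality restricts the operation count, not the magnitude of polynomial-bit-length constants.

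There are two ways to close the gap.
\begin{itemize}
\item Keep the cap with such a $U$, and run a feasibility test before every value call.
\item Complete the feasibility-only route you hint at. Apply your capped test to the primal system and then to the optimality system
\[
  \mathbf G\mathbf x\le\mathbf h,\quad
  \mathbf G^\top\mathbf y=-\mathbf c,\quad
  \mathbf y\ge\mathbf0,\quad
  \mathbf c^\top\mathbf x=-\mathbf h^\top\mathbf y .
\]
This detects infeasibility and unboundedness. Then turn inequalities into equalities one at a time whenever the system stays feasible. At the end, every remaining inequality is strict on the feasible set, so a segment argument shows every solution of the equality subsystem is feasible. Gaussian elimination then gives an optimal pair, and every oracle input uses only original coefficients.
\end{itemize}

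The paper avoids large constants altogether. It homogenizes and boxes all variables:
\[
  \mathbf A\mathbf x\ge t\mathbf b,\quad
  \mathbf A^\top\mathbf y=t\mathbf c,\quad
  \mathbf c^\top\mathbf x=\mathbf b^\top\mathbf y,\quad
  -\mathbf1\le\mathbf x\le\mathbf1,\quad
  \mathbf0\le\mathbf y\le\mathbf1,\quad
  0\le t\le1 .
\]
Every instance is then a polytope containing the origin, and feasibility and finiteness are read off from the sign of $\min(-t)$. An optimal point is recovered by minimizing coordinates one at a time over successive nonempty faces. Each fixed value is a vertex coordinate, so it has polynomial bit length.
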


\begin{proof}
For any nonempty bounded rational polyhedron
$\mathcal X=\{\mathbf x\in\mathbb R^d:
\mathbf A\mathbf x\ge\mathbf b\}$ defined by $\mathbf A\in\mathbb Q^{m\times d}$
and $\mathbf b\in\mathbb Q^m$,
and objective vector $\mathbf c\in\mathbb Q^d$, we construct an RMDP
with state space $\mathcal S=\{s_1\}$, action set
$\mathcal A_{s_1}=\{a_1\}$, and discount $\gamma=\tfrac12$.
Action $a_1$ returns to $s_1$ with probability one, while its reward
ranges over $\{\frac{\mathbf c^\top\mathbf x}{2}:
\mathbf x\in\mathcal X\}$.
Adjoining $r=\frac{\mathbf c^\top\mathbf x}{2}$ and $[\mathbf p]_{s_1}=1$
to the description of $\mathcal X$ gives a rational extended
uncertainty set with compact reward-transition projection. The RMDP
has robust value
\begin{equation}
  [\mathbf v^\star]_{s_1}
  =\min_{\mathbf x\in\mathbb R^d}
    \{\mathbf c^\top\mathbf x:\mathbf A\mathbf x\ge\mathbf b\}.
  \label{eq:general-lp-embedding}
\end{equation}
By~\eqref{eq:general-lp-embedding}, the RMDP algorithm computes
optimal values of linear functions over nonempty bounded rational
polyhedra in strongly polynomial time.
Appendix~\ref{app:general-lp-reduction} shows how these computations yield
a strongly polynomial algorithm for general rational LPs.
\end{proof}

By Proposition~\ref{prop:general-lp-hardness}, a strongly
polynomial algorithm for the general model would resolve the longstanding
open problem of strongly polynomial linear programming
\citep[Problem~9]{Smale1998}. We therefore analyze the cost of
Algorithm~\ref{alg:exact-policy-iteration} to identify structural conditions
on the uncertainty sets that ensure strong polynomiality.

Lemma~\ref{lem:outer-pi} bounds the number of policy
evaluations in Algorithm~\ref{alg:exact-policy-iteration}.
Each iteration computes the value $\mathbf v^\pi$ of its current
deterministic policy $\pi$ in Line~\ref{line:policy-evaluation}
and solves the minimization problems in~\eqref{eq:bellman-operator} at
$\mathbf v^\pi$ for all $A$ state-action pairs in
Line~\ref{line:policy-bellman-lps}.
Lines~\ref{line:policy-optimality-check} and~\ref{line:policy-improvement}
reuse these values and require $O(A)$ comparisons.
To compute $\mathbf v^\pi$, we consider a minimization MDP
whose action set at state $s$ is $\mathcal M_{s,\pi(s)}$ and whose
optimal value is $\mathbf v^\pi$, following
the policy-evaluation interpretation of \citet{HoPetrikWiesemann2021}.
A choice $\tau$ of one reward-transition pair at each state determines a reward vector
$\mathbf r^\tau$, a transition matrix $\mathbf P^\tau$, and
the value $\mathbf u^\tau$ satisfying
$(\mathbf I-\gamma\mathbf P^\tau)\mathbf u^\tau=\mathbf r^\tau$.
Algorithm~\ref{alg:nature-policy-iteration} applies Howard's
policy iteration \citep{Howard1960} to this MDP.
We choose the initial pairs $\tau^0$ from minimizers
in~\eqref{eq:bellman-operator}, with $a=\pi(s)$ and
$\mathbf v=\mathbf0$ at each state $s$.
Each iteration of Algorithm~\ref{alg:nature-policy-iteration}
then computes $\mathbf u^{\tau^t}$ by solving the linear system
in Line~\ref{line:pair-value} and updates the pairs
in Line~\ref{line:pair-update} by solving the $S$ minimization
problems in~\eqref{eq:nature-improvement}.

\begin{algorithm}[t]
\caption{Robust policy evaluation by policy iteration}
\label{alg:nature-policy-iteration}
\begin{algorithmic}[1]
  \REQUIRE A deterministic policy $\pi$ and initial pairs $\tau^0$,
    one from each $\mathcal M_{s,\pi(s)}$, with reward vector
    $\mathbf r^{\tau^0}$ and transition matrix $\mathbf P^{\tau^0}$.
  \FOR{$t=0,1,\ldots$}
    \STATE Solve
      $(\mathbf I-\gamma\mathbf P^{\tau^t})\mathbf u^{\tau^t}
        =\mathbf r^{\tau^t}$.
      \label{line:pair-value}
    \STATE Choose $\tau^{t+1}$ satisfying
      \label{line:pair-update}
      \begin{equation}
        \bigl([\mathbf r^{\tau^{t+1}}]_s,
               [\mathbf P^{\tau^{t+1}}]_{s,:}^{\top}\bigr)
        \in\operatorname*{arg\,min}_{(r,\mathbf p)\in\mathcal M_{s,\pi(s)}}
             \{r+\gamma\mathbf p^\top\mathbf u^{\tau^t}\},
        \quad \forall s\in\mathcal S.
        \label{eq:nature-improvement}
      \end{equation}
    \STATE \textbf{if}
      $\mathbf r^{\tau^{t+1}}+
       \gamma\mathbf P^{\tau^{t+1}}\mathbf u^{\tau^t}
       =\mathbf u^{\tau^t}$
      \textbf{then return} $\mathbf u^{\tau^t}$ and $\tau^{t+1}$.
      \label{line:pair-optimality-check}
  \ENDFOR
\end{algorithmic}
\end{algorithm}

\paragraph{Cost and encoding assumptions.}
For a given $\mathbf v\in\mathbb Q^S$, we solve
\begin{equation}
  \min_{(r,\mathbf p)\in\mathcal M_{sa}}
    \{r+\gamma\mathbf p^\top\mathbf v\},
  \qquad s\in\mathcal S,\ a\in\mathcal A_s,
  \label{eq:bellman-minimizers}
\end{equation}
and return a minimizing reward-transition pair and the value
$\mathscr T_{sa}(\mathbf v)$.
For these computations, we use the following upper bounds on
arithmetic operations and comparisons, valid for every rational
$\mathbf v$.
\begin{enumerate}[leftmargin=*]
  \item $\mathsf C_{\rm in}\ge S$ bounds the total cost for the $S$
  pairs $(s,\pi(s))$ of any deterministic policy $\pi$.
  These computations occur in Algorithm~\ref{alg:nature-policy-iteration},
  Line~\ref{line:pair-update} and initialization, as part of the policy
  evaluation in Algorithm~\ref{alg:exact-policy-iteration},
  Line~\ref{line:policy-evaluation}.
  \item $\mathsf C_{\rm out}\ge A$ bounds the total cost for all $A$
  state-action pairs, as required in
  Algorithm~\ref{alg:exact-policy-iteration},
  Line~\ref{line:policy-bellman-lps}.
\end{enumerate}
\looseness=-1 Both cost bounds include all computations needed to obtain the
minimizing pairs and objective values at a common $\mathbf v$.
We require the returned pairs to have encoding length polynomial
in $L$, independently of $\mathbf v$.
The objective values and intermediate rationals must have bit length polynomial in $L$ and the encoding length of $\mathbf v$.

\begin{theorem}[Complexity of robust policy iteration]
\label{thm:residual-criterion}
Let $\gamma\in[0,1)$ be rational. Assume the cost
and encoding conditions following~\eqref{eq:bellman-minimizers}, with computation
bounds $\mathsf C_{\rm in}$ and $\mathsf C_{\rm out}$.
For a run of Algorithm~\ref{alg:nature-policy-iteration} with a deterministic
policy $\pi$ and initial pairs selected at $\mathbf v=\mathbf0$, we define
$E_t=\|\mathbf u^{\tau^t}-\mathbf v^\pi\|_\infty$ at iteration $t$
and set $E_t=0$ after termination.
Suppose there is an integer $\mathsf G\ge0$ such that, for every
deterministic policy $\pi$, each such run admits a nonnegative sequence
$(g_t)_{t\ge0}$ satisfying
\begin{equation}
  \left\{\begin{aligned}
    &g_t\le E_t,\qquad
      E_{t+1}\le\frac{\gamma g_t}{1-\gamma},\quad \forall t\ge0,\\
    &\left|\{\lfloor\log_2 g_t\rfloor:t\ge0,\ g_t>0\}\right|\le \mathsf G.
  \end{aligned}\right.
  \label{eq:unified-scale-bound}
\end{equation}
Algorithm~\ref{alg:exact-policy-iteration}, with
Line~\ref{line:policy-evaluation} implemented by
Algorithm~\ref{alg:nature-policy-iteration}, returns $\mathbf v^\star$,
all sets $\mathcal A_s^\star$, an optimal deterministic stationary
policy, and a minimizing reward-transition pair at $\mathbf v^\star$
for every $(s,a)$, using
\begin{equation}
  O\left(
    [1+(A-S)T_\gamma]
    \left[(1+\mathsf G T_\gamma)(S^3+\mathsf C_{\rm in})
          +\mathsf C_{\rm out}\right]
  \right)
  \label{eq:residual-scale-total-cost}
\end{equation}
arithmetic operations and comparisons. Intermediate rationals have
bit length polynomial in $L$.
\end{theorem}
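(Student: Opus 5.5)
The proof has two layers. The outer loop is handled by Lemma~\ref{lem:outer-pi}. The inner loop, Algorithm~\ref{alg:nature-policy-iteration}, needs an iteration bound of the form $O(1+\mathsf G T_\gamma)$ derived from~\eqref{eq:unified-scale-bound}. Lemma~\ref{lem:outer-pi} already gives at most $1+(A-S)T_\gamma$ policy evaluations and one batch of $A$ Bellman minimizations per evaluation, each batch costing $\mathsf C_{\rm out}$. Lines~\ref{line:policy-optimality-check} and~\ref{line:policy-improvement} add only $O(A)\le O(\mathsf C_{\rm out})$ comparisons. At termination, $\mathbf v^{\pi}=\mathbf v^\star$, and the last batch of minimizations supplies $\mathscr T_{sa}(\mathbf v^\star)$ together with a minimizing pair for every $(s,a)$. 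Comparing these values with $[\mathbf v^\star]_s$ yields each $\mathcal A_s^\star$ by~\eqref{eq:optimal-actions}. The remaining task is to show that each policy evaluation takes $O(1+\mathsf G T_\gamma)$ inner iterations. Each inner iteration costs $O(S^3)$ for the linear solve and $\mathsf C_{\rm in}$ for the update~\eqref{eq:nature-improvement}, and the initialization adds one more $\mathsf C_{\rm in}$. Multiplying these counts gives~\eqref{eq:residual-scale-total-cost}.

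For the inner bound, I would first record the standard facts about Howard's iteration on the minimization MDP. The iterates satisfy $\mathbf u^{\tau^{t+1}}\le\mathbf u^{\tau^t}$ and $\mathbf u^{\tau^t}\ge\mathbf v^\pi$, so $E_t$ is nonincreasing. They also contract: $E_{t+1}\le\gamma E_t$. This follows from $\mathbf v^\pi\le\mathbf u^{\tau^{t+1}}\le\mathscr T_\pi(\mathbf u^{\tau^t})$ together with the $\gamma$-Lipschitz property of $\mathscr T_\pi$, which mirrors~\eqref{eq:outer-contraction}. Iterating gives $E_{t+h}\le\gamma^{h-1}E_{t+1}$.

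The key step groups the iterations by scale. Fix a scale $j$ that occurs, and let $t_0$ be the first index with $g_{t_0}>0$ and $\lfloor\log_2 g_{t_0}\rfloor=j$. Then $E_{t_0+1}\le\gamma 2^{j+1}/(1-\gamma)$. For $h\ge T_\gamma$, the contraction gives
\[
E_{t_0+1+h}\le\gamma^{h}\cdot\frac{2^{j+1}}{1-\gamma}<2^{j}\,\gamma^{h-T_\gamma},
\]
where the last inequality uses $\gamma^{T_\gamma}<(1-\gamma)/2$. Since $g_t\le E_t$, every index $t\ge t_0+1+T_\gamma$ has $g_t<2^j$, so scale $j$ can occur only at indices in $[t_0,t_0+T_\gamma]$. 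Hence each scale occupies at most $T_\gamma+1$ indices, and at most $\mathsf G(T_\gamma+1)$ indices have $g_t>0$. If $g_t=0$ before termination, then $E_{t+1}=0$, so the next value solve returns $\mathbf v^\pi$ and the algorithm stops at the following check. There can therefore be at most one such index, plus $O(1)$ final iterations. In total the run has $O(1+\mathsf G T_\gamma)$ iterations, and it terminates with the correct value. I expect the main obstacle to be the careful bookkeeping here: the first-occurrence argument, the convention $E_t=0$ after termination, and the fact that the stopping check in Line~\ref{line:pair-optimality-check} fires exactly when $\mathbf u^{\tau^t}=\mathbf v^\pi$. That last fact follows from uniqueness of the fixed point of $\mathscr T_\pi$. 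A factor $1+\mathsf G(T_\gamma+1)=O(1+\mathsf G T_\gamma)$ suffices.

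For the bit-length claim, each $\mathbf u^{\tau^t}$ solves a linear system whose entries come from returned pairs of encoding length polynomial in $L$, independently of $\mathbf v$. Cramer's rule then gives bit length polynomial in $L$ and $S$, and hence in $L$. The cost assumptions bound the objective values and intermediate rationals polynomially in $L$ and the length of $\mathbf v$. Since every $\mathbf v$ passed to a minimization is such a $\mathbf u^{\tau}$, or $\mathbf 0$ at initialization, the lengths never compound across iterations.
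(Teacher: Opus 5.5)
Your proposal is correct and follows the paper's proof. It uses Lemma~\ref{lem:outer-pi} for the outer count, $E_{t+1}\le\gamma E_t$ together with $\gamma^{T_\gamma}/(1-\gamma)<1/2$ to make the scale of $g_t$ drop within $T_\gamma$ steps, the same cost multiplication, and the same non-compounding encoding argument. One small point there: the paper uses fraction-free elimination to bound the intermediate numbers of each linear solve, not only its solution as Cramer's rule does.

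The only other difference is bookkeeping. You confine each scale to a window of $T_\gamma+1$ indices after its first occurrence and get at most $\mathsf G(T_\gamma+1)+2$ inner iterations. Lemma~\ref{lem:finite-scale-progress} instead samples $g_0,g_{T_\gamma},\ldots,g_{\mathsf G T_\gamma}$ and uses pigeonhole to get $\mathsf G T_\gamma+2$, which is the same order.
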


\begin{proof}[Proof of Theorem~\ref{thm:residual-criterion}]
We first bound the number of iterations in
Algorithm~\ref{alg:nature-policy-iteration} for a fixed deterministic
policy $\pi$. We then combine this bound with Lemma~\ref{lem:outer-pi}
to obtain the total operation count.

\vspace{5pt}
\noindent\textbf{Contraction and residual bounds.}
For a feasible choice $\tau$ of reward-transition pairs, we define
\begin{equation}
  \mathbf d^\tau
    =\mathbf r^\tau+\gamma\mathbf P^\tau\mathbf v^\pi
                         -\mathbf v^\pi.
  \label{eq:nature-residual}
\end{equation}
Since $\mathscr T_\pi(\mathbf v^\pi)=\mathbf v^\pi$ and the
pairs in $\tau$ are feasible, $\mathbf d^\tau\ge\mathbf0$.
The value equation for $\mathbf u^\tau$ and~\eqref{eq:nature-residual}
give $(\mathbf I-\gamma\mathbf P^\tau)(\mathbf u^\tau-\mathbf v^\pi)
=\mathbf d^\tau$.
Since $\mathbf P^\tau$ is stochastic and $\gamma<1$, expanding
$(\mathbf I-\gamma\mathbf P^\tau)^{-1}$ as a convergent geometric series gives
\begin{equation}
  \mathbf0\le\mathbf d^\tau
  \le\mathbf u^\tau-\mathbf v^\pi
   =\sum_{j=0}^{\infty}\gamma^j(\mathbf P^\tau)^j\mathbf d^\tau
  \le\frac{\|\mathbf d^\tau\|_\infty}{1-\gamma}\mathbf1.
  \label{eq:residual-sandwich}
\end{equation}
The lower bound follows because the $j=0$ term is $\mathbf d^\tau$
and all remaining terms are nonnegative. For the upper bound, each
component of $(\mathbf P^\tau)^j\mathbf d^\tau$ is a convex combination
of the entries of $\mathbf d^\tau$, hence at most $\|\mathbf d^\tau\|_\infty$.
Summing the coefficients $\gamma^j$ gives the factor $1/(1-\gamma)$.
Taking infinity norms yields
\begin{equation*}
  \|\mathbf d^\tau\|_\infty
  \le\|\mathbf u^\tau-\mathbf v^\pi\|_\infty
  \le\frac{\|\mathbf d^\tau\|_\infty}{1-\gamma}.
\end{equation*}
At iteration $t$ of Algorithm~\ref{alg:nature-policy-iteration},
the minimizing property in~\eqref{eq:nature-improvement} and the
value equation for $\mathbf u^{\tau^t}$ give
\begin{equation}
  \mathbf r^{\tau^{t+1}}+\gamma\mathbf P^{\tau^{t+1}}\mathbf u^{\tau^t}
    \le\mathbf r^{\tau^t}+\gamma\mathbf P^{\tau^t}\mathbf u^{\tau^t}
    =\mathbf u^{\tau^t}.
  \label{eq:nature-improvement-comparison}
\end{equation}
Rearranging~\eqref{eq:nature-improvement-comparison} gives
$\mathbf u^{\tau^t}
\ge(\mathbf I-\gamma\mathbf P^{\tau^{t+1}})^{-1}\mathbf r^{\tau^{t+1}}
=\mathbf u^{\tau^{t+1}}$.
Using~\eqref{eq:nature-improvement}, we then have
\begin{equation*}
  \mathbf u^{\tau^{t+1}}
  =\mathbf r^{\tau^{t+1}}+\gamma\mathbf P^{\tau^{t+1}}\mathbf u^{\tau^{t+1}}
  \le\mathbf r^{\tau^{t+1}}+\gamma\mathbf P^{\tau^{t+1}}\mathbf u^{\tau^t}
  =\mathscr T_\pi(\mathbf u^{\tau^t}).
\end{equation*}
Together with $\mathbf u^{\tau^{t+1}}\ge\mathbf v^\pi$
from~\eqref{eq:residual-sandwich} and
$\mathscr T_\pi(\mathbf v^\pi)=\mathbf v^\pi$,
the $\gamma$-contraction property of $\mathscr T_\pi$ gives
\begin{equation}
  \|\mathbf u^{\tau^{t+1}}-\mathbf v^\pi\|_\infty
  \le\|\mathscr T_\pi(\mathbf u^{\tau^t})
        -\mathscr T_\pi(\mathbf v^\pi)\|_\infty
  \le\gamma\|\mathbf u^{\tau^t}-\mathbf v^\pi\|_\infty.
  \label{eq:nature-contraction}
\end{equation}
Combining~\eqref{eq:residual-sandwich} and~\eqref{eq:nature-contraction} also gives
$\|\mathbf u^{\tau^{t+1}}-\mathbf v^\pi\|_\infty
\le\frac{\gamma\|\mathbf d^{\tau^t}\|_\infty}{1-\gamma}$.

\vspace{5pt}
\noindent\textbf{Iteration bound.}
We use the following lemma to turn the contraction bound into a
bound on the number of iterations.

\begin{lemma}[A bound for contracting sequences]
\label{lem:finite-scale-progress}
Let $0\le\gamma<1$ and $\mathsf G\in\mathbb Z_{\ge0}$. Suppose nonnegative
sequences $(E_t)_{t\ge0}$ and $(g_t)_{t\ge0}$ satisfy, for every $t\ge0$,
\begin{equation}
  0\le g_t\le E_t,\qquad
  E_{t+1}\le\gamma E_t,\qquad
  E_{t+1}\le\frac{\gamma g_t}{1-\gamma}.
  \label{eq:finite-scale-hypotheses}
\end{equation}
If the set $\{\lfloor\log_2g_t\rfloor:t\ge0,\ g_t>0\}$ has at most
$\mathsf G$ elements, then $E_t=0$ for some integer $0\le t\le \mathsf G T_\gamma+1$.
\end{lemma}

\begin{proof}[Proof of Lemma~\ref{lem:finite-scale-progress}]
The assumptions and~\eqref{eq:discount-block} give
\begin{equation*}
  g_{t+T_\gamma}\le E_{t+T_\gamma}
  \le\frac{\gamma^{T_\gamma}g_t}{1-\gamma}<\frac{g_t}{2}
\end{equation*}
whenever $g_t>0$. Thus
$\lfloor\log_2g_{t+T_\gamma}\rfloor<\lfloor\log_2g_t\rfloor$
whenever $g_t>0$ and $g_{t+T_\gamma}>0$. At least one of
$g_0,g_{T_\gamma},\ldots,g_{\mathsf G T_\gamma}$ must be zero,
since otherwise the corresponding values of $\lfloor\log_2g_t\rfloor$ would give $\mathsf G+1$
distinct integers, contradicting the assumed bound.
If $g_t=0$,
the last inequality in~\eqref{eq:finite-scale-hypotheses} gives
\(E_{t+1}=0\).
\end{proof}

For each policy evaluation,~\eqref{eq:nature-contraction} and the
assumptions on $g_t$ verify~\eqref{eq:finite-scale-hypotheses}.
Lemma~\ref{lem:finite-scale-progress} and~\eqref{eq:unified-scale-bound} give $E_t=0$ for some
$t\le \mathsf G T_\gamma+1$. By~\eqref{eq:nature-improvement}, the
stopping test is $\mathscr T_\pi(\mathbf u^{\tau^t})=\mathbf u^{\tau^t}$.
Algorithm~\ref{alg:nature-policy-iteration} therefore returns
$\mathbf v^\pi$ and the minimizing pairs selected
in Line~\ref{line:pair-update} after at most $\mathsf G T_\gamma+2$ iterations.

\vspace{5pt}
\noindent\textbf{Arithmetic operations and comparisons.}
Lemma~\ref{lem:outer-pi} bounds the number of policy evaluations
by $1+(A-S)T_\gamma$. In each iteration of
Algorithm~\ref{alg:nature-policy-iteration},
Line~\ref{line:pair-value} costs $O(S^3)$ operations to solve
the linear system, and Line~\ref{line:pair-update} costs at most
$\mathsf C_{\rm in}$ to compute the minimizers in~\eqref{eq:nature-improvement}
and their objective values. Initialization requires one additional set of
$S$ minimizations at $\mathbf v=\mathbf0$. Thus $\mathsf G T_\gamma+3$
bounds both the number of linear systems and the number of sets of
$S$ minimizations per policy evaluation, including initialization and the
stopping iteration. Each evaluation is followed by the computations in
Algorithm~\ref{alg:exact-policy-iteration},
Line~\ref{line:policy-bellman-lps}, at cost at most $\mathsf C_{\rm out}$.
The comparisons in Line~\ref{line:pair-optimality-check} of
Algorithm~\ref{alg:nature-policy-iteration} and
Lines~\ref{line:policy-optimality-check} and~\ref{line:policy-improvement}
of Algorithm~\ref{alg:exact-policy-iteration} reuse the computed LP
objective values and cost $O(S)$ and $O(A)$, respectively.
These costs are absorbed by $\mathsf C_{\rm in}\ge S$ and
$\mathsf C_{\rm out}\ge A$, respectively. Forming each reward vector
and transition matrix costs $O(S^2)$, which is absorbed by the
linear-system cost. The total cost is therefore
\begin{equation*}
  O\Bigl(
    \underbrace{1+(A-S)T_\gamma}_{\text{policy evaluations}}
    \Bigl[
      \underbrace{(\mathsf G T_\gamma+3)(S^3+\mathsf C_{\rm in})}_{
        \text{Algorithm~\ref{alg:exact-policy-iteration}, Line~\ref{line:policy-evaluation}}}
      +\underbrace{\mathsf C_{\rm out}}_{
        \text{Algorithm~\ref{alg:exact-policy-iteration}, Line~\ref{line:policy-bellman-lps}}}
    \Bigr]
  \Bigr),
\end{equation*}
which gives~\eqref{eq:residual-scale-total-cost}.

\vspace{5pt}
\noindent\textbf{Encoding length.}
By the output encoding bound for the pairs
in~\eqref{eq:bellman-minimizers}, every linear system in
Algorithm~\ref{alg:nature-policy-iteration}, Line~\ref{line:pair-value}
has coefficients of bit length polynomial in $L$.
After clearing denominators, fraction-free Gaussian elimination
computes $\mathbf u^\tau$ with all solution coordinates and intermediate
numbers having bit length polynomial in $L$.
Each minimization in~\eqref{eq:bellman-minimizers} uses either
$\mathbf v=\mathbf0$ or one of these computed value vectors.
The encoding assumption for these minimizations therefore ensures
that all intermediate rationals throughout the algorithm have
bit length polynomial in $L$.

At termination, the LP solutions from
Line~\ref{line:policy-bellman-lps} of
Algorithm~\ref{alg:exact-policy-iteration} identify
$\mathcal A_s^\star=\{a:\mathscr T_{sa}(\mathbf v^\star)
=[\mathbf v^\star]_s\}$ and supply a minimizing reward-transition
pair for every $(s,a)$. Any selection
$\pi(s)\in\mathcal A_s^\star$ is optimal by~\eqref{eq:optimal-actions}.
\end{proof}

For fixed $\gamma$, polynomial bounds on $\mathsf C_{\rm in}$,
$\mathsf C_{\rm out}$, and $\mathsf G$ in the input size
$N$ give a strongly polynomial algorithm.
Section~\ref{sec:applications} derives such bounds from structural
properties of the uncertainty sets.

\section{Structural conditions and applications}
\label{sec:applications}

In this section, we apply the
general complexity bound in Theorem~\ref{thm:residual-criterion} to concrete
RMDP models, including $\ell_1$, weighted $\ell_1$, interval, and Wasserstein
uncertainty (Examples~\ref{ex:norm-ambiguity}--\ref{ex:wasserstein-ambiguity}).
We also extend the analysis to turn-based stochastic games.
Table~\ref{tab:result-summary} summarizes
our general and model-specific complexity bounds.

\begin{table}[t]
  \begingroup
  \definecolor{tableInk}{HTML}{344B5B}
  \definecolor{tableTint}{HTML}{EEF2F5}
  \definecolor{tableRule}{HTML}{B7C3CC}
  \color{black}
  \hypersetup{allcolors=tableInk}
  \centering
  \small
  \setlength{\tabcolsep}{4pt}
  \renewcommand{\arraystretch}{1.25}
  \begin{tabular}{
    >{\centering\arraybackslash}m{\dimexpr0.16\linewidth-2\tabcolsep\relax}
    >{\centering\arraybackslash}m{\dimexpr0.29\linewidth-2\tabcolsep\relax}
    >{\centering\arraybackslash}m{\dimexpr0.55\linewidth-2\tabcolsep\relax}}
    \arrayrulecolor{tableRule}
    \toprule
    \rowcolor{tableTint}
    \textcolor{tableInk}{\textbf{Result}}
      & \textcolor{tableInk}{\textbf{Model}}
      & \textcolor{tableInk}{\textbf{Complexity bound}}\\
    \midrule
    \addlinespace[3pt]
    Theorem~\ref{thm:residual-criterion}
      & Polyhedral RMDPs
      & \mbox{$O\bigl((A-S+1)[(1+\mathsf G)(S^3+\mathsf C_{\rm in})
                         +\mathsf C_{\rm out}]\bigr)$}\\
    \addlinespace[3pt]
    \midrule
    \addlinespace[3pt]
    Corollary~\ref{cor:norm-planning}
      & $\ell_1$ RMDPs\par\kern3pt
          {\color{tableRule}\hrule height 0.3pt}\kern3pt
          Interval RMDPs\newline
          {\footnotesize (including weighted $\ell_\infty$ RMDPs)}
      & \mbox{$O\bigl((A-S+1)[S^4+AS]\bigr)$}\\
    \addlinespace[3pt]
    \midrule
    \addlinespace[3pt]
    Corollary~\ref{cor:budget-models}
      & Weighted $\ell_1$ RMDPs\par\kern3pt
          {\color{tableRule}\hrule height 0.3pt}\kern3pt
          Wasserstein RMDPs
      & \mbox{$O\bigl((A-S+1)[S^7+AS\log(2+S)]\bigr)$}\\
    \addlinespace[3pt]
    \midrule
    \addlinespace[3pt]
    Theorem~\ref{thm:game-progress-criterion}
      & Turn-based stochastic games
      & \mbox{$O\bigl((1+\mathsf G_{\max})[(1+\mathsf G_{\min})
              (S^3+\mathsf C_{\min})+\mathsf C_{\max}]\bigr)$}\\
    \addlinespace[3pt]
    \bottomrule
  \end{tabular}
  \caption{Summary of our arithmetic complexity bounds at fixed discount.}
  \label{tab:result-summary}
  \endgroup
\end{table}

\subsection{\texorpdfstring{$\ell_1$}{L1} and interval RMDPs}
\label{subsec:order-compatible}
\label{subsec:norm-corollaries}

We introduce a unified condition on transition sets to bound
$\mathsf G$ in Theorem~\ref{thm:residual-criterion}.
Both $\ell_1$ balls and interval sets satisfy this condition.

\begin{condition}[Order compatibility]
\label{cond:order-compatible}
For every $(s,a)$, $\mathcal M_{sa}=\{r_{sa}\}\times\mathcal U_{sa}$,
where $r_{sa}$ is fixed and $\mathcal U_{sa}$ is a nonempty compact
subset of $\Delta(\mathcal S)$.
For every $(s,a)$ and every permutation $\sigma$ of $\mathcal S$,
there exists $\mathbf p_{sa}^{\sigma}\in\mathcal U_{sa}$ such that
\begin{equation}
  (\mathbf p_{sa}^{\sigma})^\top\mathbf w
   =\min_{\mathbf p\in\mathcal U_{sa}}\mathbf p^\top\mathbf w,
  \quad\forall\mathbf w\in\mathbb R^S
  \text{ with }[\mathbf w]_{\sigma(1)}\le\cdots\le[\mathbf w]_{\sigma(S)}.
  \label{eq:order-compatible}
\end{equation}
\end{condition}

The transition sets are called \emph{order compatible} because one
minimizing distribution works for every vector with a prescribed
coordinate order, including ties.

\begin{theorem}[Robust policy iteration under order compatibility]
\label{thm:order-pi}
Let $\gamma\in[0,1)$ be rational. Under
Condition~\ref{cond:order-compatible}, for every deterministic policy
$\pi$ and every feasible initial choice $\tau^0$,
each run of
Algorithm~\ref{alg:nature-policy-iteration},
allowing any choice among minimizers in
Line~\ref{line:pair-update},
admits a nonnegative sequence $(g_t)_{t\ge0}$
satisfying~\eqref{eq:unified-scale-bound} with
\begin{equation*}
  \mathsf G=\max\{0,2S-3\}.
\end{equation*}
Assume also the cost and encoding conditions
following~\eqref{eq:bellman-minimizers} hold with bounds
$\mathsf C_{\rm in}$ and $\mathsf C_{\rm out}$.
For fixed $\gamma$, Theorem~\ref{thm:residual-criterion} implies that
Algorithm~\ref{alg:exact-policy-iteration}, with
Line~\ref{line:policy-evaluation} implemented by
Algorithm~\ref{alg:nature-policy-iteration}, computes $\mathbf v^\star$
and an optimal deterministic stationary policy using
\begin{equation*}
  O\left((A-S+1)
    [S^4+S\mathsf C_{\rm in}+\mathsf C_{\rm out}]\right)
\end{equation*}
arithmetic operations and comparisons.
Every intermediate rational has bit length polynomial in $L$.
\end{theorem}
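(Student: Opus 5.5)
The plan is to verify the hypothesis~\eqref{eq:unified-scale-bound} of Theorem~\ref{thm:residual-criterion} with a sequence $(g_t)$ built from the geometry of the fixed target $\mathbf v^\pi$. The complexity statement then follows by substituting $\mathsf G=O(S)$ and $T_\gamma=O(1)$ into~\eqref{eq:residual-scale-total-cost}, since $(1+\mathsf G T_\gamma)(S^3+\mathsf C_{\rm in})=O(S^4+S\mathsf C_{\rm in})$. Fix a deterministic $\pi$ and let $c_1<\dots<c_m$ ($m\le S$) be the distinct values of $\mathbf v^\pi$, with level sets $B_1,\dots,B_m$.

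\textbf{Key observation.} Under Condition~\ref{cond:order-compatible}, the minimizer chosen in~\eqref{eq:nature-improvement} can be taken to depend only on a permutation $\sigma$ sorting $\mathbf u^{\tau^t}$. Suppose some permutation sorts both $\mathbf u^{\tau^t}$ and a vector $\mathbf w$. Then the selected $\mathbf p^\sigma_{s\pi(s)}$ also minimizes $\mathbf p^\top\mathbf w$ over $\mathcal U_{s\pi(s)}$. Since arbitrary tie-breaking is allowed, I will first argue that we may assume the chosen distribution is order-optimal for a sorting $\sigma$ of $\mathbf u^{\tau^t}$. This holds because any minimizer at $\mathbf u^{\tau^t}$ agrees in objective with $\mathbf p^\sigma$ at $\mathbf u^{\tau^t}$. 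Any later comparison will be routed only through $\mathbf u^{\tau^t}$ and a vector $\mathbf w$ that is constant on the ties of $\mathbf u^{\tau^t}$ relevant to $\sigma$. I expect this tie-handling to need a short separate lemma.

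\textbf{Choice of $g_t$.} Call the cut between $B_k$ and $B_{k+1}$ \emph{violated} at $t$ if some $i\in B_{\le k}$ and $j\in B_{>k}$ satisfy $[\mathbf u^{\tau^t}]_i\ge[\mathbf u^{\tau^t}]_j$. Since $\mathbf u^{\tau^t}\ge\mathbf v^\pi$, such a violation forces $[\mathbf u^{\tau^t}-\mathbf v^\pi]_i\ge c_{k+1}-c_k$. Merge the blocks across violated cuts into maximal runs $B_p\cup\dots\cup B_q$, and let $\mathbf w$ be $\mathbf v^\pi$ flattened on each run to a suitable constant, such as the midpoint of $[c_p,c_q]$. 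Then $\mathbf u^{\tau^t}$ and $\mathbf w$ admit a common sorting permutation. I will define $g_t$ as a constant multiple of the largest span $c_q-c_p$ among merged runs, set to $0$ if there is no violation. Two inequalities are needed.

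First, $E_{t+1}\lesssim \gamma g_t/(1-\gamma)$. Since $\tau^{t+1}$ is optimal at $\mathbf w$, it is $\|\mathbf w-\mathbf v^\pi\|_\infty$-close to optimal at $\mathbf v^\pi$. This gives $\|\mathbf d^{\tau^{t+1}}\|_\infty\le 2\gamma\|\mathbf w-\mathbf v^\pi\|_\infty$, and then~\eqref{eq:residual-sandwich} gives the bound after normalizing constants.

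Second, $g_t\le E_t$. For this I will show that within a merged run, the chain of violated cuts forces some coordinate of $\mathbf u^{\tau^t}-\mathbf v^\pi$ to be at least about half the run's span. The idea is to follow the violating pairs along the run, and I expect the midpoint flattening to be what makes this factor come out cleanly.

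\textbf{Counting scales, the main obstacle.} Each value $g_t$ is a span $c_q-c_p$ of a run of consecutive levels, so a priori there are $\binom{m}{2}$ candidates rather than $2S-3$. The plan is to exploit monotonicity: $E_t$ decreases geometrically, so runs can only shrink or split over time. I will argue that the sequence of maximal runs forms a laminar, refining family of intervals of $\{1,\dots,m\}$. Such a family has at most $2m-3$ members of size at least two, namely the nodes of a binary tree on $m$ leaves excluding the leaves and the root's redundancy. Hence $g_t$ takes at most $2S-3$ distinct values, and in particular at most that many values of $\lfloor\log_2 g_t\rfloor$.

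The delicate point is proving that runs never re-merge. A cut that becomes unviolated must stay unviolated, which needs $E_{t'}<c_{k+1}-c_k$ for all later $t'$. This follows from $E_{t'}\le\gamma^{t'-t}E_t$ only after an adjustment of the violation threshold. If the exact laminarity fails, my fallback is to define $g_t$ directly as the minimum over the current laminar family of spans at least $E_t/2$. This preserves the count while keeping both inequalities up to constants absorbed into $T_\gamma$.
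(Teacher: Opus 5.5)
Your proposal has a genuine gap at the step ``$g_t\le E_t$'', which is the crux of the theorem. A merged run can be built from many violated cuts, each certified by a \emph{different} coordinate whose error is only one level gap. The span of a run is therefore not controlled by $E_t$.

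Concretely, take levels $c_k=k$ for $k\in[m]$, level sets $B_k=\{a_k,b_k\}$, and $[\mathbf u]_{a_k}=k$, $[\mathbf u]_{b_k}=k+1$. Then $\mathbf u\ge\mathbf v^\pi$ and $\|\mathbf u-\mathbf v^\pi\|_\infty=1$. Yet every cut $k$ is violated by the pair $(b_k,a_{k+1})$, so there is a single run of span $m-1$. Your midpoint-flattened $\mathbf w$ is then at distance $(m-1)/2$ from $\mathbf v^\pi$. Your first inequality gives only $E_{t+1}\le\gamma(m-1)/(1-\gamma)$, which is weaker than plain contraction $E_{t+1}\le\gamma E_t=\gamma$.

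Your argument uses nothing beyond $\mathbf u^{\tau^t}\ge\mathbf v^\pi$, so the best it yields is span $\le(m-1)E_t$, and the example shows this is essentially attained. A factor of order $S$ is not a constant that can be ``absorbed into $T_\gamma$''. It would add a $\log S$ factor to the block length in Lemma~\ref{lem:finite-scale-progress}. Moreover, the theorem asserts~\eqref{eq:unified-scale-bound} exactly, not up to constants.

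The missing idea is to use the largest \emph{single} violation rather than chains of violations. Set $\Delta_t=\max\{[\mathbf v^\pi]_i-[\mathbf v^\pi]_j:[\mathbf u^{\tau^t}]_i\le[\mathbf u^{\tau^t}]_j\}$. It satisfies $0\le\Delta_t\le E_t$ by exactly your one-pair argument. Compare with $[\mathbf w^t]_i=\min\{[\mathbf v^\pi]_j:[\mathbf u^{\tau^t}]_i\le[\mathbf u^{\tau^t}]_j\}$. This vector is nondecreasing along the order of $\mathbf u^{\tau^t}$, constant on its ties, and satisfies $\mathbf0\le\mathbf v^\pi-\mathbf w^t\le\Delta_t\mathbf1$. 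By your tie lemma, the selected pairs minimize at $\mathbf w^t$, so $\mathbf d^{\tau^{t+1}}=\mathscr T_\pi(\mathbf w^t)-\mathbf v^\pi+\gamma\mathbf P^{\tau^{t+1}}(\mathbf v^\pi-\mathbf w^t)$. Since $\mathbf w^t\le\mathbf v^\pi$, we have $\mathscr T_\pi(\mathbf w^t)\le\mathbf v^\pi$, hence $\mathbf d^{\tau^{t+1}}\le\gamma\Delta_t\mathbf1$. Thus $g_t=\Delta_t$ satisfies both inequalities in~\eqref{eq:unified-scale-bound} exactly.

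With $g_t=\Delta_t$, your counting obstacle disappears and the laminarity plan is unnecessary. As stated, that plan is also unsupported. A cut that is unviolated at time $t$ need not satisfy $E_t<c_{k+1}-c_k$. Since $\mathbf u^{\tau^t}$ decreases coordinatewise at different rates, nothing prevents the cut from becoming violated again. Your fallback definition is too vague to check.

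Hypothesis~\eqref{eq:unified-scale-bound} counts dyadic scales $\lfloor\log_2 g_t\rfloor$, not values. Every positive $\Delta_t$ is a difference of two coordinates of the \emph{fixed} vector $\mathbf v^\pi$. For any $\mathbf x\in\mathbb R^S$, sort it and list the positive consecutive gaps as $0<a_1\le\cdots\le a_k$, with partial sums $\Sigma_j$. Every positive coordinate difference lies in $[a_j,\Sigma_j]$, where $a_j$ is its largest gap. The interval $[a_1,\Sigma_1]$ gives one scale. Each later interval adds new scales as follows:
\begin{itemize}
\item at most two if $a_j>\Sigma_{j-1}$, since then $\Sigma_j<2a_j$;
\item at most one otherwise, since then $[a_j,\Sigma_{j-1}]\subseteq[a_{j-1},\Sigma_{j-1}]$ and $\Sigma_j\le2\Sigma_{j-1}$.
\end{itemize}
This gives at most $1+2(k-1)\le2S-3$ scales, with no reference to how $g_t$ evolves in time.

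Your planned tie-handling lemma, via $(\mathbf p-\mathbf p^\sigma)^\top(\mathbf u^{\tau^t}\pm\eta\mathbf w)\ge0$ for small $\eta>0$, is sound. So is the final substitution into Theorem~\ref{thm:residual-criterion}.
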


\begin{proof}[Proof of Theorem~\ref{thm:order-pi}]
We use the ordering condition to bound the error after
each update and verify the bound on $\mathsf G$. The complexity bound
then follows from Theorem~\ref{thm:residual-criterion}.
If $\gamma=0$, fixed rewards imply
$\mathbf u^{\tau^0}=\mathbf v^\pi$, so the test in
Line~\ref{line:pair-optimality-check} of
Algorithm~\ref{alg:nature-policy-iteration} holds at $t=0$.
In this case, $g_t=0$ satisfies~\eqref{eq:unified-scale-bound}.
We henceforth assume $\gamma>0$ and fix $\pi$.

\vspace{5pt}
\noindent\textbf{Bounding the error after an update.}
We measure disagreements between the coordinate orders of
$\mathbf u^{\tau^t}$ and $\mathbf v^\pi$ by
\begin{equation}
  \Delta_t
   =\max_{i,j\in\mathcal S,\;[\mathbf u^{\tau^t}]_i\le[\mathbf u^{\tau^t}]_j}
      \bigl([\mathbf v^\pi]_i-[\mathbf v^\pi]_j\bigr).
  \label{eq:order-defect}
\end{equation}
We set $\Delta_t=0$ after termination, as for $E_t$.
The maximum includes $i=j$, so $\Delta_t\ge0$.
Since $\mathbf v^\pi\le\mathbf u^{\tau^t}$
by~\eqref{eq:residual-sandwich},
$[\mathbf u^{\tau^t}]_i\le[\mathbf u^{\tau^t}]_j$ implies
$[\mathbf v^\pi]_i-[\mathbf v^\pi]_j
 \le[\mathbf u^{\tau^t}]_j-[\mathbf v^\pi]_j\le E_t$.
Thus $0\le\Delta_t\le E_t$.
We approximate $\mathbf v^\pi$ from below by a vector
$\mathbf w^t\in\mathbb R^S$ that respects the order and ties of
$\mathbf u^{\tau^t}$, defining
\begin{equation*}
  [\mathbf w^t]_i
    =\min_{j\in\mathcal S,\;[\mathbf u^{\tau^t}]_i\le[\mathbf u^{\tau^t}]_j}
      [\mathbf v^\pi]_j,
  \qquad \forall i\in\mathcal S.
\end{equation*}
If $[\mathbf u^{\tau^t}]_i\le[\mathbf u^{\tau^t}]_j$, the minimum defining
$[\mathbf w^t]_j$ is taken over a subset of the indices used for
$[\mathbf w^t]_i$, so $[\mathbf w^t]_i\le[\mathbf w^t]_j$.
Equality of the coordinates of $\mathbf u^{\tau^t}$ gives identical sets
and hence equal coordinates of $\mathbf w^t$.
Since each set contains its own index, this definition
and~\eqref{eq:order-defect} give
$\mathbf0\le\mathbf v^\pi-\mathbf w^t\le\Delta_t\mathbf1$.

Condition~\ref{cond:order-compatible} supplies a common minimizer at
$\mathbf u^{\tau^t}$ and $\mathbf w^t$. Since
Algorithm~\ref{alg:nature-policy-iteration} may select any minimizer at
$\mathbf u^{\tau^t}$, we show that every such choice also minimizes at $\mathbf w^t$.
At a fixed state $s$, we choose a permutation $\sigma$ sorting
$\mathbf u^{\tau^t}$ and a distribution
$\mathbf p_s^\sigma\in\mathcal U_{s,\pi(s)}$
satisfying~\eqref{eq:order-compatible} for this permutation.
For any minimizer $\mathbf p$ of $\mathbf p^\top\mathbf u^{\tau^t}$
over $\mathcal U_{s,\pi(s)}$, we have
$(\mathbf p-\mathbf p_s^\sigma)^\top\mathbf u^{\tau^t}=0$.
Both $\mathbf u^{\tau^t}\pm\eta\mathbf w^t$ retain the order $\sigma$
for sufficiently small $\eta>0$, since $\mathbf w^t$ is constant
wherever $\mathbf u^{\tau^t}$ is tied. By~\eqref{eq:order-compatible},
$0\le(\mathbf p-\mathbf p_s^\sigma)^\top
 (\mathbf u^{\tau^t}\pm\eta\mathbf w^t)
 =\pm\eta(\mathbf p-\mathbf p_s^\sigma)^\top\mathbf w^t$.
Thus $(\mathbf p-\mathbf p_s^\sigma)^\top\mathbf w^t=0$.
Since $\mathbf p_s^\sigma$ minimizes at $\mathbf w^t$
by~\eqref{eq:order-compatible}, so does $\mathbf p$.
The distributions selected in Line~\ref{line:pair-update} of
Algorithm~\ref{alg:nature-policy-iteration} therefore minimize at
$\mathbf w^t$. Fixed rewards give
$\mathbf r^{\tau^{t+1}}+\gamma\mathbf P^{\tau^{t+1}}\mathbf w^t
=\mathscr T_\pi(\mathbf w^t)$, so~\eqref{eq:nature-residual} yields
\begin{equation*}
  \mathbf0\le\mathbf d^{\tau^{t+1}}
    =\mathscr T_\pi(\mathbf w^t)-\mathbf v^\pi
      +\gamma\mathbf P^{\tau^{t+1}}(\mathbf v^\pi-\mathbf w^t)
    \le\gamma\Delta_t\mathbf1.
\end{equation*}
The last inequality uses $\mathbf v^\pi-\mathbf w^t\le\Delta_t\mathbf1$,
stochasticity of $\mathbf P^{\tau^{t+1}}$, and
$\mathscr T_\pi(\mathbf w^t)\le\mathscr T_\pi(\mathbf v^\pi)
=\mathbf v^\pi$.
Bounds~\eqref{eq:nature-contraction}
and~\eqref{eq:residual-sandwich} give $E_{t+1}\le\gamma E_t$ and
$E_{t+1}\le\frac{\|\mathbf d^{\tau^{t+1}}\|_\infty}{1-\gamma}
\le\frac{\gamma\Delta_t}{1-\gamma}$.
Together with $0\le\Delta_t\le E_t$, these
verify the comparison inequalities
in~\eqref{eq:unified-scale-bound} with $g_t=\Delta_t$.

\vspace{5pt}
\noindent\textbf{Bounding $\mathsf G$.}
Every positive $\Delta_t$ is a positive difference between coordinates
of the fixed vector $\mathbf v^\pi$.
The following lemma bounds the number of distinct integers
$\lfloor\log_2\Delta_t\rfloor$ for $\Delta_t>0$.

\begin{lemma}[Pairwise differences]
\label{lem:pairwise-scales}
Let $S\ge1$ be an integer. For every $\mathbf x\in\mathbb R^S$,
\begin{equation*}
  \left|\left\{
    \lfloor\log_2([\mathbf x]_i-[\mathbf x]_j)\rfloor:
    1\le i,j\le S,\ [\mathbf x]_i>[\mathbf x]_j
  \right\}\right|
  \le\max\{0,2S-3\}.
\end{equation*}
\end{lemma}

\begin{proof}
If all coordinates are equal, the set is empty.
Otherwise, after relabeling coordinates, we assume
$[\mathbf x]_1\le\cdots\le[\mathbf x]_S$.
We sort the positive differences $[\mathbf x]_{i+1}-[\mathbf x]_i$,
$i=1,\ldots,S-1$, as $0<a_1\le\cdots\le a_k$, where $1\le k\le S-1$.
For $p<q$, we have
$[\mathbf x]_q-[\mathbf x]_p
=\sum_{i=p}^{q-1}([\mathbf x]_{i+1}-[\mathbf x]_i)$.
Thus every positive coordinate difference is a sum of a nonempty
subset of the $a_i$.
We write $\Sigma_j=\sum_{i=1}^j a_i$.
For each positive coordinate difference, we let $j$ be the largest
index among its summands $a_i$. This difference is at least $a_j$
and at most $\Sigma_j$, so it belongs to $[a_j,\Sigma_j]$.
The interval $[a_1,\Sigma_1]=\{a_1\}$ contributes one integer.
For $j\ge2$, if $a_j>\Sigma_{j-1}$, then
$\Sigma_j=a_j+\Sigma_{j-1}<2a_j$.
Thus every $y\in[a_j,\Sigma_j]$ satisfies
$\lfloor\log_2 a_j\rfloor\le\lfloor\log_2 y\rfloor
\le\lfloor\log_2 a_j\rfloor+1$, so this interval contributes at most
two previously uncounted values of $\lfloor\log_2 y\rfloor$.
If $a_j\le\Sigma_{j-1}$, then
$[a_j,\Sigma_{j-1}]\subseteq[a_{j-1},\Sigma_{j-1}]$ and
$\Sigma_j\le2\Sigma_{j-1}$, so $[a_j,\Sigma_j]$ contributes at most
one previously uncounted value of $\lfloor\log_2 y\rfloor$.
The total number is therefore at most $1+2(k-1)\le2S-3$.
\end{proof}

Applying Lemma~\ref{lem:pairwise-scales} to $\mathbf v^\pi$
completes the proof of~\eqref{eq:unified-scale-bound}
with $g_t=\Delta_t$ from~\eqref{eq:order-defect} and
\mbox{$\mathsf G=\max\{0,2S-3\}$}.
For fixed $\gamma$, Theorem~\ref{thm:residual-criterion}
gives the stated complexity and encoding bounds because $T_\gamma$
is constant and $1+\mathsf G=O(S)$.
\end{proof}

\paragraph{Verification for $\ell_1$ balls and interval sets.}
We apply Theorem~\ref{thm:order-pi} to the $\ell_1$ and
interval RMDPs in Examples~\ref{ex:norm-ambiguity} and~\ref{ex:interval-ambiguity}
to obtain the following strongly polynomial complexity bounds.
Interval uncertainty includes weighted $\ell_\infty$ balls as special cases.

\begin{corollary}[Complexity bounds for $\ell_1$ and interval RMDPs]
\label{cor:norm-planning}
Fix a rational discount $\gamma\in[0,1)$.
Suppose rewards are fixed and each transition uncertainty set is an
$\ell_1$ ball from Example~\ref{ex:norm-ambiguity} or an interval set
from Example~\ref{ex:interval-ambiguity}.
For Algorithm~\ref{alg:exact-policy-iteration} with
Line~\ref{line:policy-evaluation} implemented by
Algorithm~\ref{alg:nature-policy-iteration}, the parameters in
Theorem~\ref{thm:residual-criterion} can be chosen as
\begin{equation}
  \mathsf C_{\rm in}=O(S^2),\qquad
  \mathsf C_{\rm out}=O(AS),\qquad
  \mathsf G=\max\{0,2S-3\}.
  \label{eq:norm-local-costs}
\end{equation}
Hence, Algorithm~\ref{alg:exact-policy-iteration} returns $\mathbf v^\star$, all sets $\mathcal A_s^\star$, and an optimal
deterministic stationary policy using
\begin{equation}
  O\left((A-S+1)[S^4+AS]\right)
  \label{eq:norm-full-bound}
\end{equation}
arithmetic operations and comparisons. Every intermediate rational
has bit length polynomial in $L$.
\end{corollary}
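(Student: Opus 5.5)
The plan is to reduce the corollary to Theorem~\ref{thm:order-pi}. Three things are needed. First, both uncertainty families satisfy Condition~\ref{cond:order-compatible}. Second, a sorting-based greedy procedure computes a minimizer within the stated costs $\mathsf C_{\rm in}=O(S^2)$ and $\mathsf C_{\rm out}=O(AS)$. Third, the encoding conditions following~\eqref{eq:bellman-minimizers} hold. Once these are in place, Theorem~\ref{thm:order-pi} supplies $\mathsf G=\max\{0,2S-3\}$. Substituting into $O((A-S+1)[S^4+S\mathsf C_{\rm in}+\mathsf C_{\rm out}])$ gives $O((A-S+1)[S^4+S^3+AS])=O((A-S+1)[S^4+AS])$, which is~\eqref{eq:norm-full-bound}. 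The claims about $\mathcal A_s^\star$ and bit lengths are inherited from Theorem~\ref{thm:residual-criterion}.

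\textbf{Order compatibility.} Fix a permutation $\sigma$, so that $\sigma(1)$ indexes the smallest coordinate of $\mathbf w$.

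For an interval set, define $\mathbf p^\sigma$ greedily. Start from $\mathbf l_{sa}$, with residual mass $1-\mathbf1^\top\mathbf l_{sa}$. Then raise coordinates $\sigma(1),\sigma(2),\ldots$ in turn toward $\mathbf h_{sa}$ until the mass is exhausted. The construction depends only on $\sigma$. It is optimal for every $\mathbf w$ sorted by $\sigma$ by an exchange argument: any feasible $\mathbf p$ can be turned into $\mathbf p^\sigma$ by moving mass from a later index to an earlier one, which does not increase $\mathbf p^\top\mathbf w$. Ties in $\mathbf w$ are harmless, since such moves then change nothing.

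For the $\ell_1$ ball~\eqref{eq:norm-ambiguity}, use the standard construction of \citet{Iyengar2005}. Add $\min\{\sigma_{sa},1-[\overline{\mathbf p}_{sa}]_{\sigma(1)}\}$ to coordinate $\sigma(1)$. Then remove the same total mass from coordinates $\sigma(S),\sigma(S-1),\ldots$, each down to zero at most, until the added amount has been removed. Again this depends only on $\sigma$.

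The step I expect to need the most care is optimality for the $\ell_1$ ball. I would prove it by LP duality on~\eqref{eq:tv-lift}, or by a direct exchange argument: any feasible $\mathbf p$ moves at most $\sigma_{sa}$ mass, and the cheapest way to use that budget is to move mass from the highest-index coordinates to the lowest. Both arguments use only the order of $\mathbf w$, which is exactly what~\eqref{eq:order-compatible} requires.

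\textbf{Costs and encoding.} For a given $\mathbf v$, sort $\mathbf v$ once in $O(S\log S)$ comparisons. Each greedy minimizer then costs $O(S)$ operations, and evaluating $r+\gamma\mathbf p^\top\mathbf v$ costs another $O(S)$. So $S$ pairs cost $O(S\log S+S^2)=O(S^2)$, and all $A$ pairs, sharing the one sort, cost $O(S\log S+AS)=O(AS)$ since $A\ge S$.

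The returned $\mathbf p^\sigma$ has coordinates equal to entries of $\overline{\mathbf p}_{sa}$, $\mathbf l_{sa}$, $\mathbf h_{sa}$, or $\sigma_{sa}$, or differences and partial sums of $O(S)$ of them. Their bit length is therefore polynomial in $L$ and independent of $\mathbf v$. The objective values have bit length polynomial in $L$ and in the encoding length of $\mathbf v$.

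For weighted $\ell_\infty$ balls, the bounds $\mathbf l_{sa},\mathbf h_{sa}$ are computed once from the input by $O(S)$ operations per pair. This adds $O(AS)$ operations overall and only polynomial bit length.
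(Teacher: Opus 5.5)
Your proposal is correct and follows essentially the paper's route. Along the order $i_j=\sigma(j)$, your greedy points have partial sums $\min\{1,\sum_{j\le k}[\overline{\mathbf p}_{sa}]_{i_j}+\sigma_{sa}\}$ and $\min\{\sum_{j\le k}[\mathbf h_{sa}]_{i_j},1-\sum_{j>k}[\mathbf l_{sa}]_{i_j}\}$, so they are exactly the paper's distributions. Your cost accounting (one shared sort, then $O(S)$ per pair), your encoding argument, and your substitution into Theorem~\ref{thm:order-pi} also match the paper. The one step to tighten is your exchange argument: it rests on every feasible $\mathbf p$ having partial sums dominated by those of the greedy point. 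The paper proves that dominance directly and turns it into $\mathbf p^\top\mathbf v\ge\widehat{\mathbf p}^\top\mathbf v$ by summation by parts, which also handles ties automatically.
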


\begin{proof}
We verify Condition~\ref{cond:order-compatible}, the bounds
in~\eqref{eq:norm-local-costs}, and the encoding requirements of
Theorem~\ref{thm:residual-criterion}.

\vspace{5pt}
\noindent\textbf{Minimizing transition distributions.}
For a fixed $(s,a)$, we omit these subscripts and list the states
as $i_1,\ldots,i_{S}$ with
$[\mathbf v]_{i_1}\le\cdots\le[\mathbf v]_{i_{S}}$.
We resolve ties by a fixed state ordering.
For $1\le k\le S$, we define $\widehat{\mathbf p}$ by
\begin{equation*}
  \sum_{j=1}^k[\widehat{\mathbf p}]_{i_j}
  =\begin{cases}
    \min\left\{1,\sum_{j=1}^k[\overline{\mathbf p}]_{i_j}+\sigma\right\},
       &\ell_1\text{ ball},\\
    \min\left\{\sum_{j=1}^k[\mathbf h]_{i_j},
                    1-\sum_{j=k+1}^{S}[\mathbf l]_{i_j}\right\},
       &\text{interval set}.
  \end{cases}
\end{equation*}
The cumulative sums are nonnegative, nondecreasing, and equal to one at $k=S$,
so $\widehat{\mathbf p}\in\Delta(\mathcal S)$.
For an $\ell_1$ ball, subtracting consecutive cumulative sums gives
$[\widehat{\mathbf p}]_{i_j}\le[\overline{\mathbf p}]_{i_j}$ for $j\ge2$,
since capping both sums at one cannot increase their difference. Hence
$\|\widehat{\mathbf p}-\overline{\mathbf p}\|_1
=2([\widehat{\mathbf p}]_{i_1}-[\overline{\mathbf p}]_{i_1})\le2\sigma$.
For an interval set, differences of consecutive cumulative sums lie
between the corresponding lower and upper bounds, giving
$\mathbf l\le\widehat{\mathbf p}\le\mathbf h$.
These sorting constructions are inspired by the methods of
\citet{HoPetrikWiesemann2021} for $\ell_1$ balls and
\citet{GivanLeachDean2000} for interval sets.
Every feasible $\mathbf p$ satisfies
$\sum_{j=1}^k[\mathbf p]_{i_j}\le
\sum_{j=1}^k[\widehat{\mathbf p}]_{i_j}$ for $1\le k<S$. For the ball,
$\sum_{j=1}^k([\mathbf p]_{i_j}-[\overline{\mathbf p}]_{i_j})
\le\frac{\|\mathbf p-\overline{\mathbf p}\|_1}{2}\le\sigma$.
For intervals, the upper bounds on the first $k$ coordinates and
the lower bounds on the remaining coordinates give the two terms
in the minimum.
\looseness=-1 Summation by parts therefore gives
\begin{equation*}
  \mathbf p^\top\mathbf v
  =[\mathbf v]_{i_{S}}-\sum_{k=1}^{S-1}([\mathbf v]_{i_{k+1}}-[\mathbf v]_{i_k})
                          \sum_{j=1}^k[\mathbf p]_{i_j}
  \ge\widehat{\mathbf p}^\top\mathbf v.
\end{equation*}
The construction depends only on the local input and the state
ordering, and the inequality holds for every vector with that
ordering, including ties. This proves
Condition~\ref{cond:order-compatible}.

\vspace{5pt}
\noindent\textbf{Operation counts and encoding lengths.}
At a common value vector $\mathbf v$, sorting all $S$ states takes
$O(S\log(2+S))$ operations and comparisons. For each $(s,a)$,
we compute the cumulative sums, their successive differences,
and $r_{sa}+\gamma\widehat{\mathbf p}_{sa}^\top\mathbf v$ in $O(S)$
operations. Since $A\ge S$, we obtain
$O(S\log(2+S)+S^2)=O(S^2)$ for a policy's $S$ pairs and
$O(S\log(2+S)+AS)=O(AS)$ for all $A$ pairs, proving the cost bounds
in~\eqref{eq:norm-local-costs}. Theorem~\ref{thm:order-pi} gives
$\mathsf G=\max\{0,2S-3\}$.

Every returned distribution is obtained by sums, differences, and
minima of the original probabilities, radii, and endpoints.
Its coordinates therefore have a common input-dependent denominator
of polynomial bit length, independently of $\mathbf v$.
The rational numbers used to construct these distributions
and evaluate their objectives have bit length polynomial in $L$ and the
encoding length of $\mathbf v$. These are the encoding requirements
of Theorem~\ref{thm:residual-criterion}.
Substituting~\eqref{eq:norm-local-costs} into
Theorem~\ref{thm:residual-criterion} gives
\begin{equation}
  O\left(
    [1+(A-S)T_\gamma]
    \left[(1+\max\{0,2S-3\}T_\gamma)S^3+AS\right]
  \right).
  \label{eq:norm-general-bound}
\end{equation}
For fixed $\gamma$, this gives~\eqref{eq:norm-full-bound}.
\end{proof}

For fixed $\gamma$,
the iteration bounds of
\citet[Theorems~1--2]{AsadiChatterjeeMontaseriShafiee2026} yield
\mbox{$O(A[S^6\log^2(2+S)+AS])$} arithmetic operations and comparisons
for $\ell_1$ and $\ell_\infty$ RMDPs.
Corollary~\ref{cor:norm-planning} improves this bound to
\mbox{$O((A-S+1)[S^4+AS])$} and applies more generally to interval
uncertainty, which includes weighted $\ell_\infty$ balls as special
cases.

\subsection{Weighted \texorpdfstring{$\ell_1$}{L1} and Wasserstein RMDPs}
\label{subsec:transport-corollaries}

To analyze weighted $\ell_1$ and Wasserstein uncertainty together,
we first show that the weighted $\ell_1$ ball~\eqref{eq:weighted-l1-set}
admits the same transportation form as the Wasserstein
set~\eqref{eq:finite-wasserstein-lift}.

\begin{lemma}[Transportation representation of weighted $\ell_1$ uncertainty]
\label{lem:weighted-transport}
For the weighted $\ell_1$ ball $\mathcal U_{sa}$ in~\eqref{eq:weighted-l1-set},
let $[\mathbf C_{sa}]_{ii}=0$ and
$[\mathbf C_{sa}]_{ij}=[\mathbf w_{sa}]_i+[\mathbf w_{sa}]_j$ for $i\ne j$.
Then $\mathcal M_{sa}=\{r_{sa}\}\times\mathcal U_{sa}$ has the extended representation
\begin{equation*}
  \mathcal Q_{sa}
  =\left\{(r_{sa},\mathbf p,\boldsymbol\Pi):
    \mathbf p\in\Delta(\mathcal S),\quad \boldsymbol\Pi\ge0,\quad
    \boldsymbol\Pi\mathbf1=\overline{\mathbf p}_{sa},\quad
    \boldsymbol\Pi^\top\mathbf1=\mathbf p,\quad
    \langle\mathbf C_{sa},\boldsymbol\Pi\rangle\le\rho_{sa}
  \right\}.
\end{equation*}
\end{lemma}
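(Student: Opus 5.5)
We need to show that the projection of $\mathcal Q_{sa}$ onto $\mathbf p$ is exactly $\mathcal U_{sa}$. Equivalently, for $\mathbf p\in\Delta(\mathcal S)$ we must show
\[
\min\Bigl\{\langle\mathbf C_{sa},\boldsymbol\Pi\rangle:\ \boldsymbol\Pi\ge0,\ \boldsymbol\Pi\mathbf1=\overline{\mathbf p}_{sa},\ \boldsymbol\Pi^\top\mathbf1=\mathbf p\Bigr\}
=\sum_i[\mathbf w_{sa}]_i\,\bigl|[\mathbf p]_i-[\overline{\mathbf p}_{sa}]_i\bigr| .
\]
Once this identity holds, the budget constraint $\langle\mathbf C_{sa},\boldsymbol\Pi\rangle\le\rho_{sa}$ can be satisfied by some feasible coupling exactly when the weighted $\ell_1$ constraint in~\eqref{eq:weighted-l1-set} holds, because the minimum is attained on a nonempty compact set. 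The fixed reward component is then carried along trivially, giving the claimed representation of $\mathcal M_{sa}=\{r_{sa}\}\times\mathcal U_{sa}$. We drop the subscripts $sa$ below.

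\textbf{Lower bound.} Take any feasible coupling $\boldsymbol\Pi$. Since $[\mathbf C]_{ii}=0$, the diagonal contributes nothing, and
\[
\langle\mathbf C,\boldsymbol\Pi\rangle=\sum_{i\ne j}\bigl([\mathbf w]_i+[\mathbf w]_j\bigr)[\boldsymbol\Pi]_{ij}
=\sum_i[\mathbf w]_i\Bigl(\sum_{j\ne i}[\boldsymbol\Pi]_{ij}+\sum_{j\ne i}[\boldsymbol\Pi]_{ji}\Bigr).
\]
For each $i$, the first inner sum is the mass leaving $i$ and equals $[\overline{\mathbf p}]_i-[\boldsymbol\Pi]_{ii}$. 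The second is the mass entering $i$ and equals $[\mathbf p]_i-[\boldsymbol\Pi]_{ii}$. Because $[\boldsymbol\Pi]_{ii}\le\min\{[\overline{\mathbf p}]_i,[\mathbf p]_i\}$, their sum is at least $[\overline{\mathbf p}]_i+[\mathbf p]_i-2\min\{[\overline{\mathbf p}]_i,[\mathbf p]_i\}=|[\mathbf p]_i-[\overline{\mathbf p}]_i|$. Multiplying by $[\mathbf w]_i\ge0$ and summing over $i$ gives the lower bound.

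\textbf{Upper bound.} Define an explicit coupling. Set $[\boldsymbol\Pi]_{ii}=\min\{[\overline{\mathbf p}]_i,[\mathbf p]_i\}$. Let the surpluses be $\alpha_i=([\overline{\mathbf p}]_i-[\mathbf p]_i)^+$ and the deficits be $\beta_j=([\mathbf p]_j-[\overline{\mathbf p}]_j)^+$. Both $\overline{\mathbf p}$ and $\mathbf p$ lie in $\Delta(\mathcal S)$, so $\sum_i\alpha_i=\sum_j\beta_j=:\delta$. If $\delta=0$, take $\boldsymbol\Pi=\operatorname{diag}(\mathbf p)$, which has cost zero. If $\delta>0$, set $[\boldsymbol\Pi]_{ij}=\alpha_i\beta_j/\delta$ for $i\ne j$. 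A state cannot have both positive surplus and positive deficit, so $\alpha_i\beta_i=0$ and no off-diagonal mass is lost on the diagonal. The row and column sums then check directly: row $i$ sums to $\min\{[\overline{\mathbf p}]_i,[\mathbf p]_i\}+\alpha_i=[\overline{\mathbf p}]_i$, and column $j$ sums to $\min\{[\overline{\mathbf p}]_j,[\mathbf p]_j\}+\beta_j=[\mathbf p]_j$. In this coupling every state either only sends mass or only receives mass, so the two inequalities in the lower-bound argument are equalities. Hence the cost is exactly $\sum_i[\mathbf w]_i(\alpha_i+\beta_i)=\sum_i[\mathbf w]_i|[\mathbf p]_i-[\overline{\mathbf p}]_i|$.

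\textbf{Main obstacle.} The only step needing care is the bookkeeping in the upper bound: the product coupling must not place off-diagonal mass on the diagonal, and the surplus and deficit totals must match. Both follow from $\alpha_i\beta_i=0$ and from both vectors summing to one. Combining the two bounds yields the identity, and therefore the equivalence of the constraint sets.
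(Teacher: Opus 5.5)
Your proof is correct and follows the paper's argument essentially verbatim. The lower bound comes from rewriting the cost as $\sum_i[\mathbf w]_i([\overline{\mathbf p}]_i+[\mathbf p]_i-2[\boldsymbol\Pi]_{ii})$ and using $[\boldsymbol\Pi]_{ii}\le\min\{[\overline{\mathbf p}]_i,[\mathbf p]_i\}$. The upper bound uses the same coupling, with diagonal $\min\{[\overline{\mathbf p}]_i,[\mathbf p]_i\}$ and off-diagonal entries $\alpha_i\beta_j/\delta$ (the paper writes $\delta=\|\mathbf p-\overline{\mathbf p}\|_1/2$), and the diagonal coupling when $\mathbf p=\overline{\mathbf p}$.
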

\begin{proof}
Fix $\mathbf p\in\Delta(\mathcal S)$.
For $\boldsymbol\Pi\ge0$ with
$\boldsymbol\Pi\mathbf1=\overline{\mathbf p}_{sa}$ and
$\boldsymbol\Pi^\top\mathbf1=\mathbf p$,
the cost is at least the weighted $\ell_1$ distance,
\begin{equation*}
  \langle\mathbf C_{sa},\boldsymbol\Pi\rangle
  =\sum_i[\mathbf w_{sa}]_i
    \bigl([\overline{\mathbf p}_{sa}]_i+[\mathbf p]_i
          -2[\boldsymbol\Pi]_{ii}\bigr)
  \ge\sum_i[\mathbf w_{sa}]_i
    |[\mathbf p]_i-[\overline{\mathbf p}_{sa}]_i|.
\end{equation*}
For $\mathbf p\ne\overline{\mathbf p}_{sa}$, we attain equality with
\begin{equation*}
  [\boldsymbol\Pi]_{ij}=
  \begin{cases}
    \min\{[\overline{\mathbf p}_{sa}]_i,[\mathbf p]_i\}, & i=j,\\[4pt]
    \dfrac{
      2\max\{[\overline{\mathbf p}_{sa}]_i-[\mathbf p]_i,0\}
       \max\{[\mathbf p]_j-[\overline{\mathbf p}_{sa}]_j,0\}
    }{\|\mathbf p-\overline{\mathbf p}_{sa}\|_1}, & i\ne j.
  \end{cases}
\end{equation*}
Since $\mathbf p$ and $\overline{\mathbf p}_{sa}$ both sum to one,
$\sum_i\max\{[\overline{\mathbf p}_{sa}]_i-[\mathbf p]_i,0\}
 =\sum_i\max\{[\mathbf p]_i-[\overline{\mathbf p}_{sa}]_i,0\}
 =\frac{\|\mathbf p-\overline{\mathbf p}_{sa}\|_1}{2}$.
The product in the off-diagonal formula is zero when $i=j$,
so summing the entries gives
\begin{equation*}
  \left\{
  \begin{aligned}
    \sum_j[\boldsymbol\Pi]_{ij}
      &=\min\{[\overline{\mathbf p}_{sa}]_i,[\mathbf p]_i\}
        +\max\{[\overline{\mathbf p}_{sa}]_i-[\mathbf p]_i,0\}
       =[\overline{\mathbf p}_{sa}]_i,\\
    \sum_i[\boldsymbol\Pi]_{ij}
      &=\min\{[\overline{\mathbf p}_{sa}]_j,[\mathbf p]_j\}
        +\max\{[\mathbf p]_j-[\overline{\mathbf p}_{sa}]_j,0\}
       =[\mathbf p]_j.
  \end{aligned}
  \right.
\end{equation*}
For $\mathbf p=\overline{\mathbf p}_{sa}$, we set
$[\boldsymbol\Pi]_{ii}=[\mathbf p]_i$ and
$[\boldsymbol\Pi]_{ij}=0$ for $i\ne j$.
Hence $\mathbf p$ lies in the weighted $\ell_1$ ball~\eqref{eq:weighted-l1-set}
exactly when there exists such a matrix $\boldsymbol\Pi$ with
$\langle\mathbf C_{sa},\boldsymbol\Pi\rangle\le\rho_{sa}$.
\end{proof}

\begin{corollary}[Complexity bounds for weighted $\ell_1$ and Wasserstein RMDPs]
\label{cor:budget-models}
Fix a rational discount $\gamma\in[0,1)$.
Suppose rewards are fixed and either all local transition sets are
weighted $\ell_1$ sets as in~\eqref{eq:weighted-l1-set}, or all are
Wasserstein sets represented by~\eqref{eq:finite-wasserstein-lift}.
For Algorithm~\ref{alg:exact-policy-iteration} with
Line~\ref{line:policy-evaluation} implemented by
Algorithm~\ref{alg:nature-policy-iteration}, the parameters in
Theorem~\ref{thm:residual-criterion} can be chosen as
\begin{equation}
  \mathsf C_{\rm in}=O(S^2\log(2+S)),\qquad
  \mathsf C_{\rm out}=O(AS\log(2+S)),\qquad
  \mathsf G=O(S^4).
  \label{eq:budget-local-parameters}
\end{equation} 
Hence, Algorithm~\ref{alg:exact-policy-iteration} returns $\mathbf v^\star$, all sets $\mathcal A_s^\star$, and an
optimal deterministic stationary policy using
\begin{equation}
  O\left((A-S+1)[S^7+AS\log(2+S)]\right)
  \label{eq:budget-full-bound}
\end{equation} 
arithmetic operations and comparisons.
Every intermediate rational has bit length polynomial in $L$.
\end{corollary}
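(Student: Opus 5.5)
The plan is to verify the three parameter bounds in~\eqref{eq:budget-local-parameters} and then substitute them into Theorem~\ref{thm:residual-criterion}. For fixed $\gamma$, $T_\gamma$ is constant, so~\eqref{eq:residual-scale-total-cost} becomes
\[
O\bigl((A-S+1)[S^4(S^3+S^2\log(2+S))+AS\log(2+S)]\bigr)
=O\bigl((A-S+1)[S^7+AS\log(2+S)]\bigr),
\]
which is~\eqref{eq:budget-full-bound}. By Lemma~\ref{lem:weighted-transport}, both cases share the transportation form~\eqref{eq:finite-wasserstein-lift} with a cost matrix $\mathbf C$: it is $d(i,j)^\ell$ in the Wasserstein case and $[\mathbf w]_i+[\mathbf w]_j$ off the diagonal in the weighted case. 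I therefore treat them together.

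\textbf{Inner minimization and costs.} At a fixed $\mathbf v$, the problem $\min_{\mathbf p\in\mathcal U_{sa}}\mathbf p^\top\mathbf v$ becomes the LP $\min\sum_{ij}[\boldsymbol\Pi]_{ij}[\mathbf v]_j$ subject to $\boldsymbol\Pi\ge0$, $\boldsymbol\Pi\mathbf1=\overline{\mathbf p}$, and $\langle\mathbf C,\boldsymbol\Pi\rangle\le\rho$. Since the column marginals are free, this is a fractional multiple-choice knapsack with a single budget row. Each source $i$ spreads its mass $[\overline{\mathbf p}]_i$ along the lower convex hull of the points $\{([\mathbf C]_{ij},[\mathbf v]_j)\}_j$, starting from the zero-cost point $j=i$. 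A greedy procedure then spends the budget on hull edges in order of decreasing ratio $([\mathbf v]_{j}-[\mathbf v]_{k})/([\mathbf C]_{ik}-[\mathbf C]_{ij})$, and at most one edge is used fractionally.

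For the weighted $\ell_1$ case, the structure $[\mathbf C]_{ij}=[\mathbf w]_i+[\mathbf w]_j$ reduces the per-pair work to sorting and a linear scan, as in \citet{HoPetrikWiesemann2018}. This gives $O(S\log(2+S))$ per pair, hence $\mathsf C_{\rm in}=O(S^2\log(2+S))$ and $\mathsf C_{\rm out}=O(AS\log(2+S))$.

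For the Wasserstein case, $\mathbf C$ is shared across all pairs. I would compute the hulls once per common $\mathbf v$ in $O(S^2\log(2+S))$, which is within $\mathsf C_{\rm in}$ and, since $A\ge S$, within $\mathsf C_{\rm out}$. I would then handle each pair by a weighted-median selection over the pooled edge ratios, with weights $[\overline{\mathbf p}]_i$, instead of a full sort. Whether this brings the per-pair cost down to $O(S\log(2+S))$ is a point I would still need to settle.

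\textbf{Encoding.} The returned vertex solutions come from a basis of a fixed input-dependent system that involves only $\overline{\mathbf p}$, $\mathbf C$ and $\rho$. Their encoding length is therefore polynomial in $L$ and independent of $\mathbf v$, which is the encoding condition required by Theorem~\ref{thm:residual-criterion}.

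\textbf{Bounding $\mathsf G$; the main obstacle.} Order compatibility fails here, because the minimizer depends on ratio comparisons and not only on the coordinate order. I would adapt the proof of Theorem~\ref{thm:order-pi} by replacing coordinate differences with a finite family $\mathcal L$ of integer-scaled linear functionals. Their signs at $\mathbf v$ determine the greedy solution. They consist of the differences $[\mathbf v]_j-[\mathbf v]_k$ and the cross-multiplied ratio comparisons
\[
\bigl([\mathbf C]_{ik'}-[\mathbf C]_{ij'}\bigr)\bigl([\mathbf v]_j-[\mathbf v]_k\bigr)
-\bigl([\mathbf C]_{ik}-[\mathbf C]_{ij}\bigr)\bigl([\mathbf v]_{j'}-[\mathbf v]_{k'}\bigr).
\]
I would define $g_t$ as the largest value $|\ell(\mathbf v^\pi)|/\|\ell\|_1$ over those $\ell\in\mathcal L$ whose sign at $\mathbf u^{\tau^t}$ disagrees with its sign at $\mathbf v^\pi$. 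Then:
\begin{itemize}
\item the bound $g_t\le E_t$ holds because $|\ell(\mathbf u^{\tau^t}-\mathbf v^\pi)|\le\|\ell\|_1E_t$;
\item if no sign disagrees, the minimizer at $\mathbf u^{\tau^t}$ is also a minimizer at a nearby correctly ordered surrogate $\mathbf w^t$ with $\|\mathbf v^\pi-\mathbf w^t\|_\infty\lesssim g_t$, and this yields $E_{t+1}\le\gamma g_t/(1-\gamma)$, mirroring the role of $\mathbf w^t$ in the proof of Theorem~\ref{thm:order-pi};
\item every positive $g_t$ lies in the fixed finite set $\{|\ell(\mathbf v^\pi)|/\|\ell\|_1:\ell\in\mathcal L\}$, so $\mathsf G\le|\mathcal L|$.
\end{itemize}
The delicate steps are building $\mathbf w^t$ for the ratio functionals, where a constant factor in the second bullet may have to be absorbed into $T_\gamma$, and trimming $\mathcal L$ to $O(S^4)$ members. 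For the trimming I would try to show that only comparisons between edges sharing a source, together with the single fractional-edge comparison, affect the optimal basis, which gives $O(S\cdot S^3)$ relevant functionals.
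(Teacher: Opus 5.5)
\paragraph{Gap in the bound $\mathsf G=O(S^4)$.}
This is the decisive step, and it is not established. The surrogate route you sketch breaks at its core. Your $g_t$ is the largest $\ell_\infty$-distance $|\ell(\mathbf v^\pi)|/\|\ell\|_1$ from $\mathbf v^\pi$ to a \emph{single} sign condition satisfied by $\mathbf u^{\tau^t}$. The surrogate, however, must satisfy \emph{all} of those sign conditions simultaneously, and it must lie within $c\,g_t$ of $\mathbf v^\pi$ for an absolute constant $c$.

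Passing from individual violations to the distance to their intersection is a Hoffman-type error bound. For pairwise differences, the lattice construction in Theorem~\ref{thm:order-pi} gives $c=1$. For ratio functionals the constant in general depends on the angles between the hyperplanes, hence on the entries of $\mathbf C$. A data-dependent factor absorbed into $T_\gamma$ would make the iteration count depend on $L$, which loses strong polynomiality.

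The family $\mathcal L$ is also not yet the right one:
\begin{itemize}
\item Both factors of your functional use the same source $i$, so hull edges of different rows are never compared. The greedy solution depends on exactly how those edges interleave.
\item Adding cross-row comparisons gives order $S^6$ functionals, and your trimming to $O(S^4)$ is only a hope.
\item For weighted $\ell_1$ sets, $\mathbf C_{sa}$ varies with the state, so the value sets must be united over the $S$ states of $\pi$. That costs a factor $S$ unless each state contributes only $O(S^3)$ values.
\end{itemize}

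\paragraph{How the paper gets $\mathsf G$.}
The paper avoids surrogates and bounds the gap directly. For the selected minimizer it takes $g$ to be the optimal value of the feasible-direction LP~\eqref{eq:weighted-direction-certificate} or~\eqref{eq:wasserstein-transfer-certificate}. The direction toward a $\mathbf v$-minimizer shows that $g$ dominates the gap. Optimality at $\mathbf u$ together with $\mathbf u\ge\mathbf v$ gives $g\le\|\mathbf u-\mathbf v\|_\infty$. Hence $g_t=\max_s g_{s,t}$ satisfies~\eqref{eq:unified-scale-bound} with no constant loss.

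This is also how your sign idea can be repaired. The optimal direction satisfies $\mathbf u^\top(\mathbf y-\mathbf x)\ge0>\mathbf v^\top(\mathbf y-\mathbf x)$, so it is itself a disagreeing functional.

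Finiteness then comes from the structure of that LP.
\begin{itemize}
\item For weighted $\ell_1$ sets, an optimal vertex has at most three positive coordinates. These are fixed by two mass equations and a tight budget row with coefficients $\pm[\mathbf w]_i$, giving at most $8S^3$ values per state.
\item For Wasserstein sets, LP duality gives $g=\min_{\lambda\ge0}\max_{z\in\mathcal I}f_z(\lambda)$. A one-dimensional Helly argument reduces this to $\max_{z,z'\in\mathcal I}\delta_{zz'}$, giving at most $2S^4$ values shared by all states.
\end{itemize}

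\paragraph{Gap in the Wasserstein cost bound.}
This step is also left open, and the weighted-median idea does not deliver it. Selection over the $O(S^2)$ pooled edge ratios costs order $S^2$ per pair, so $\mathsf C_{\rm out}=O(AS^2)$ rather than $O(AS\log(2+S))$.

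The missing observation is that the order of the breakpoints depends only on $\mathbf C$ and $\mathbf v$, not on $\overline{\mathbf p}$ or $\rho$. The paper sorts the breakpoints once per common $\mathbf v$. It stores the row-wise minimizing indices $j_i^k$ after each breakpoint in a persistent balanced tree, at total cost $O(S^2\log(2+S))$. For each pair it then binary-searches for the budget crossing, evaluating $\sum_i[\overline{\mathbf p}]_i[\mathbf C]_{i,j_i^k}$ in $O(S)$ per probe. This gives $O(S\log(2+S))$ per pair.
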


\begin{proof}[Proof of Corollary~\ref{cor:budget-models}]
We first bound $\mathsf C_{\rm in}$ and $\mathsf C_{\rm out}$.
For a given $\mathbf v\in\mathbb Q^S$, Lemma~\ref{lem:weighted-transport}
and the Wasserstein representation~\eqref{eq:finite-wasserstein-lift}
express the minimization of $\mathbf p^\top\mathbf v$ as
\begin{equation}
  \min_{\boldsymbol\Pi\ge0}
  \left\{\mathbf v^\top\boldsymbol\Pi^\top\mathbf1:
    \boldsymbol\Pi\mathbf1=\overline{\mathbf p},\quad
    \langle\mathbf C,\boldsymbol\Pi\rangle\le\rho\right\},
  \label{eq:transport-primal}
\end{equation}
where $\overline{\mathbf p}$ is the nominal distribution, $\rho$
is the budget, and $\mathbf C$ is the corresponding cost matrix.
The transition distribution is $\mathbf p=\boldsymbol\Pi^\top\mathbf1$.
We introduce a multiplier $\lambda\ge0$ for the budget constraint
$\langle\mathbf C,\boldsymbol\Pi\rangle\le\rho$. The resulting
Lagrangian dual is
\begin{equation}
  \max_{\lambda\ge0}
  \min_{\boldsymbol\Pi\ge0,\ \boldsymbol\Pi\mathbf1=\overline{\mathbf p}}
  \left\{\mathbf v^\top\boldsymbol\Pi^\top\mathbf1
    +\lambda\bigl(\langle\mathbf C,\boldsymbol\Pi\rangle-\rho\bigr)\right\}
  =\max_{\lambda\ge0}
  \Biggl\{
    \sum_{i\in\mathcal S}[\overline{\mathbf p}]_i
    \underbrace{\min_{j\in\mathcal S}
      \bigl\{[\mathbf v]_j+\lambda[\mathbf C]_{ij}\bigr\}}
      _{=:\,\psi_i(\lambda)}
    -\lambda\rho
  \Biggr\}.
  \label{eq:transport-dual}
\end{equation}
The equality follows by minimizing separately over each row, whose
nonnegative entries sum to $[\overline{\mathbf p}]_i$.
Each $\psi_i$ is piecewise-linear on $[0,\infty)$.
We construct a representation of these functions together with a
choice of minimizing indices. This computation depends only on
$\mathbf C$ and $\mathbf v$, so the representation can be reused
for different nominal distributions and budgets.
Given $\overline{\mathbf p}$ and $\rho$, we compute an optimal multiplier
and construct a nonnegative matrix $\boldsymbol\Pi$ with
$\boldsymbol\Pi\mathbf1=\overline{\mathbf p}$.
Its positive entries occur only at indices attaining the corresponding
row minima, and it satisfies
$\langle\mathbf C,\boldsymbol\Pi\rangle\le\rho$ and
$\lambda(\langle\mathbf C,\boldsymbol\Pi\rangle-\rho)=0$.
The objective value of $\boldsymbol\Pi$ in~\eqref{eq:transport-primal}
equals the dual optimum in~\eqref{eq:transport-dual},
so $\mathbf p=\boldsymbol\Pi^\top\mathbf1$ minimizes
$\mathbf p^\top\mathbf v$ over the transition set.
The following lemma bounds the cost of computing this minimum
and a minimizing distribution.

\begin{lemma}[Computing minimizing transitions]
\label{lem:budget-minimizers}
Let $\mathbf v\in\mathbb Q^S$. Assume that the nominal distributions,
budgets, and weights or costs below are rational and that the transition
sets are nonempty. The minimum of $\mathbf p^\top\mathbf v$ over each
transition set and a minimizing distribution can be computed with
the following cost bounds.
All costs count arithmetic operations and comparisons.
\begin{enumerate}[label=(\roman*),leftmargin=*]
  \item For weighted $\ell_1$ sets in~\eqref{eq:weighted-l1-set},
  constructing a representation of the functions $\psi_i$
  in~\eqref{eq:transport-dual} together with a choice of
  minimizing indices costs $O(S\log(2+S))$, using only
  $\mathbf w$ and $\mathbf v$. For each nominal distribution and
  budget, computing the minimum and a minimizing distribution
  then costs an additional $O(S)$.
  \item For Wasserstein sets represented
  by~\eqref{eq:finite-wasserstein-lift}, constructing a representation
  of the functions $\psi_i$ in~\eqref{eq:transport-dual} together with
  a choice of minimizing indices costs $O(S^2\log(2+S))$, using only
  $\mathbf C$ and $\mathbf v$. For each nominal distribution and
  budget, computing the minimum and a minimizing distribution
  then costs an additional $O(S\log(2+S))$.
\end{enumerate}
Each returned distribution has encoding length polynomial in the
encoding length of the nominal distribution, budget, and weights
or costs, independently of $\mathbf v$. All computed rationals
have bit length polynomial in the full input, including~$\mathbf v$.
\end{lemma}
\begin{proof}
See Appendix~\ref{app:weighted-oracle} for part~(i) and
Appendix~\ref{app:wasserstein-oracle} for part~(ii).
\end{proof}

Lemma~\ref{lem:budget-minimizers}(i) gives a cost of
$O(S\log(2+S))$ per state-action pair for weighted $\ell_1$ sets.
For Wasserstein sets, Lemma~\ref{lem:budget-minimizers}(ii) gives a cost of
$O(S^2\log(2+S))$ to represent the functions $\psi_i$
in~\eqref{eq:transport-dual} once, followed by
$O(S\log(2+S))$ per pair.
Summing over a policy's $S$ pairs or all $A$ pairs, and using $A\ge S$, gives
\begin{equation*}
  \mathsf C_{\rm in}=O(S^2\log(2+S)),\qquad
  \mathsf C_{\rm out}=O(AS\log(2+S)).
\end{equation*}
Forming the pair $(r_{sa},\mathbf p)$ and its objective value
$r_{sa}+\gamma\mathbf p^\top\mathbf v$ in~\eqref{eq:bellman-minimizers}
requires only $O(1)$ additional operations per pair because
$\mathbf p^\top\mathbf v$ is already computed.
Lemma~\ref{lem:budget-minimizers} also gives the required encoding bounds.

\vspace{5pt}
\noindent\textbf{Policy-evaluation error.}
It remains to verify~\eqref{eq:unified-scale-bound} with
$\mathsf G=O(S^4)$.
For $\mathbf u\ge\mathbf v$, the following lemma bounds the
objective gap at $\mathbf v$ of a distribution minimizing
$\mathbf p^\top\mathbf u$ over the transition set. The bound belongs to
a finite set determined by $\mathbf v$ and the weights or costs.

\begin{lemma}[Error bounds for minimizing transitions]
\label{lem:budget-error}
Fix $\mathbf v\in\mathbb R^S$ and either a weight vector
$\mathbf w\ge\mathbf0$ for~\eqref{eq:weighted-l1-set} or a
nonnegative rational cost matrix $\mathbf C$
for~\eqref{eq:finite-wasserstein-lift}.
There exists a finite set $\mathcal G\subseteq\mathbb R_{\ge0}$
such that, for every nonempty transition set $\mathcal U$ of the
respective form, every $\mathbf u\ge\mathbf v$, and every
$\mathbf p\in\operatorname*{arg\,min}_{\mathbf z\in\mathcal U}
\mathbf u^\top\mathbf z$, some $g\in\mathcal G$ satisfies
\begin{equation}
  0\le\mathbf v^\top\mathbf p
       -\min_{\mathbf z\in\mathcal U}\mathbf v^\top\mathbf z
  \le g\le\|\mathbf u-\mathbf v\|_\infty.
  \label{eq:budget-objective-gap}
\end{equation}
The set $\mathcal G$ contains at most
(i)~$8S^3$ positive numbers for weighted $\ell_1$ sets and
(ii)~$2S^4$ for Wasserstein sets.
\end{lemma}
\begin{proof}
See Appendix~\ref{app:weighted-progress} for part~(i) and
Appendix~\ref{app:wasserstein-progress} for part~(ii).
\end{proof}

For $\gamma>0$, we fix $\pi$.
In Algorithm~\ref{alg:nature-policy-iteration}, Line~\ref{line:pair-update},
the distribution in $\tau^{t+1}$ at state $s$ minimizes
$\mathbf p^\top\mathbf u^{\tau^t}$ over $\mathcal U_{s,\pi(s)}$.
Since $\mathbf u^{\tau^t}\ge\mathbf v^\pi$
by~\eqref{eq:residual-sandwich}, we apply
Lemma~\ref{lem:budget-error} at each state with
$\mathbf u=\mathbf u^{\tau^t}$ and $\mathbf v=\mathbf v^\pi$.
At each state $s$, the lemma supplies a finite set $\mathcal G_s$
and numbers $g_{s,t}\in\mathcal G_s$ satisfying
$0\le g_{s,t}\le E_t=\|\mathbf u^{\tau^t}-\mathbf v^\pi\|_\infty$.
Because rewards are fixed, the Bellman equation for $\mathbf v^\pi$
and the residual definition~\eqref{eq:nature-residual} give
\begin{equation*}
  0\le[\mathbf d^{\tau^{t+1}}]_s
  =\gamma\Bigl(
    [\mathbf P^{\tau^{t+1}}]_{s,:}\mathbf v^\pi
    -\min_{\mathbf p\in\mathcal U_{s,\pi(s)}}\mathbf p^\top\mathbf v^\pi
  \Bigr)
  \le\gamma g_{s,t},\qquad \forall s\in\mathcal S.
\end{equation*}
Taking $g_t=\max_s g_{s,t}$ and applying~\eqref{eq:residual-sandwich}
gives the comparison inequalities in~\eqref{eq:unified-scale-bound},
\begin{equation*}
  0\le g_t\le E_t,\qquad
  E_{t+1}\le\frac{\|\mathbf d^{\tau^{t+1}}\|_\infty}{1-\gamma}
  \le\frac{\gamma g_t}{1-\gamma}.
\end{equation*}
For weighted $\ell_1$ sets, Lemma~\ref{lem:budget-error}(i) gives
at most $8S^3$ positive numbers in each $\mathcal G_s$, hence at
most $8S^4$ in $\bigcup_s\mathcal G_s$.
For Wasserstein sets, $\mathbf v^\pi$ and $\mathbf C$ are the
same at every state, so Lemma~\ref{lem:budget-error}(ii) allows
$\mathcal G_s=\mathcal G$ for all $s$, with at most $2S^4$
positive numbers in $\mathcal G$.
In both cases, $\bigcup_s\mathcal G_s$ is independent of $t$
and contains $g_t=\max_s g_{s,t}$.
Hence there are $O(S^4)$ distinct integers
$\lfloor\log_2g_t\rfloor$ for $g_t>0$.
Setting $g_t=0$ after termination
verifies~\eqref{eq:unified-scale-bound} with $\mathsf G=O(S^4)$.
For $\gamma=0$, fixed rewards give $E_t=0$, so we take $g_t=0$.

\vspace{5pt}
\noindent\textbf{Total cost.}
Substituting~\eqref{eq:budget-local-parameters} into
Theorem~\ref{thm:residual-criterion} and using
$S^2\log(2+S)=O(S^3)$, we obtain the following bound on arithmetic
operations and comparisons,
\begin{equation}
  O\left(
    [1+(A-S)T_\gamma]
    \{(1+S^4T_\gamma)S^3+AS\log(2+S)\}
  \right).
  \label{eq:budget-general-bound}
\end{equation}
For fixed $\gamma$, this is~\eqref{eq:budget-full-bound}.
\end{proof}

\subsection{Turn-based stochastic games}
\label{subsec:structured-games}

We extend the analysis of Theorem~\ref{thm:residual-criterion}
to turn-based stochastic games whose reward-transition choices
have polyhedral descriptions.
The state partition and discounted objective are as in
Example~\ref{ex:turn-based-games}. At state $s$, the acting player
selects $(r,\mathbf p,\mathbf u)$ from
\begin{equation}
  \mathcal Q_s
  =\{\mathbf q=(r,\mathbf p,\mathbf u):
       \mathbf D_s\mathbf q\le\mathbf b_s,\quad
       \mathbf E_s\mathbf q=\mathbf f_s\}.
  \label{eq:implicit-game-model}
\end{equation}
As in~\eqref{eq:local-ambiguity}, the constraint matrices, right-hand sides,
and discount are rational, and the
reward-transition projection $\mathcal M_s$ is a nonempty compact
subset of $\mathbb R\times\Delta(\mathcal S)$. The Bellman operator is
\begin{equation}
  \mathscr T_s(\mathbf v)=
  \left\{\begin{aligned}
    &\max_{(r,\mathbf p,\mathbf u)\in\mathcal Q_s}
       \{r+\gamma\mathbf p^\top\mathbf v\},
       &&s\in\mathcal S_{\max},\\
    &\min_{(r,\mathbf p,\mathbf u)\in\mathcal Q_s}
       \{r+\gamma\mathbf p^\top\mathbf v\},
       &&s\in\mathcal S_{\min}.
  \end{aligned}\right.
  \label{eq:implicit-game-bellman}
\end{equation}
The game value $\mathbf v^\star$ is the unique fixed point
of $\mathscr T$, and
optimizers at $\mathbf v^\star$ give stationary optimal strategies
for both players.
For each state $s$, we write $n_s$ and $m_s$
for the number of variables and the total number
of inequality and equality constraints %
in~\eqref{eq:implicit-game-model}, respectively. We set
$N=S+\sum_s(n_s+m_s)$ and write $L$ for the total encoding length
of the rational input.

A stationary maximizing strategy $\pi$ selects a pair
$\pi(s)\in\mathcal M_s$ at each $s\in\mathcal S_{\max}$.
Fixing these pairs in~\eqref{eq:implicit-game-bellman}
defines the operator $\mathscr T_\pi$, whose unique fixed point
is $\mathbf v^\pi$. To compute $\mathbf v^\pi$, we apply
Algorithm~\ref{alg:nature-policy-iteration}, replacing its choice set
$\mathcal M_{s,\pi(s)}$ by $\{\pi(s)\}$ for $s\in\mathcal S_{\max}$
and by $\mathcal M_s$ for $s\in\mathcal S_{\min}$.
We write $\tau(s)$ for the reward-transition pair selected at state $s$
by $\tau$, so $\tau^t(s)=\pi(s)$ for $s\in\mathcal S_{\max}$
during this evaluation. Algorithm~\ref{alg:game-strategy-iteration}
combines these evaluations with maximizing-strategy updates.
We use $k$ for its iterations and $t$ for the iterations of
Algorithm~\ref{alg:nature-policy-iteration} within an evaluation.

\begin{algorithm}[t]
\caption{Strategy iteration for turn-based stochastic games}
\label{alg:game-strategy-iteration}
\begin{algorithmic}[1]
  \STATE Choose $\displaystyle
    \pi^0(s)\in\operatorname*{arg\,max}_{(r,\mathbf p)\in\mathcal M_s}r$,
    $\forall s\in\mathcal S_{\max}$.
    \label{line:game-initialization}
  \FOR{$k=0,1,\ldots$}
    \STATE Choose $\tau^0$ satisfying
      \label{line:game-evaluation-initialization}
      \[
        \left\{\begin{aligned}
          \tau^0(s)&=\pi^k(s), &&s\in\mathcal S_{\max},\\
          \tau^0(s)&\in\operatorname*{arg\,min}_{(r,\mathbf p)\in\mathcal M_s}r,
            &&s\in\mathcal S_{\min}.
        \end{aligned}\right.
      \]
    \STATE $(\mathbf v^{\pi^k},\tau)\gets{}$
      Algorithm~\ref{alg:nature-policy-iteration}$(\pi^k,\tau^0)$.
      \label{line:game-evaluation}
    \STATE Choose $\displaystyle
      \pi^{k+1}(s)\in\operatorname*{arg\,max}_{(r,\mathbf p)\in\mathcal M_s}
      \{r+\gamma\mathbf p^\top\mathbf v^{\pi^k}\}$,
      and record the maximum value $\mathscr T_s(\mathbf v^{\pi^k})$
      for each $s\in\mathcal S_{\max}$.
      \label{line:game-maximization}
      \label{line:game-strategy-update}
    \STATE \textbf{if} $\mathscr T_s(\mathbf v^{\pi^k})=[\mathbf v^{\pi^k}]_s$
      for all $s\in\mathcal S_{\max}$ \textbf{then return}
      $\mathbf v^{\pi^k}$, $\pi^k$, and $\{\tau(s)\}_{s\in\mathcal S_{\min}}$.
      \label{line:game-optimality-check}
  \ENDFOR
\end{algorithmic}
\end{algorithm}

At minimizing states,
$\mathscr T_s(\mathbf v^{\pi^k})=[\mathbf v^{\pi^k}]_s$
already holds by the definition of $\mathbf v^{\pi^k}$.
Thus Algorithm~\ref{alg:game-strategy-iteration},
Line~\ref{line:game-optimality-check} tests
$\mathscr T(\mathbf v^{\pi^k})=\mathbf v^{\pi^k}$,
and the choice in Line~\ref{line:game-strategy-update} satisfies
$\mathscr T_{\pi^{k+1}}(\mathbf v^{\pi^k})
=\mathscr T(\mathbf v^{\pi^k})$.

\paragraph{Cost and encoding assumptions.}
We use bounds
$\mathsf C_{\min}\ge\max\{1,|\mathcal S_{\min}|\}$ and
$\mathsf C_{\max}\ge\max\{1,|\mathcal S_{\max}|\}$ on the total number of
arithmetic operations and comparisons required to compute optimal
pairs and objective values for all minimizations and all maximizations
in~\eqref{eq:implicit-game-bellman}, respectively.
These bounds hold for every rational $\mathbf v$ and include all
required computations. The returned pairs have encoding length
polynomial in $L$, independently of $\mathbf v$.
All computed rationals have bit length
polynomial in $L$ and the encoding length of $\mathbf v$.

\begin{theorem}[Complexity of strategy iteration for turn-based stochastic games]
\label{thm:game-progress-criterion}
Let $\gamma\in[0,1)$ be rational. Assume the cost
and encoding requirements for~\eqref{eq:implicit-game-bellman}, with bounds
$\mathsf C_{\min}$ and $\mathsf C_{\max}$.
Suppose integers
$\mathsf G_{\min},\mathsf G_{\max}\ge0$ satisfy the following conditions
for Algorithm~\ref{alg:game-strategy-iteration}.
\begin{enumerate}[label=(\roman*),leftmargin=*]
\item For each strategy $\pi$ evaluated in
Algorithm~\ref{alg:game-strategy-iteration},
Line~\ref{line:game-evaluation}, let
$e_t^\pi=\|\mathbf u^{\tau^t}-\mathbf v^\pi\|_\infty$ be the error at
iteration $t$ of Algorithm~\ref{alg:nature-policy-iteration}, with
$e_t^\pi=0$ after that evaluation terminates.
Each run admits a nonnegative sequence $(g_t^\pi)_{t\ge0}$ satisfying
\begin{equation}
  \left\{\begin{aligned}
    &g_t^\pi\le e_t^\pi,\qquad
      e_{t+1}^\pi\le\frac{\gamma g_t^\pi}{1-\gamma},\quad t\ge0,\\
    &\left|\{\lfloor\log_2g_t^\pi\rfloor:
                       t\ge0,\ g_t^\pi>0\}\right|
       \le\mathsf G_{\min}.
  \end{aligned}\right.
  \label{eq:game-inner-scale-bound}
\end{equation}
\item For the maximizing strategies $\pi^k$ in
Algorithm~\ref{alg:game-strategy-iteration}, let
$E_k=\|\mathbf v^\star-\mathbf v^{\pi^k}\|_\infty$, with $E_k=0$
after termination.
There is a nonnegative sequence $(h_k)_{k\ge0}$ satisfying
\begin{equation}
  \left\{\begin{aligned}
    &h_k\le E_k,\qquad
      E_{k+1}\le\frac{\gamma h_k}{1-\gamma},\quad k\ge0,\\
    &\left|\{\lfloor\log_2h_k\rfloor:
                       k\ge0,\ h_k>0\}\right|
       \le\mathsf G_{\max}.
  \end{aligned}\right.
  \label{eq:game-outer-scale-bound}
\end{equation}
\end{enumerate}
Then Algorithm~\ref{alg:game-strategy-iteration} returns $\mathbf v^\star$ and rational stationary
optimal strategies for both players using
\begin{equation}
  O\left(
    (1+\mathsf G_{\max}T_\gamma)
    \left[
      (1+\mathsf G_{\min}T_\gamma)(S^3+\mathsf C_{\min})
      +\mathsf C_{\max}
    \right]
  \right)
  \label{eq:game-progress-total-cost}
\end{equation}
arithmetic operations and comparisons. Intermediate rationals have
bit length polynomial in $L$.
\end{theorem}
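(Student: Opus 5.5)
The plan follows the proof of Theorem~\ref{thm:residual-criterion}, applying Lemma~\ref{lem:finite-scale-progress} twice: once to each inner evaluation and once to the outer sequence of maximizing strategies.

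\textbf{Inner evaluations.} Fix a strategy $\pi$. Minimizing states play the role of nature in an MDP with choice sets $\mathcal M_s$; maximizing states have the singleton choice $\{\pi(s)\}$. The residual argument behind~\eqref{eq:residual-sandwich} and~\eqref{eq:nature-contraction} carries over verbatim. For any feasible $\tau$ we have $\mathbf d^\tau\ge\mathbf0$, since $\mathbf d^\tau=\mathbf0$ at maximizing states and $\mathbf d^\tau\ge\mathbf0$ at minimizing states by minimality. Hence $\mathbf u^{\tau^t}\ge\mathbf v^\pi$ and $e^\pi_{t+1}\le\gamma e^\pi_t$. Together with~\eqref{eq:game-inner-scale-bound}, Lemma~\ref{lem:finite-scale-progress} gives at most $\mathsf G_{\min}T_\gamma+2$ iterations. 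Each iteration costs $O(S^3)$ for the linear system plus $\mathsf C_{\min}$ for the minimizations; maximizing rows are fixed and cost $O(S)$. The initialization in Line~\ref{line:game-evaluation-initialization} is a minimization at $\mathbf v=\mathbf0$, also within $\mathsf C_{\min}$. So each evaluation costs $O((1+\mathsf G_{\min}T_\gamma)(S^3+\mathsf C_{\min}))$.

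\textbf{Outer iterations.} I need the analogue of~\eqref{eq:outer-contraction} for the maximizer. From $\mathscr T_{\pi^{k+1}}(\mathbf v^{\pi^k})=\mathscr T(\mathbf v^{\pi^k})\ge\mathscr T_{\pi^k}(\mathbf v^{\pi^k})=\mathbf v^{\pi^k}$ and monotonicity of $\mathscr T_{\pi^{k+1}}$, iteration gives $\mathbf v^{\pi^k}\le\mathscr T(\mathbf v^{\pi^k})\le\mathbf v^{\pi^{k+1}}$. I also need $\mathbf v^{\pi}\le\mathbf v^\star$ for every $\pi$, which follows from $\mathscr T_\pi\le\mathscr T$ and monotonicity. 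Then
\[
  \mathbf0\le\mathbf v^\star-\mathbf v^{\pi^{k+1}}\le\mathscr T(\mathbf v^\star)-\mathscr T(\mathbf v^{\pi^k}),
\]
so $E_{k+1}\le\gamma E_k$ by the contraction of $\mathscr T$ (a max/min of $\gamma$-Lipschitz affine maps). With~\eqref{eq:game-outer-scale-bound}, Lemma~\ref{lem:finite-scale-progress} gives $E_k=0$ for some $k\le\mathsf G_{\max}T_\gamma+1$. Termination occurs at the first $k$ with $\mathscr T(\mathbf v^{\pi^k})=\mathbf v^{\pi^k}$, which is equivalent to $\mathbf v^{\pi^k}=\mathbf v^\star$ by uniqueness of the fixed point. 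So there are at most $\mathsf G_{\max}T_\gamma+2$ evaluations, each followed by one maximization round costing $\mathsf C_{\max}$. Line~\ref{line:game-initialization} is one extra maximization at $\mathbf v=\mathbf0$. The optimality test costs $O(|\mathcal S_{\max}|)\le\mathsf C_{\max}$. Multiplying yields~\eqref{eq:game-progress-total-cost}.

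\textbf{Output and encoding.} At termination, $\mathbf v^{\pi^k}=\mathbf v^\star$ and $\pi^k$ attains the maximum at $\mathbf v^\star$ at every maximizing state. The pairs $\tau(s)$ returned by Algorithm~\ref{alg:nature-policy-iteration} satisfy its stopping test, so they attain the minimum at $\mathbf v^\star$. Hence both strategies are optimal stationary strategies, and their pairs are rational by the encoding assumption.

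The encoding argument repeats that of Theorem~\ref{thm:residual-criterion}. All selected pairs have encoding length polynomial in $L$ independently of $\mathbf v$. So each linear system has coefficients polynomial in $L$, and fraction-free elimination keeps every $\mathbf u^\tau$ polynomial in $L$. Every oracle call is therefore made at a vector of polynomial bit length, which keeps all intermediate rationals polynomial in $L$.

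The main obstacle is the outer monotonicity $\mathbf v^{\pi^k}\le\mathbf v^{\pi^{k+1}}\le\mathbf v^\star$ in the presence of an adversarial minimizer. The point to verify carefully is that $\mathbf v^\pi$ is the fixed point of a monotone contraction $\mathscr T_\pi$ that contains the inner min at minimizing states. Given that, the standard argument goes through.
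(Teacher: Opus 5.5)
Your proposal is correct and follows essentially the same route as the paper: the same inner contraction $e^\pi_{t+1}\le\gamma e^\pi_t$ for the minimization MDP, the same chain $\mathbf v^{\pi^k}\le\mathscr T(\mathbf v^{\pi^k})\le\mathbf v^{\pi^{k+1}}\le\mathbf v^\star$ giving $E_{k+1}\le\gamma E_k$, two applications of Lemma~\ref{lem:finite-scale-progress}, and the same cost accounting and encoding argument. The only difference is cosmetic: you check $\mathbf d^\tau\ge\mathbf0$ state by state (zero at maximizing states, nonnegative at minimizing states), whereas the paper simply invokes the minimization-MDP argument from the proof of Theorem~\ref{thm:residual-criterion}.
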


\begin{proof}
We first bound the number of iterations in each call to
Algorithm~\ref{alg:nature-policy-iteration} and the number of strategy
evaluations in Algorithm~\ref{alg:game-strategy-iteration}.
We then derive the total cost and encoding bounds.

\vspace{5pt}
\noindent\textbf{Contraction and termination.}
For fixed $\pi$, Algorithm~\ref{alg:nature-policy-iteration}
evaluates a minimization MDP, so~\eqref{eq:nature-contraction} gives
$e_{t+1}^\pi\le\gamma e_t^\pi$.
Since $\mathscr T_\pi(\mathbf v^\star)
\le\mathscr T(\mathbf v^\star)=\mathbf v^\star$, monotonicity and
contraction of $\mathscr T_\pi$ imply $\mathbf v^\pi\le\mathbf v^\star$.
Moreover, Algorithm~\ref{alg:game-strategy-iteration},
Line~\ref{line:game-strategy-update} gives
$\mathbf v^{\pi^k}=\mathscr T_{\pi^k}(\mathbf v^{\pi^k})
\le\mathscr T(\mathbf v^{\pi^k})
=\mathscr T_{\pi^{k+1}}(\mathbf v^{\pi^k})$.
Monotonicity and contraction of $\mathscr T_{\pi^{k+1}}$
therefore yield
\begin{equation*}
  \mathbf v^{\pi^k}\le\mathscr T(\mathbf v^{\pi^k})
       \le\mathbf v^{\pi^{k+1}}\le\mathbf v^\star.
\end{equation*}
Since $\mathscr T(\mathbf v^\star)=\mathbf v^\star$, contraction gives
$E_{k+1}\le\|\mathscr T(\mathbf v^\star)-
\mathscr T(\mathbf v^{\pi^k})\|_\infty\le\gamma E_k$.
The stopping tests in Algorithm~\ref{alg:nature-policy-iteration},
Line~\ref{line:pair-optimality-check} and
Algorithm~\ref{alg:game-strategy-iteration},
Line~\ref{line:game-optimality-check} hold exactly when $e_t^\pi=0$
and $E_k=0$, respectively. Applying Lemma~\ref{lem:finite-scale-progress}
to~\eqref{eq:game-inner-scale-bound} and~\eqref{eq:game-outer-scale-bound}
gives at most $\mathsf G_{\min}T_\gamma+2$ iterations of
Algorithm~\ref{alg:nature-policy-iteration} per evaluation and at most
$\mathsf G_{\max}T_\gamma+2$ evaluations of maximizing strategies,
including the stopping iterations.
At termination,
$\mathscr T_{\pi^k}(\mathbf v^{\pi^k})=\mathbf v^{\pi^k}
=\mathscr T(\mathbf v^{\pi^k})$, so
$\mathbf v^{\pi^k}=\mathbf v^\star$ and the pairs $\pi^k(s)$
attain the maxima in~\eqref{eq:implicit-game-bellman}.
Algorithm~\ref{alg:nature-policy-iteration} returns minimizing pairs
at this value, so both returned stationary strategies are optimal.

\vspace{5pt}
\noindent\textbf{Arithmetic operations and comparisons.}
Each iteration of Algorithm~\ref{alg:nature-policy-iteration}
uses $O(S^3)$ operations in Line~\ref{line:pair-value} and at most
$\mathsf C_{\min}$ operations and comparisons for the minimizations
at $s\in\mathcal S_{\min}$ in Line~\ref{line:pair-update}.
Forming the linear system and performing the stopping test in
Line~\ref{line:pair-optimality-check} cost $O(S^2)$, which is
absorbed by $O(S^3)$.
Including initialization in Algorithm~\ref{alg:game-strategy-iteration},
Line~\ref{line:game-evaluation-initialization}, each strategy evaluation
therefore costs $O((1+\mathsf G_{\min}T_\gamma)(S^3+\mathsf C_{\min}))$.
Lines~\ref{line:game-initialization} and~\ref{line:game-maximization}
of Algorithm~\ref{alg:game-strategy-iteration} each cost at most
$\mathsf C_{\max}$. The stopping test in
Algorithm~\ref{alg:game-strategy-iteration},
Line~\ref{line:game-optimality-check} reuses the objective values from
Line~\ref{line:game-maximization} and costs
$O(|\mathcal S_{\max}|)$ comparisons, also absorbed by $\mathsf C_{\max}$.
Together with the bound on the number of evaluated maximizing
strategies, these costs give~\eqref{eq:game-progress-total-cost}.

\vspace{5pt}
\noindent\textbf{Encoding length.}
The output encoding bounds for both players ensure that every
linear system in Algorithm~\ref{alg:nature-policy-iteration},
Line~\ref{line:pair-value} has coefficients of bit length
polynomial in $L$.
The encoding argument in the proof of
Theorem~\ref{thm:residual-criterion} therefore applies and gives
polynomial bit length in $L$ for all intermediate rationals.
\end{proof}

For games with fixed
rewards and the $\ell_1$, interval, weighted $\ell_1$, or Wasserstein
transition sets specified in Sections~\ref{subsec:norm-corollaries}
and~\ref{subsec:transport-corollaries}, %
the computation and error bounds established there verify the
hypotheses of Theorem~\ref{thm:game-progress-criterion}.
Algorithm~\ref{alg:game-strategy-iteration} is therefore strongly
polynomial at fixed discount for these games.
We omit the details to avoid
repetition.

\section{Concluding remarks}
\label{sec:boundaries}

We study LP representations and strongly polynomial
algorithms for rational polyhedral state-action rectangular RMDPs
with joint reward-transition uncertainty.
Our LP recovers the robust optimal value and all
optimal stationary randomized policies, with polynomial dimension
and a strongly polynomial construction at fixed discount. We also
develop a general complexity analysis of robust policy iteration
that yields improved or new strongly polynomial bounds for concrete
uncertainty sets at fixed discount.
The analysis extends to turn-based stochastic games.

Open directions include obtaining tighter complexity bounds and
identifying broader uncertainty classes that admit
strongly polynomial algorithms. One approach is to exploit the
structure of our LP representation.
We next examine the difficulties in obtaining polynomial-time LP
constructions when the discount is part of the input and in extending
rational affine recovery to state-rectangular uncertainty.

\subsection{Dependence on the discount factor}
\label{subsec:general-boundaries}

The bounds in Theorem~\ref{thm:exact-lp} depend on $T_\gamma$,
which can be exponential in the discount's bit length $L_\gamma$.
The following corollary relates polynomial-time LP construction
and recovery to solving discounted-sum games with the discount included
in the input. These games are the deterministic-transition case of
Example~\ref{ex:turn-based-games}.

\begin{corollary}[LP representations uniform in the discount]
\label{prop:parity-boundary}
Suppose every rational polyhedral state-action rectangular
RMDP with the discount included in the input admits a
polynomial-size exact rational LP representation with polynomial-time
construction and recovery of either the robust optimal value vector or
an optimal stationary policy.
Then discounted-sum games and parity games can be solved
in polynomial time.
\end{corollary}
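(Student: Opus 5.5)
The plan is a chain of polynomial-time reductions: parity games reduce to discounted-sum games, discounted-sum games embed as RMDPs, the hypothesized LP representation is built for that RMDP, and the LP is then solved in polynomial time. For the first link I will invoke the classical reduction of parity games to mean-payoff games and then to discounted-sum games, as used by \citet{SuilenPerez2026}. Given a parity game with $n$ vertices and $d$ priorities, it produces a deterministic discounted-sum game with rational rewards and a discount $\gamma$. The bit lengths of the rewards and of $\gamma$ are polynomial in the parity-game size; only $1/(1-\gamma)$ is exponential, not the encoding of $\gamma$. The winner at each vertex can then be read off either from the discounted value vector, by comparison with a polynomially encoded threshold, or from an optimal stationary strategy of the maximizer. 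For the strategy case, I will use the known fact that for $\gamma$ sufficiently close to $1$ (still polynomial bit length), optimal discounted strategies are optimal for the mean-payoff and parity games. Verifying a positional strategy's winning region is polynomial time by a one-player parity-game check.

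The second link embeds the discounted-sum game into our RMDP class exactly as in Example~\ref{ex:turn-based-games}. Maximizing states keep their actions with singleton uncertainty sets. Each minimizing state gets one action whose extended set $\mathcal Q_s$ uses a simplex of auxiliary variables over its actions. This representation has size linear in the game and is rational, and the discount is passed through as input. The RMDP value equals the game value, and an optimal stationary RMDP policy yields an optimal stationary maximizer strategy. If the policy is randomized, I would take any action in its support: since $\Pi^\star=\prod_s\Delta(\mathcal A_s^\star)$, every supported action is optimal, which gives a deterministic optimal strategy. The minimizer's strategy is not needed, since either the value or the maximizer's positional strategy suffices to decide winners.

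The third link applies the hypothesis. It gives, in time polynomial in the input bit length (including $L_\gamma$), a polynomial-size rational LP together with a polynomial-time recovery map. Solving the LP with the ellipsoid or an interior-point method takes time polynomial in its encoding length. Feeding the optimal solution to the recovery map yields $\mathbf v^\star$ or an optimal stationary policy, and hence the discounted-sum game solution in polynomial time. The parity-game winner then follows by the first link.

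The main obstacle I expect is making the first link precise with the exact recovery format in hand. The decision rule from $\mathbf v^\star$ needs an explicit separation bound, namely that the discounted values of won and lost vertices are separated by a computable threshold, with $\gamma$ chosen so that $1-\gamma$ is inverse-exponential yet $\gamma$ has polynomial bit length. For the policy variant, I must ensure that discounted optimality transfers to parity optimality. I would cite the quantitative bounds from the parity-to-discounted reduction (as in \citet{SuilenPerez2026}) rather than rederive them.
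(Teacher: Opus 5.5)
Your proposal is correct. It follows the paper's chain of reductions but handles the policy-recovery branch differently. The shared chain is as follows. Parity games reduce to discounted-sum games by Jurdzi\'nski's reduction through mean-payoff games, with weights $(-n)^p$ on a game with $n$ vertices and maximum absolute weight $W$, followed by a discount with $1-\gamma$ of order $1/(n^3W)$, which has polynomial bit length. Discounted-sum games then embed as RMDPs: you use Example~\ref{ex:turn-based-games}, while the paper cites Suilen and P\'erez for this step. Finally, the hypothesized LP is built and solved in polynomial time.

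The genuine difference is the branch where recovery returns only a policy. The paper passes to a deterministic policy supported in $\Pi^\star$, fixes it, and computes the value of the remaining minimization MDP with one more polynomial-size LP. Both branches therefore yield $\mathbf v^\star$, and Jurdzi\'nski's reduction is applied to values as a black box.

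You instead feed the maximizer's positional strategy $\sigma$ straight into the parity step. This needs the additional fact that discounted-optimal strategies are mean-payoff and parity optimal. That fact holds, and it is short enough to prove rather than cite. Apply the Zwick--Paterson bound $|\nu_\gamma-\nu|\le 2nW(1-\gamma)$ between normalized discounted and mean-payoff values to the game and also to the one-player game obtained by fixing $\sigma$. Mean-payoff values of positional plays are rationals with denominator at most $n$, so the two values coincide once $4nW(1-\gamma)<1/(n(n-1))$. With Jurdzi\'nski's weights, every cycle has the sign given by the parity of its top priority. Hence $\sigma$ wins from the maximizer's entire parity-winning region, which your one-player check then extracts.

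The paper's route avoids this transfer argument. More importantly, it establishes the first conclusion of the corollary in the policy branch. As written, when only a policy is recovered you never compute the discounted values. To claim that discounted-sum games are solved, you should add the paper's step of evaluating the fixed deterministic policy by an LP.
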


\begin{proof}
\citet{SuilenPerez2026} reduce discounted-sum games to
rational polyhedral state-action rectangular RMDPs in polynomial bit time.
Under the hypothesis, we can construct and solve the LP and recover
the robust value or an optimal policy in polynomial bit time
\citep{Schrijver1986}. If only a policy is recovered,
we select a deterministic optimal policy from its support by~\eqref{eq:optimal-actions}.
Fixing this policy leaves a minimization MDP, whose value
can be computed in polynomial time. We can therefore solve discounted-sum
games in polynomial time. The reduction of \citet{Jurdzinski1998}
\looseness=-1 then gives a polynomial-time algorithm for solving parity games.
\end{proof}

Parity games admit quasipolynomial-time algorithms
\citep{CaludeEtAl2017}, but whether they can be solved in polynomial
time remains a longstanding open problem. An LP representation satisfying
the hypotheses of Corollary~\ref{prop:parity-boundary} would resolve this question.

\subsection{State-rectangular uncertainty}
\label{subsec:s-rectangular}

State rectangularity allows transition uncertainty to couple
different actions at the same state, while remaining independent across
states. We denote the nonempty polytope of joint transition distributions
at state $s$ by
$\mathcal U_s\subseteq\prod_{a\in\mathcal A_s}\Delta(\mathcal S)$.
For fixed rewards, the Bellman operator is
\begin{equation*}
  \mathscr T_s(\mathbf v)
   =\max_{\boldsymbol\pi_s\in\Delta(\mathcal A_s)}
      \min_{\{\mathbf p_{sa}\}_{a\in\mathcal A_s}\in\mathcal U_s}
        \sum_{a\in\mathcal A_s}[\boldsymbol\pi_s]_a
          (r_{sa}+\gamma\mathbf p_{sa}^\top\mathbf v).
\end{equation*}
Its unique fixed point is the robust optimal value
$\mathbf v^\star$. A stationary policy is optimal from every state
if and only if its action distribution at each state $s$
attains the maximum defining $\mathscr T_s(\mathbf v^\star)$
\citep{WiesemannKuhnRustem2013}.
The following example shows that the optimal value vector
and the unique optimal stationary policy can have irrational coordinates
even when all input coefficients are rational.

\begin{proposition}[Irrational values and optimal policies]
\label{prop:s-rectangular-irrationality}
There exists a three-state state-rectangular RMDP with fixed rational
rewards, rational polyhedral transition uncertainty, and
$\gamma=1/2$ such that $\mathbf v^\star$ has an irrational coordinate
and the unique optimal stationary randomized policy has an irrational
action probability.
\end{proposition}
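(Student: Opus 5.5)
The plan is to build an explicit three-state instance in which state $1$ is the only nontrivial state, and states $2$ and $3$ are absorbing with singleton uncertainty. With $\gamma=\tfrac12$, giving state $2$ reward $1$ and state $3$ reward $0$ fixes $[\mathbf v^\star]_2=2$ and $[\mathbf v^\star]_3=0$. The fixed-point equation then reduces to a scalar equation $x=\mathscr T_1(x)$ for $x=[\mathbf v^\star]_1$.

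At state $1$ I take two actions $a,b$ with fixed rational rewards. The joint set $\mathcal U_1$ is the segment between two rational vertices $(\mathbf p_a^{(1)},\mathbf p_b^{(1)})$ and $(\mathbf p_a^{(2)},\mathbf p_b^{(2)})$, which is a rational polytope. The uncertainty must genuinely couple the actions: the vertex that is bad for $a$ is good for $b$, and vice versa. For a fixed $\mathbf v$, the inner objective is linear in the joint transition, so the minimum is attained at one of the two vertices. Writing $\pi=[\boldsymbol\pi_1]_a$, this gives
\[
\mathscr T_1(\mathbf v)=\max_{\pi\in[0,1]}\min\bigl\{\pi\alpha_1+(1-\pi)\beta_1,\ \pi\alpha_2+(1-\pi)\beta_2\bigr\},
\]
where $\alpha_j=r_a+\gamma(\mathbf p_a^{(j)})^\top\mathbf v$ and $\beta_j=r_b+\gamma(\mathbf p_b^{(j)})^\top\mathbf v$ are affine in $x$ with rational coefficients.

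I will choose the data so that, near the true value, $\alpha_1<\alpha_2$ and $\beta_1>\beta_2$. Then the two lines in $\pi$ have slopes of opposite sign, so the max-min is attained uniquely at the crossing point
\[
\pi(x)=\frac{\beta_1-\beta_2}{(\beta_1-\beta_2)+(\alpha_2-\alpha_1)}\in(0,1),
\qquad
\mathscr T_1(x)=\frac{\alpha_2\beta_1-\alpha_1\beta_2}{(\beta_1-\beta_2)+(\alpha_2-\alpha_1)}.
\]
The key design point is that the self-transition probability to state $1$ differs between the two vertices. This makes the denominator depend on $x$, so the equation $x=\mathscr T_1(x)$ clears to a genuine quadratic $c_2x^2+c_1x+c_0=0$ with rational coefficients.

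The remaining steps are as follows.

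1. Pick small integer-scaled probabilities over states $\{1,2,3\}$ for the four vertex distributions, and compute $c_2,c_1,c_0$.

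2. Check that the discriminant is not a rational square. Since $x=[\mathbf v^\star]_1$ is guaranteed to exist and be unique, it must be the root of the quadratic lying in $[0,2/(1-\gamma)]$. Because the discriminant is not a square, this root is irrational.

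3. Verify at this root that the sign conditions hold, so that $\pi(x)\in(0,1)$ and the formula above really is $\mathscr T_1$. Conclude that $\pi(x)$ is irrational: it is a rational function of $x$ that is nonconstant and not degenerate. The cleanest check is to solve the equation $x=\mathscr T_1(x)$ back for $x$ in terms of $\pi$. If $\pi$ were rational, $x$ would be rational too, a contradiction.

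4. Argue uniqueness of the optimal policy. By the characterization of \citet{WiesemannKuhnRustem2013}, a stationary policy is optimal exactly when $\boldsymbol\pi_1$ attains the maximum in $\mathscr T_1(\mathbf v^\star)$. The function inside the maximum is the minimum of one strictly increasing and one strictly decreasing line, so its maximizer is unique. States $2$ and $3$ each have a single action, so the whole optimal stationary policy is unique.

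The main obstacle is the numerical choice. The coefficients must simultaneously make the quadratic irreducible over $\mathbb Q$, place the relevant root in the admissible range, and preserve the opposite-slope conditions there, while keeping every vertex a valid probability vector. My first attempt will be vertices where action $a$ stays at state $1$ with probability $1$ under vertex $1$ and moves to state $2$ under vertex $2$, with $b$ mirrored, and rewards $r_a=0$, $r_b=1$. I will then adjust one probability, for instance to $\tfrac12$, until the discriminant (something like $5$) is not a square.
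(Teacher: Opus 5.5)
There is a genuine gap. This is an existence claim, so the explicit instance and its verification are the proof, and you leave exactly that step open.

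The criterion you plan to check is also insufficient. The inequalities $\alpha_1<\alpha_2$ and $\beta_1>\beta_2$ only ensure that the lines $\ell_j(\pi)=\pi\alpha_j+(1-\pi)\beta_j$ cross at some $\pi\in(0,1)$. They say nothing about the signs of the slopes $\alpha_1-\beta_1$ and $\alpha_2-\beta_2$. The max--min equals the crossing value only if, in addition, $\alpha_1\le\beta_1$ and $\alpha_2\ge\beta_2$. In words, action $a$ must be worse than $b$ under the first vertex and better under the second. Step~4 needs these inequalities to be strict.

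Your first attempt exhibits the failure. Read ``$b$ mirrored'' as $\mathbf p_b^{(1)}=\mathbf e_2$, $\mathbf p_b^{(2)}=\mathbf e_1$, with $r_a=0$ and $r_b=1$. For $x<2$ you get $\alpha_1=x/2<1=\alpha_2$ and $\beta_1=2>1+x/2=\beta_2$, yet both slopes are negative. Playing $b$ earns reward $1$ forever, so $[\mathbf v^\star]_1=2$ and the unique optimal policy plays $b$ deterministically. Your crossing equation becomes $3x^2-10x+8=0$, with roots $2$ and $4/3$. The root $4/3$ lies in $[0,4]$, satisfies your sign conditions, and gives $\pi(4/3)=1/2$. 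It is still not a fixed point: at this value the maximum is attained at $\pi=0$, and $\mathscr{T}_1(4/3)=5/3$. So your Step~3 check would accept a spurious root. The other placements of $b$'s two targets give $[\mathbf v^\star]_1\in\{1,2\}$.

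The repair is to impose all four inequalities. Following the paper, compute $\mathscr{T}_1$ on the whole a priori range of $x$ rather than testing signs at a root. Keep your labels and $\mathbf p_a^{(1)}=\mathbf e_1$, $\mathbf p_a^{(2)}=\mathbf e_2$, but take $r_a=r_b=0$, $\mathbf p_b^{(1)}=\mathbf e_2$, and $\mathbf p_b^{(2)}=\mathbf e_3$. Then $\alpha_1=x/2$, $\alpha_2=1$, $\beta_1=1$, and $\beta_2=0$. All rewards lie in $[0,1]$, so $x\in[0,2]$. There the slopes are $x/2-1\le0$ and $1>0$, and your formula gives $\mathscr{T}_1(x)=2/(4-x)$ on all of $[0,2]$. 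Its unique fixed point in this interval is $x=2-\sqrt2$, with $[\boldsymbol{\pi}_1]_a=2/(4-x)=2-\sqrt2$. Since $x<2$, the slopes are strictly opposite, and your uniqueness and back-solving arguments go through.

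Up to relabeling the absorbing states and exchanging the actions, this is exactly the paper's instance $\mathcal U_{s_1}=\operatorname{conv}\{(\mathbf e_{s_2},\mathbf e_{s_3}),(\mathbf e_{s_3},\mathbf e_{s_1})\}$. For it, the paper derives $\mathscr T_{s_1}(x,0,2)=2/(4-x)$ for every $x\in[0,2]$.
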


\begin{proof}
We use states $s_1,s_2,s_3$ and discount $1/2$.
States $s_2,s_3$ each have one action and are absorbing, with rewards
zero and one. State $s_1$ has two actions $a,b$ with reward zero,
and joint transition uncertainty
\begin{equation*}
  \mathcal U_{s_1}=\operatorname{conv}
       \{(\mathbf e_{s_2},\mathbf e_{s_3}),
         (\mathbf e_{s_3},\mathbf e_{s_1})\}.
\end{equation*}
The first and second entries of each pair specify the
transitions for $a$ and $b$, respectively.
The absorbing values are $[\mathbf v^\star]_{s_2}=0$ and
$[\mathbf v^\star]_{s_3}=2$, and rewards in $[0,1]$ imply
$[\mathbf v^\star]_{s_1}\in[0,2]$. For a candidate value
$x\in[0,2]$ at $s_1$ and probability $z\in[0,1]$ of choosing action $a$,
the objective in the Bellman operator equals $1-z$ and
$z+(1-z)x/2$ at the two extreme transition pairs.
By linearity, the minimum over $\mathcal U_{s_1}$ is the
smaller of these two expressions. They are decreasing and nondecreasing
in $z$, respectively, so their minimum is maximized at their intersection
$z=\frac{2-x}{4-x}\in[0,1]$. Hence
\begin{equation*}
  \mathscr T_{s_1}(x,0,2)
   =\max_{0\le z\le1}\min\{1-z,\ z+(1-z)x/2\}
   =\frac{2}{4-x},\qquad 0\le x\le2.
\end{equation*}
The fixed-point equation $x=\frac{2}{4-x}$
has the unique solution $x=2-\sqrt2$ in $[0,2]$, giving
\begin{equation*}
  [\mathbf v^\star]_{s_1}=2-\sqrt2,\qquad
  [\boldsymbol\pi_{s_1}^\star]_a=\sqrt2-1,\qquad
  [\boldsymbol\pi_{s_1}^\star]_b=2-\sqrt2.
\end{equation*}
At $x=2-\sqrt2<2$, the two expressions are
strictly decreasing and strictly increasing in $z$, respectively, so the
maximizing probability is unique. Since the other
states each have one action, the optimal stationary policy is unique.
\end{proof}

Proposition~\ref{prop:s-rectangular-irrationality} rules out an exact rational LP
representation of the robust optimal value or optimal stationary policies
for general state-rectangular RMDPs. Indeed, every rational LP with a finite
attained optimum has an optimal solution whose coordinates are all rational.
These coordinates cannot give the optimal value vector or the unique optimal
policy in Proposition~\ref{prop:s-rectangular-irrationality}, both of which
have irrational coordinates.

\subsection*{AI use statement}
We used generative AI to assist with some derivations,
literature searches, proof checking, language editing, and \LaTeX{} formatting.
The authors carefully reviewed all AI-assisted material and independently
verified the mathematical arguments. The authors take full responsibility
for this paper.

\bibliographystyle{ims}
\bibliography{references}

\appendix

\section{Constants and encoding bounds for Theorem~\ref{thm:exact-lp}}
\label{app:lp-arithmetic}

We define the value bound $V$, the multiplier bound $M$, the gap bound
$\varepsilon$ in~\eqref{eq:lp-policy-gap}, and the objective weight
$W$ in~\eqref{eq:lp-pi-objective} directly from the input.
We then prove the encoding and construction bounds in
Theorem~\ref{thm:exact-lp}.

\subsection{Input denominators and coefficient bounds}
\label{app:lp-primitive-bounds}

We write every supplied rational coefficient as \(a_i/b_i\), with
\(a_i\in\mathbb Z\) and \(b_i\in\mathbb Z_{>0}\).  This list includes
all entries of \(\mathbf D_{sa},\mathbf b_{sa},\mathbf E_{sa},
\mathbf f_{sa}\) and the discount \(\gamma\).
In this appendix, we use
$D_{\mathrm{in}}$ for a common input denominator and $C$ for a bound
on the coefficients after clearing denominators, chosen as
\begin{equation}
  D_{\mathrm{in}}=\prod_i b_i,\qquad
  C=D_{\mathrm{in}}\left(2+\sum_i|a_i|\right).
  \label{eq:lp-clearing-constants}
\end{equation}
These are positive integers with \(C\ge2\) and
\(D_{\mathrm{in}}\le C\).  Each cleared coefficient is
an integer satisfying
\(\left|D_{\mathrm{in}}a_i/b_i\right|
 \le D_{\mathrm{in}}|a_i|\le C\).
This bound also covers \(0,\pm D_{\mathrm{in}}\).
Moreover, \(\log D_{\mathrm{in}}=O(L)\) and
\(\log C=O(L+\log N)\).

The following support bound applies even when the right-hand side is
real. We use it to bound dual multipliers uniformly over the entire
value box.

\begin{lemma}[A support bound with real right-hand sides]
\label{lem:lp-support-bound}
Let $\mathbf G\in\mathbb Q^{m\times n}$ and $\mathbf h\in\mathbb R^m$.
Suppose $\mathbf G\mathbf y=\mathbf h$, $\mathbf y\ge\mathbf0$ is
feasible and $D_{\mathrm{in}}\mathbf G$ has integer entries of
magnitude at most $C$.
There is a feasible $\mathbf y$ whose support
$\mathcal I=\{i:[\mathbf y]_i>0\}$ indexes linearly independent columns of
$\mathbf G$. If $\mathcal I\ne\varnothing$, then
\begin{equation*}
  \|\mathbf y\|_\infty\le |\mathcal I|!C^{|\mathcal I|-1}D_{\mathrm{in}}
                      \|\mathbf h\|_\infty.
\end{equation*}
\end{lemma}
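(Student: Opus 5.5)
The plan is the standard basic-feasible-solution argument followed by Cramer's rule, with care to use only the integrality of $D_{\mathrm{in}}\mathbf G$ and never rationality of $\mathbf h$.

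\emph{Existence of a feasible point with independent support.} Start from any feasible $\mathbf y\ge\mathbf0$ with $\mathbf G\mathbf y=\mathbf h$. If its support columns are linearly dependent, take a nonzero $\mathbf d$ supported on $\mathcal I$ with $\mathbf G\mathbf d=\mathbf0$; replacing $\mathbf d$ by $-\mathbf d$ if necessary, some entry of $\mathbf d$ is negative. Move along $\mathbf y+\theta\mathbf d$ with $\theta\ge0$ until the first coordinate hits zero. This preserves feasibility and strictly shrinks the support. After finitely many steps the support columns are independent, possibly with $\mathcal I=\varnothing$, in which case there is nothing more to show. The argument needs only real arithmetic, so a real $\mathbf h$ causes no difficulty.

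\emph{Bounding the entries.} Let $k=|\mathcal I|\ge1$. Since $\mathbf G_{:,\mathcal I}$ has rank $k$, choose $k$ rows $\mathcal R$ so that the square submatrix $\mathbf B=[D_{\mathrm{in}}\mathbf G]_{\mathcal R,\mathcal I}$ is nonsingular. Then $\mathbf B[\mathbf y]_{\mathcal I}=D_{\mathrm{in}}[\mathbf h]_{\mathcal R}$. By Cramer's rule, each $[\mathbf y]_i$ is a ratio of two determinants:
\begin{itemize}
\item the numerator is $\det$ of $\mathbf B$ with one column replaced by $D_{\mathrm{in}}[\mathbf h]_{\mathcal R}$;
\item the denominator is $\det\mathbf B$, a nonzero integer, so $|\det\mathbf B|\ge1$.
\end{itemize}

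\emph{The numerator bound.} Expand the numerator determinant along the replaced column. Each cofactor is a $(k-1)\times(k-1)$ integer determinant with entries bounded by $C$, so by the Leibniz expansion its magnitude is at most $(k-1)!\,C^{k-1}$. Summing over the $k$ entries of that column gives
\[
|\text{numerator}|\le k\cdot(k-1)!\,C^{k-1}\,D_{\mathrm{in}}\|\mathbf h\|_\infty = k!\,C^{k-1}D_{\mathrm{in}}\|\mathbf h\|_\infty .
\]
Dividing by $|\det\mathbf B|\ge1$ yields the stated bound on $\|\mathbf y\|_\infty$.

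The only delicate point is recognizing why the lower bound $|\det\mathbf B|\ge1$ holds: integrality comes from $D_{\mathrm{in}}\mathbf G$ alone, while the possibly irrational $\mathbf h$ enters only linearly in the numerator. Everything else is routine.
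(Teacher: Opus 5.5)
Your proposal is correct and follows essentially the same route as the paper. Both arguments first reduce to a feasible point whose support columns are linearly independent by moving along a null direction of $\mathbf G$; the paper phrases this as choosing a minimal-support feasible point rather than iterating, which is an immaterial difference. Both then apply Cramer's rule to a nonsingular square submatrix $\mathbf B$ of $D_{\mathrm{in}}\mathbf G$, using $|\det\mathbf B|\ge1$ and expanding the numerator along the $D_{\mathrm{in}}\mathbf h$ column to get exactly $k\cdot(k-1)!\,C^{k-1}D_{\mathrm{in}}\|\mathbf h\|_\infty$.
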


\begin{proof}
We choose a feasible $\mathbf y$ with the fewest positive coordinates
and write $\mathcal I$ for its support.
If \([\mathbf G]_{:,\mathcal I}\) has dependent columns, there is a nonzero
$\mathbf d$ supported on $\mathcal I$ with $\mathbf G\mathbf d=\mathbf0$.
We choose its sign so that some $[\mathbf d]_i>0$ and take
$\theta=\min_{i\in\mathcal I:\,[\mathbf d]_i>0}[\mathbf y]_i/[\mathbf d]_i$.
The vector $\mathbf y-\theta\mathbf d$ satisfies
\begin{equation*}
  \mathbf y-\theta\mathbf d\ge\mathbf0,
  \qquad
  \mathbf G(\mathbf y-\theta\mathbf d)=\mathbf h.
\end{equation*}
At least one positive coordinate becomes zero, contradicting our
choice of $\mathbf y$.
Thus $[\mathbf G]_{:,\mathcal I}$ has linearly independent columns.

For nonempty $\mathcal I$, we select $|\mathcal I|$ rows indexed by
\(\mathcal J\) such that
\(\mathbf B=D_{\mathrm{in}}[\mathbf G]_{\mathcal J,\mathcal I}\) is nonsingular.
Then \(\mathbf B[\mathbf y]_{\mathcal I}=D_{\mathrm{in}}[\mathbf h]_{\mathcal J}\) and
\(|\det\mathbf B|\ge1\). Cramer's rule expresses each positive
coordinate as a determinant ratio. Each cofactor of $\mathbf B$ is a
determinant of order $|\mathcal I|-1$. Its expansion contains
$(|\mathcal I|-1)!$ signed products, each of magnitude at most
$C^{|\mathcal I|-1}$. Expanding the numerator along
the column $D_{\mathrm{in}}[\mathbf h]_{\mathcal J}$ gives
\[
  \|\mathbf y\|_\infty
  \le |\mathcal I|\,D_{\mathrm{in}}\|\mathbf h\|_\infty
       (|\mathcal I|-1)!C^{|\mathcal I|-1}
  =|\mathcal I|!C^{|\mathcal I|-1}D_{\mathrm{in}}\|\mathbf h\|_\infty.
\]
For empty support, \(\mathbf y=\mathbf0\).
\end{proof}

\subsection{Bounding values and optimal dual multipliers}
\label{app:lp-dual-bounds}

We construct the bounds used in~\eqref{eq:lp-value-invariance}
and the dual representation~\eqref{eq:lp-uniform-dual}.
Each $\mathbf q=(r,\mathbf p,\mathbf u)$ has at most $N$ coordinates.
We define the reward, value, and multiplier bounds by
\begin{equation}
  R:=(N+1)!C^{N+1},\qquad
  V:=1+RD_{\mathrm{in}},\qquad M:=RV.
  \label{eq:lp-arithmetic-constants}
\end{equation}
These constants are positive integers. Since $D_{\mathrm{in}}\gamma$
is an integer and $\gamma<1$, we have $D_{\mathrm{in}}(1-\gamma)\ge1$.
Thus $D_{\mathrm{in}}$ also bounds $1/(1-\gamma)$.

\begin{lemma}[Bounds on rewards, values, and dual multipliers]
\label{lem:lp-uniform-boxes}
Let $R,V,M$ be given by~\eqref{eq:lp-arithmetic-constants}.
For every $(s,a)$ and every $(r,\mathbf p,\mathbf u)\in\mathcal Q_{sa}$,
$|r|\le R$. The value of every stationary randomized policy and
$\mathbf v^\star$ belong to $[-V,V]^S$.
For every $\mathbf v\in[-V,V]^S$ and $(s,a)$,
$\mathscr T_{sa}(\mathbf v)\in[-V,V]$, and the dual representation~\eqref{eq:lp-uniform-dual} holds.
\end{lemma}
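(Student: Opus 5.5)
The plan has three parts, taken in order: a reward bound, then a value bound for policies, then bounds on $\mathscr T_{sa}$ and its optimal duals. The first and third use Lemma~\ref{lem:lp-support-bound}.

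\textbf{Reward bound.} Fix $(s,a)$. Since $\mathcal M_{sa}$ is compact, $\max r$ and $\min r$ over $\mathcal Q_{sa}$ are finite LPs.
\begin{itemize}
  \item Rather than use vertices of $\mathcal Q_{sa}$, which may be unbounded in $\mathbf u$, I convert the system to standard form $\mathbf G\mathbf y=\mathbf h$, $\mathbf y\ge\mathbf0$. Free variables are split as $\mathbf q=\mathbf q^+-\mathbf q^-$, and slacks are added for the inequalities.
  \item The extreme values of $r$ are attained at basic feasible solutions. The support bound in Lemma~\ref{lem:lp-support-bound} holds for a basic solution with independent support. I will argue that an optimal BFS exists, since the optimum is finite and attained at a vertex of the standard-form polyhedron, which is pointed.
  \item At such a solution, Cramer's rule gives $|r|\le|\mathcal I|!\,C^{|\mathcal I|-1}D_{\mathrm{in}}\|\mathbf h\|_\infty$. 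Here $|\mathcal I|\le m_{sa}\le N$ and $D_{\mathrm{in}}\|\mathbf h\|_\infty\le C$, which gives $|r|\le N!\,C^{N}\le R$.
\end{itemize}

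\textbf{Value bound.} For any stationary randomized $\pi$ and any model, the discounted return lies in $[-R/(1-\gamma),\,R/(1-\gamma)]$. Using $1/(1-\gamma)\le D_{\mathrm{in}}$, this gives $|[\mathbf v^\pi]_s|\le RD_{\mathrm{in}}<V$. The same holds for $\mathbf v^\star$ as a supremum over policies. For $\mathbf v\in[-V,V]^S$ and $\mathbf p\in\Delta(\mathcal S)$, I have $|r+\gamma\mathbf p^\top\mathbf v|\le R+\gamma V$. This must be at most $V$, which needs $R\le(1-\gamma)V$. Since $(1-\gamma)V\ge V/D_{\mathrm{in}}=1/D_{\mathrm{in}}+R>R$, it holds.

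\textbf{Dual multipliers (main obstacle).} Strong duality for $(\mathrm P)$ holds because $(\mathrm P)$ is feasible with a finite optimum. The optimum is finite since the objective depends only on $(r,\mathbf p)$, which range over a compact set. Hence the unbounded dual $(\mathrm D)$ without box constraints attains $\mathscr T_{sa}(\mathbf v)$.
\begin{itemize}
  \item The difficulty is that the right-hand side $(1,\gamma\mathbf v,\mathbf0)$ is real and varies over the box. This is exactly why Lemma~\ref{lem:lp-support-bound} allows real $\mathbf h$.
  \item I write the optimal dual face as a standard-form system. Split $\boldsymbol\nu=\boldsymbol\nu^+-\boldsymbol\nu^-$ and impose
  \[
    -\mathbf D^\top\boldsymbol\mu+\mathbf E^\top(\boldsymbol\nu^+-\boldsymbol\nu^-)=(1,\gamma\mathbf v,\mathbf0),
    \qquad
    -\mathbf b^\top\boldsymbol\mu+\mathbf f^\top(\boldsymbol\nu^+-\boldsymbol\nu^-)-w=\mathscr T_{sa}(\mathbf v),
    \qquad w\ge0.
  \]
  Together with dual feasibility and weak duality, the last equation forces optimality.
  \item The coefficient matrix is rational with cleared entries bounded by $C$. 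Its right-hand side has sup norm at most $\max\{1,\gamma V,V\}=V$.
  \item Lemma~\ref{lem:lp-support-bound} then gives an optimal dual with coordinates at most $k!\,C^{k-1}D_{\mathrm{in}}V$, where $k$ is the support size. Since $k\le n_{sa}+1\le N+1$, this is at most $(N+1)!\,C^{N}D_{\mathrm{in}}V\le(N+1)!\,C^{N+1}V=RV=M$, using $D_{\mathrm{in}}\le C$.
  \item In $\boldsymbol\nu=\boldsymbol\nu^+-\boldsymbol\nu^-$, at most one of each pair is positive, since a column and its negative are dependent. Hence $|\boldsymbol\nu|\le M$ and $\mathbf0\le\boldsymbol\mu\le M\mathbf1$.
\end{itemize}
So the box-constrained $(\mathrm D)$ still attains $\mathscr T_{sa}(\mathbf v)$, which is the dual representation~\eqref{eq:lp-uniform-dual}. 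The step needing most care is this count: that the independent support has at most $N+1$ columns, so the exponent and factorial in the bound match $R$.
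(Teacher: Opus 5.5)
Your proposal is correct and follows the paper's proof closely. The value bound and the bounded-multiplier step match the paper: you apply the support lemma to the dual equations with the objective fixed at $\mathscr T_{sa}(\mathbf v)$, giving at most $N+1$ rows and a right-hand side of norm at most $V$. Your slack $w$ is superfluous, since weak duality forces $w=0$; the paper simply imposes the objective with equality.

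The only local difference is the reward bound. You apply Cramer's rule at an optimal basic feasible solution of the standard-form lift of $\mathcal Q_{sa}$. Note that $r^+$ and $r^-$ cannot both be basic, so $|r|\le\|\mathbf y\|_\infty$. The paper instead applies the support lemma to a dual certificate $\mathbf D_{sa}^\top\boldsymbol\mu+\mathbf E_{sa}^\top\boldsymbol\nu=(\sigma,\mathbf0,\mathbf0)$ and then uses weak duality. Both routes give bounds of order $N!\,C^N$, below $R$.
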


\begin{proof}
We first bound the rewards and values, then construct bounded optimal dual multipliers.

\vspace{5pt}
\noindent\textbf{Rewards and values.}
For each \(\sigma\in\{-1,1\}\),
\(\max_{(r,\mathbf p,\mathbf u)\in\mathcal Q_{sa}}\{\sigma r\}\)
is finite and attained because the reward-transition image is compact.
Using the description of $\mathcal Q_{sa}$ in~\eqref{eq:local-ambiguity},
LP duality therefore gives feasible multipliers satisfying
\(\mathbf D_{sa}^{\top}\boldsymbol\mu+
\mathbf E_{sa}^{\top}\boldsymbol\nu=(\sigma,\mathbf0,\mathbf0)\),
with \(\boldsymbol\mu\ge\mathbf0\) and $\boldsymbol\nu$ unrestricted in sign.
After splitting \(\boldsymbol\nu=\boldsymbol\nu^+-
\boldsymbol\nu^-\), the nonnegative system has at most \(N\) rows
and right-hand side of norm one.
Applying Lemma~\ref{lem:lp-support-bound} to this system, we can choose
$(\boldsymbol\mu,\boldsymbol\nu^+,\boldsymbol\nu^-)$ with at most
\(N\) nonzero coordinates, each bounded by \(N!C^{N-1}D_{\mathrm{in}}\).
The entries of $\mathbf b_{sa}$ and $\mathbf f_{sa}$ have magnitude at most
$C/D_{\mathrm{in}}$. Together with the support and coordinate bounds
above, weak duality gives
\[
\begin{aligned}
  \sigma r
  &\le\mathbf b_{sa}^{\top}\boldsymbol\mu+
       \mathbf f_{sa}^{\top}(\boldsymbol\nu^+-\boldsymbol\nu^-)\\
  &\le \frac{NC}{D_{\mathrm{in}}}
       N!C^{N-1}D_{\mathrm{in}}
   =N\,N!C^N\le R.
\end{aligned}
\]
Both signs therefore give \(|r|\le R\).
Together with stochasticity, this bounds both
$\|\mathscr T_{\pi}(\mathbf v)\|_\infty$ and
$\|\mathscr T(\mathbf v)\|_\infty$ by $R+\gamma\|\mathbf v\|_\infty$.
The fixed-point equations for $\mathbf v^{\pi}$ and
$\mathbf v^\star$ therefore give
\[
  \max\{\|\mathbf v^{\pi}\|_\infty,
          \|\mathbf v^\star\|_\infty\}
  \le\frac{R}{1-\gamma}
  \le RD_{\mathrm{in}}<V.
\]
Since \((1-\gamma)V\ge R\),
every \(\mathbf v\in[-V,V]^S\) and feasible reward-transition pair
satisfy \(|r+\gamma\mathbf p^\top\mathbf v|\le R+\gamma V\le V\).
This proves~\eqref{eq:lp-value-invariance}.

\vspace{5pt}
\noindent\textbf{Optimal dual multipliers.}
For a real \(\mathbf v\in[-V,V]^S\), strong duality for
$(\mathrm P)$ and $(\mathrm D)$ in~\eqref{eq:lp-uniform-dual}, before
imposing multiplier bounds, gives optimal multipliers.
To preserve optimality
when applying Lemma~\ref{lem:lp-support-bound}, we fix the dual
objective at $\mathscr T_{sa}(\mathbf v)$.
After splitting $\boldsymbol\nu=\boldsymbol\nu^+-\boldsymbol\nu^-$
into nonnegative parts, we obtain the feasible system
\begin{equation}
\left\{\begin{aligned}
 &{-}\mathbf D_{sa}^{\top}\boldsymbol\mu
   +\mathbf E_{sa}^{\top}\boldsymbol\nu^+
   -\mathbf E_{sa}^{\top}\boldsymbol\nu^-
   =(1,\gamma\mathbf v,\mathbf0),\\
 &{-}\mathbf b_{sa}^{\top}\boldsymbol\mu
   +\mathbf f_{sa}^{\top}\boldsymbol\nu^+
   -\mathbf f_{sa}^{\top}\boldsymbol\nu^-
   =\mathscr T_{sa}(\mathbf v).
\end{aligned}\right.
 \label{eq:lp-optimal-certificate-system}
\end{equation}
Here $\boldsymbol\mu,\boldsymbol\nu^+,\boldsymbol\nu^-\ge\mathbf0$.
The system~\eqref{eq:lp-optimal-certificate-system} has at most
\(N+1\) equations and a right-hand side of infinity norm at most \(V\).
Multiplying its coefficient matrix by $D_{\mathrm{in}}$ gives integer
entries of magnitude at most $C$.
Lemma~\ref{lem:lp-support-bound} therefore gives a nonnegative solution
$(\boldsymbol\mu,\boldsymbol\nu^+,\boldsymbol\nu^-)$ satisfying
\[
  \max\{\|\boldsymbol\mu\|_\infty,
         \|\boldsymbol\nu^+\|_\infty,\|\boldsymbol\nu^-\|_\infty\}
  \le(N+1)!C^ND_{\mathrm{in}}V
  \le M.
\]
Thus \(\boldsymbol\mu\le M\mathbf1\) and
\(-M\mathbf1\le\boldsymbol\nu^+-\boldsymbol\nu^-\le M\mathbf1\).
This proves~\eqref{eq:lp-uniform-dual} for every real
$\mathbf v\in[-V,V]^S$.
\end{proof}

\subsection{A uniform lower bound on positive action gaps}
\label{app:lp-action-separation}

For the action constraints in~\eqref{eq:lp-pi-flag}, we establish
the uniform gap~\eqref{eq:lp-policy-gap} with the positive rational number
\begin{equation}
  \varepsilon:=\frac{1}{(2N)!C^{2N}}.
  \label{eq:lp-gap-constant}
\end{equation}

\begin{lemma}[Separation at every deterministic-policy value]
\label{lem:lp-policy-separation}
For every deterministic stationary policy \(\pi\), all numbers
$[\mathbf v^\pi]_s$ and $\mathscr T_{sa}(\mathbf v^\pi)$, with
$s\in\mathcal S$ and $a\in\mathcal A_s$, have a common positive
integer denominator at most $\varepsilon^{-1}$.
Consequently,~\eqref{eq:lp-policy-gap} holds with gap $\varepsilon$.
\end{lemma}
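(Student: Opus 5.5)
The plan is to realize all the numbers $[\mathbf v^\pi]_s$ and $\mathscr T_{sa}(\mathbf v^\pi)$ as coordinates of a single vertex of a rational polytope with few variables and small cleared coefficients. Cramer's rule then gives them a common denominator, namely the determinant of one basis matrix. The natural polytope is the single-iteration system~\eqref{eq:lp-pi-block} of Step~2, with the policy variables fixed at the given deterministic $\pi$, so that each $[\boldsymbol\pi_s]_a\in\{0,1\}$ is a constant. Its variables are $\mathbf v\in\mathbb R^S$, $\mathbf t\in\mathbb R^A$, and the multipliers $\boldsymbol\mu_{sa},\boldsymbol\nu_{sa}$, whose total count is $\sum_{s,a}m_{sa}$. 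The number of variables is therefore $n\le S+A+\sum_{s,a}m_{sa}\le N\le 2N$. The feasible region is bounded by~\eqref{eq:lp-pi-value-bounds} and~\eqref{eq:lp-pi-multiplier-bounds}, and it is nonempty by Lemma~\ref{lem:lp-uniform-boxes}, which supplies~\eqref{eq:lp-value-invariance} and~\eqref{eq:lp-uniform-dual}.

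\textbf{Step 1: uniqueness.} By the Step~2 argument, every lexicographic maximizer of $(\sum_s[\mathbf v]_s,\sum_{s,a}[\mathbf t]_{sa})$ over this polytope satisfies $\mathbf v=\mathbf v^\pi$ and $[\mathbf t]_{sa}=\mathscr T_{sa}(\mathbf v^\pi)$ for all $(s,a)$. The lexicographic optimal face is a nonempty face of a polytope, so it contains a vertex $\mathbf x$. At this vertex the $\mathbf v$- and $\mathbf t$-coordinates are exactly the numbers in the lemma.

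\textbf{Step 2: the basis system.} The vertex $\mathbf x$ is the unique solution of $\mathbf B\mathbf x=\mathbf c$, where $\mathbf B$ consists of $n$ linearly independent tight constraints. I multiply every row by $D_{\mathrm{in}}$. The coefficient entries of the rows are of the following kinds:
\begin{itemize}
\item entries of $\mathbf D_{sa},\mathbf E_{sa},\mathbf b_{sa},\mathbf f_{sa}$;
\item the quantity $\gamma$ in~\eqref{eq:lp-pi-dual};
\item the constants $\pm1$ and $0$.
\end{itemize}
After scaling, all of these are integers of magnitude at most $C$, by~\eqref{eq:lp-clearing-constants}. The large numbers $2V$, $V$, and $M$ enter only the right-hand side $D_{\mathrm{in}}\mathbf c$, which is an integer vector because $V$ and $M$ are integers. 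Cramer's rule then writes every coordinate of $\mathbf x$ as an integer divided by $d:=|\det(D_{\mathrm{in}}\mathbf B)|$. Here $d$ is a positive integer, and the Leibniz expansion gives $d\le n!\,C^{n}\le(2N)!\,C^{2N}=\varepsilon^{-1}$. Thus $d$ is a common denominator bounded by $\varepsilon^{-1}$.

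\textbf{Step 3: the gap.} For fixed $s$, the numbers $\max_{a'}\mathscr T_{sa'}(\mathbf v^\pi)$ and $\mathscr T_{sa}(\mathbf v^\pi)$ are both integer multiples of $1/d$. A nonzero difference between them is therefore at least $1/d\ge\varepsilon$, which is~\eqref{eq:lp-policy-gap}.

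The step I expect to need the most care is Step~2. I must check that fixing the integral $\boldsymbol\pi$ leaves no non-cleared coefficient in the rows, with the large constants only on the right-hand side. I must also confirm that the variable count really stays within $2N$, since the row count is irrelevant to the determinant bound. I also need to confirm that uniqueness of $(\mathbf v,\mathbf t)$ on the optimal face follows from the Step~2 argument as stated. That argument requires $\pi$ to be deterministic, which is exactly the hypothesis of the lemma.
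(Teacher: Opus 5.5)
Your proposal is correct and follows the paper's argument: a lexicographic LP whose optimal face fixes $\mathbf v^\pi$ and all $\mathscr T_{sa}(\mathbf v^\pi)$, a vertex of that face, and Cramer's rule on the $D_{\mathrm{in}}$-scaled tight system, with $V$ and $M$ appearing only on the right-hand side. The one difference is minor. The paper writes a separate LP that splits $\boldsymbol\nu$ into $\boldsymbol\nu^\pm$ and keeps only the selected-action constraint, giving at most $2N$ variables. You reuse the Step~2 block with $\pi$ fixed, which yields the slightly tighter count $n\le N$.
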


\begin{proof}
For a fixed deterministic policy $\pi$, we use the lexicographic
LP~\eqref{eq:lp-pi-block} to obtain $\mathbf v^\pi$ and all
$\mathscr T_{sa}(\mathbf v^\pi)$ at a common vertex.
We split each unrestricted multiplier as
$\boldsymbol\nu_{sa}=\boldsymbol\nu^+_{sa}-\boldsymbol\nu^-_{sa}$
and use the bounds from Lemma~\ref{lem:lp-uniform-boxes} to obtain
the following lexicographic LP.
\begin{equation}
  \begin{alignedat}{2}
    &\LPopt{lex\,max}{\mathbf v,\mathbf t,\{\boldsymbol\mu_{sa},\boldsymbol\nu^+_{sa},\boldsymbol\nu^-_{sa}\}_{s,a}}
    &\quad&\left(\sum_s[\mathbf v]_s,\ \sum_{s,a}[\mathbf t]_{sa}\right)\\
    &\text{\rm s.t.}
    &\quad&\left\{\begin{aligned}
      &{-}\mathbf D_{sa}^{\top}\boldsymbol\mu_{sa}
        +\mathbf E_{sa}^{\top}\boldsymbol\nu^+_{sa}
        -\mathbf E_{sa}^{\top}\boldsymbol\nu^-_{sa}
        =(1,\gamma\mathbf v,\mathbf0),\\
      &[\mathbf t]_{sa}\le-\mathbf b_{sa}^{\top}\boldsymbol\mu_{sa}
           +\mathbf f_{sa}^{\top}\boldsymbol\nu^+_{sa}
           -\mathbf f_{sa}^{\top}\boldsymbol\nu^-_{sa},\\
      &[\mathbf v]_s\le [\mathbf t]_{s,\pi(s)},\\
      &-V\le [\mathbf v]_s\le V,\qquad
        -V\le[\mathbf t]_{sa}\le V,\\
      &\mathbf0\le\boldsymbol\mu_{sa},\boldsymbol\nu^+_{sa},
                     \boldsymbol\nu^-_{sa}\le M\mathbf1.
    \end{aligned}\right.
  \end{alignedat}
  \label{eq:lp-separation-polytope}
\end{equation}
The constraints hold for every $s\in\mathcal S$ and $a\in\mathcal A_s$,
with each state constraint included once. The
\(S+A\) coordinates of $\mathbf v$ and $\mathbf t$, one multiplier per original
inequality, and two per equality give at most \(2N\) coordinates.
Weak duality and the policy inequalities give
\begin{equation*}
  [\mathbf t]_{sa}\le\mathscr T_{sa}(\mathbf v),\qquad
  [\mathbf v]_s\le[\mathbf t]_{s,\pi(s)}
    \le[\mathscr T_\pi(\mathbf v)]_s.
\end{equation*}
Monotonicity and contraction imply that every feasible $\mathbf v$
satisfies $\mathbf v\le\mathbf v^\pi$.
The proof of Lemma~\ref{lem:lp-uniform-boxes} gives nonnegative
multipliers satisfying~\eqref{eq:lp-optimal-certificate-system} at $\mathbf v^\pi$,
with every coordinate at most $M$.
Since $\mathbf v^\pi=\mathscr T_\pi(\mathbf v^\pi)$, these multipliers
make $\mathbf v=\mathbf v^\pi$ and
$[\mathbf t]_{sa}=\mathscr T_{sa}(\mathbf v^\pi)$ feasible
in~\eqref{eq:lp-separation-polytope}.
Maximizing $\sum_s[\mathbf v]_s$ therefore forces $\mathbf v=\mathbf v^\pi$.
With $\mathbf v$ fixed at this value, every $[\mathbf t]_{sa}$ is bounded
above by $\mathscr T_{sa}(\mathbf v^\pi)$, and the same feasible assignment
attains all these bounds. Maximizing $\sum_{s,a}[\mathbf t]_{sa}$
therefore forces equality for every $(s,a)$.

The feasible region of~\eqref{eq:lp-separation-polytope} is nonempty
and bounded, so we choose a vertex attaining the lexicographic optimum.
Its coordinates are uniquely determined by linearly independent
constraints of~\eqref{eq:lp-separation-polytope} holding with equality.
Multiplying these equations by $D_{\mathrm{in}}$ gives a nonsingular
integer coefficient matrix $\mathbf B$ of order at most $2N$ and an
integer right-hand side. The entries of $\mathbf B$ have magnitude at
most $C$, since $V$ and $M$ occur only on the right-hand side.
Cramer's rule expresses every coordinate as an integer divided by
$\det\mathbf B$. This determinant is a nonzero integer, and its
expansion contains at most $(2N)!$ products, each of magnitude at
most $C^{2N}$. Thus
\begin{equation*}
  1\le|\det\mathbf B|\le(2N)!C^{2N}=\varepsilon^{-1}.
\end{equation*}
At the chosen vertex, $\mathbf v=\mathbf v^\pi$ and
$[\mathbf t]_{sa}=\mathscr T_{sa}(\mathbf v^\pi)$, so these numbers
have the common denominator $|\det\mathbf B|$.
Since $\mathscr T_s(\mathbf v^\pi)$ equals one of the action values,
every positive gap
$\mathscr T_s(\mathbf v^\pi)-\mathscr T_{sa}(\mathbf v^\pi)$
is at least $1/|\det\mathbf B|\ge\varepsilon$.
This proves~\eqref{eq:lp-policy-gap}.
\end{proof}

\subsection{Choosing the weight for the single LP objective}
\label{app:lp-scalarization}

To choose $W$ in~\eqref{eq:lp-pi-objective}, we first bound the
denominators of the vertex coordinates of $\mathcal P$. As in the
proof of Lemma~\ref{lem:lp-policy-separation}, this requires bounds
on the number of variables and on the constraint coefficients after
clearing denominators. By~\eqref{eq:lp-pi-counts}, there are at most
$3(AT_\gamma+1)N$ variables.

Apart from the input coefficients, the constraints
in~\eqref{eq:lp-pi-block} and~\eqref{eq:lp-pi-flag}--\eqref{eq:lp-pi-terminal}
use only coefficients of magnitude at most $\max\{1,2V,\varepsilon^{-1}\}$.
Multiplying each constraint by $D_{\mathrm{in}}$ gives integer
coefficients and integer right-hand sides. The matrix entries inherited
from the input then have magnitude at most $C$, while the remaining
entries have magnitude at most
$D_{\mathrm{in}}\max\{1,2V,\varepsilon^{-1}\}$.
Since $D_{\mathrm{in}}\le C$ and $2V\le M$, every matrix entry has
magnitude at most $C(M+\varepsilon^{-1}+1)$.
The determinant argument from Lemma~\ref{lem:lp-policy-separation}
therefore shows that the coordinates of each vertex have a common
denominator no greater than
\begin{equation}
  D_{\mathrm{LP}}=[3(AT_\gamma+1)N]!
    \bigl[C(M+\varepsilon^{-1}+1)\bigr]^{3(AT_\gamma+1)N}.
  \label{eq:lp-final-scalarization-constants}
\end{equation}
Each $f_j$ in~\eqref{eq:lp-pi-priorities} has integer coefficients.
Its value at each vertex therefore has denominator at most
$D_{\mathrm{LP}}$, so any nonzero difference
$f_j(\mathbf x)-f_j(\mathbf y)$ between two vertices has magnitude
at least $D_{\mathrm{LP}}^{-2}$.

For any $\mathbf x\in\mathcal P$ and $k=0,\ldots,AT_\gamma$, the variable bounds
in~\eqref{eq:lp-pi-value-bounds} and~\eqref{eq:lp-pi-flag} give
\(|f_{5k+1}(\mathbf x)|\le SV\), \(|f_{5k+2}(\mathbf x)|\le AV\),
\(|f_{5k+3}(\mathbf x)|\le SV\), and \(|f_{5k+4}(\mathbf x)|\le A\).
For $k<AT_\gamma$, the definition of $f_{5k+5}$
in~\eqref{eq:lp-pi-priorities} and the simplex constraints
in~\eqref{eq:lp-pi-next} give
\begin{equation*}
  |f_{5k+5}(\mathbf x)|
  =\sum_s\sum_a\omega(s,a)[\boldsymbol\pi_s^{k+1}]_a
  \le\sum_s|\mathcal A_s|=A.
\end{equation*}
Since \(S\le A\) and \(V\ge1\),
\begin{equation}
 |f_j(\mathbf x)|\le AV,
 \qquad \forall\mathbf x\in\mathcal P,\ j\in[5AT_\gamma+4].
 \label{eq:lp-priority-range}
\end{equation}
Thus $|f_j(\mathbf x)-f_j(\mathbf y)|\le2AV$ for any two vertices.
We choose the positive integer
\begin{equation}
  W=2AVD_{\mathrm{LP}}^2+2.
  \label{eq:lp-scalarization-weight}
\end{equation}
This choice ensures $2AV/(W-1)<D_{\mathrm{LP}}^{-2}$.
Together with the vertex denominator
bound~\eqref{eq:lp-final-scalarization-constants} and the objective
bound~\eqref{eq:lp-priority-range}, the argument
in Section~\ref{subsec:lp-pi-exactness} shows that the weighted
objective in~\eqref{eq:lp-pi-objective} has exactly the lexicographic maximizers.

\subsection{Encoding length and construction time of the LP}
\label{app:lp-encoding}

We prove the encoding and construction bounds in
Theorem~\ref{thm:exact-lp} for the LP in~\eqref{eq:lp-pi-objective}.
The bounds in Appendix~\ref{app:lp-primitive-bounds} give
$\log D_{\mathrm{in}}=O(L)$ and $\log C=O(L+\log N)$.
The definitions of $R,V,M$ in~\eqref{eq:lp-arithmetic-constants}
and of $\varepsilon$ in~\eqref{eq:lp-gap-constant},
together with $m!\le m^m$, give
\begin{equation}
  \log R+\log V+\log M+\log\varepsilon^{-1}
   =O\bigl(N(L+\log N)\bigr).
  \label{eq:lp-primitive-bit-bounds}
\end{equation}
Thus $\log[C(M+\varepsilon^{-1}+1)]=O(N(L+\log N))$.
Applying this estimate to~\eqref{eq:lp-final-scalarization-constants}
and~\eqref{eq:lp-scalarization-weight} gives
\begin{equation}
 \log D_{\mathrm{LP}}+\log W
   =O\bigl(AN^2T_\gamma[L+\log(AN\,T_\gamma)]\bigr).
 \label{eq:lp-weight-bit-bound}
\end{equation}
Each variable appears
in at most one objective $f_j$ in~\eqref{eq:lp-pi-priorities}, with
coefficient of magnitude at most $A$. The largest power of $W$
in~\eqref{eq:lp-pi-objective} is $W^{5AT_\gamma+3}$, so each nonzero
objective coefficient has magnitude at most $AW^{5AT_\gamma+3}$
and bit length
\begin{equation}
 O\bigl(\log A+AT_\gamma\log W\bigr)
   =O\bigl(A^2N^2T_\gamma^2[L+\log(AN\,T_\gamma)]\bigr).
 \label{eq:lp-objective-bit-bound}
\end{equation}
Every coefficient and right-hand-side entry in the constraints
in~\eqref{eq:lp-pi-block} and~\eqref{eq:lp-pi-flag}--\eqref{eq:lp-pi-terminal}
has a denominator dividing $D_{\mathrm{in}}$. After multiplication by
$D_{\mathrm{in}}$, each has magnitude at most
$C(M+\varepsilon^{-1}+1)$.
Its bit length is therefore $O(N(L+\log N))$.
By~\eqref{eq:lp-pi-counts}, a dense description of the LP has
\(O((AN\,T_\gamma)^2)\) entries. Together
with~\eqref{eq:lp-objective-bit-bound} and $A\le N$, these bounds
give total encoding length polynomial in $N,L,T_\gamma$.

We next bound the arithmetic cost of constructing~\eqref{eq:lp-pi-objective}
and the bit lengths of intermediate numbers.
We first compute $T_\gamma$ by successive multiplications by $\gamma$
and stop at the first $k\ge1$ satisfying
$\gamma^k<(1-\gamma)/2$, as in~\eqref{eq:discount-block}.
Maintaining the numerator and denominator of each power and comparing
by cross-multiplication uses $O(T_\gamma)$ integer operations on
$O(T_\gamma L_\gamma)$-bit integers.
Forming $D_{\mathrm{in}}$ and $C$ from the input numerators and
denominators in~\eqref{eq:lp-clearing-constants} uses $O(N^2)$
arithmetic operations and comparisons.
Each factorial or power in~\eqref{eq:lp-arithmetic-constants},~\eqref{eq:lp-gap-constant},
and~\eqref{eq:lp-final-scalarization-constants}
involves at most $3(AT_\gamma+1)N$ factors.
Evaluating these formulas and~\eqref{eq:lp-scalarization-weight}
therefore computes the remaining constants using polynomially many
operations in $N,T_\gamma$.
We obtain the objective weights $1,W,\ldots,W^{5AT_\gamma+3}$
by repeatedly multiplying by $W$.
Inserting these constants and the input coefficients
into~\eqref{eq:lp-pi-objective} completes the LP, whose number of entries
is polynomial in $N,T_\gamma$ by~\eqref{eq:lp-pi-counts}.
The bounds in~\eqref{eq:lp-primitive-bit-bounds}--\eqref{eq:lp-objective-bit-bound}
also apply to intermediate sums and products in computing the integer
constants and objective weights, since all summands are nonnegative
and all factors are positive integers.
Together with the preceding bounds for $T_\gamma,D_{\mathrm{in}},C$,
this gives polynomial bit length in $N,L,T_\gamma$ for all intermediate
numbers in the LP construction, and hence construction time polynomial
in $N,L,T_\gamma$.

\section{Completion of the proof of Proposition~\ref{prop:general-lp-hardness}}
\label{app:general-lp-reduction}

\begin{proof}
By~\eqref{eq:general-lp-embedding}, the RMDP algorithm computes
the minimum of a rational linear function over any nonempty bounded
rational polyhedron. We show how to use these values to test
feasibility, detect unboundedness, and recover an optimal solution
of a general rational LP. Given
$\mathbf A\in\mathbb Q^{m\times d}$, $\mathbf b\in\mathbb Q^m$, and
$\mathbf c\in\mathbb Q^d$, we consider a general rational LP and its dual
\begin{equation}
  \begin{alignedat}[t]{3}
    &(\mathrm P)\quad &&{\min\nolimits_{\mathbf x\in\mathbb R^d}}
    &\quad&\mathbf c^\top\mathbf x\\
    &&&\text{\rm s.t.}
    &\quad&\mathbf A\mathbf x\ge\mathbf b,
  \end{alignedat}
  \qquad
  \begin{alignedat}[t]{3}
    &(\mathrm D)\quad &&{\max\nolimits_{\mathbf y\in\mathbb R^m}}
    &\quad&\mathbf b^\top\mathbf y\\
    &&&\text{\rm s.t.}
    &\quad&\mathbf A^\top\mathbf y=\mathbf c,\quad \mathbf y\ge\mathbf0.
  \end{alignedat}
  \label{eq:general-lp-primal-dual}
\end{equation}

\vspace{5pt}
\noindent\textbf{Testing feasibility.}
We test feasibility of $(\mathrm P)$
in~\eqref{eq:general-lp-primal-dual} using the LP
\begin{equation}
  \begin{alignedat}{2}
    &\LPopt{min}{\mathbf x,t}
    &\quad&-t\\
    &\text{\rm s.t.}
    &\quad&\left\{\begin{aligned}
      &\mathbf A\mathbf x\ge t\mathbf b,\\
      &-\mathbf1\le\mathbf x\le\mathbf1,\quad 0\le t\le1.
    \end{aligned}\right.
  \end{alignedat}
  \label{eq:bounded-lp-feasibility}
\end{equation}
The feasible set of~\eqref{eq:bounded-lp-feasibility} contains the origin
and is compact. Using reward $r=-\frac{t}{2}$, the RMDP algorithm
therefore computes the minimum in~\eqref{eq:bounded-lp-feasibility}.
Any solution $\mathbf x$ of $\mathbf A\mathbf x\ge\mathbf b$ gives a
feasible point $(t\mathbf x,t)$ of~\eqref{eq:bounded-lp-feasibility}
with $t=(1+\|\mathbf x\|_\infty)^{-1}>0$.
Conversely, a feasible point $(\mathbf x,t)$ with $t>0$ gives the
solution $\mathbf x/t$ of the original system.
Thus $(\mathrm P)$ is feasible exactly when the minimum
in~\eqref{eq:bounded-lp-feasibility} is negative.
If this minimum is zero, we report infeasibility and stop.

\vspace{5pt}
\noindent\textbf{Optimality and solution recovery.}
When $(\mathrm P)$ is feasible, we determine whether it has a finite
optimum and, if so, recover an optimal solution.
We test feasibility of the system
\begin{equation}
  \mathbf A\mathbf x\ge\mathbf b,\qquad
  \mathbf A^\top\mathbf y=\mathbf c,\qquad
  \mathbf y\ge\mathbf0,\qquad
  \mathbf c^\top\mathbf x=\mathbf b^\top\mathbf y.
  \label{eq:general-lp-optimality-system}
\end{equation}
Any solution $(\mathbf x,\mathbf y)$ is feasible for $(\mathrm P)$
and $(\mathrm D)$ with equal objective values, so $\mathbf x$ is
optimal for $(\mathrm P)$ by weak duality.
Strong duality guarantees such a pair whenever $(\mathrm P)$
has a finite optimum.
To perform this test using~\eqref{eq:general-lp-embedding},
we scale both $\mathbf x$ and $\mathbf y$, as
in~\eqref{eq:bounded-lp-feasibility}, and consider
\begin{equation}
  \begin{alignedat}{2}
    &\LPopt{min}{\mathbf x,\mathbf y,t}
    &\quad&-t\\
    &\text{\rm s.t.}
    &\quad&\left\{\begin{aligned}
      &\mathbf A\mathbf x\ge t\mathbf b,\quad
        \mathbf A^\top\mathbf y=t\mathbf c,\\
      &\mathbf c^\top\mathbf x=\mathbf b^\top\mathbf y,\\
      &-\mathbf1\le\mathbf x\le\mathbf1,\quad
        \mathbf0\le\mathbf y\le\mathbf1,\quad 0\le t\le1.
    \end{aligned}\right.
  \end{alignedat}
  \label{eq:bounded-lp-optimality-test}
\end{equation}
The origin is feasible and the feasible set is compact,
so we compute the optimal value $[\mathbf v^\star]_{s_1}$
of~\eqref{eq:bounded-lp-optimality-test}
using~\eqref{eq:general-lp-embedding}.
Any solution $(\mathbf x,\mathbf y)$
of~\eqref{eq:general-lp-optimality-system} gives a feasible point
$(t\mathbf x,t\mathbf y,t)$ of~\eqref{eq:bounded-lp-optimality-test}
for sufficiently small $t>0$.
Conversely, a feasible point $(\mathbf x,\mathbf y,t)$ with $t>0$
gives the solution $(\mathbf x/t,\mathbf y/t)$
of~\eqref{eq:general-lp-optimality-system}.
Thus~\eqref{eq:general-lp-optimality-system} is feasible exactly when
$[\mathbf v^\star]_{s_1}<0$.
If $[\mathbf v^\star]_{s_1}=0$, the already feasible problem
$(\mathrm P)$ has no finite optimum, so we report that it is unbounded below.
If $[\mathbf v^\star]_{s_1}<0$, we add the equality
$t=-[\mathbf v^\star]_{s_1}$ to the constraints
of~\eqref{eq:bounded-lp-optimality-test}.
Applying the construction in~\eqref{eq:general-lp-embedding}
to this feasible set with reward $r=[\mathbf x]_1/{2}$
returns the minimum of the first coordinate $[\mathbf x]_1$.
We add an equality fixing $[\mathbf x]_1$ at this value and
repeat the construction for each remaining coordinate of $\mathbf x$
and $\mathbf y$, subject to all previously added equalities.
Each minimum is attained on a nonempty compact set, so each new
equality preserves nonemptiness.
Once all coordinates are determined, we return $\mathbf x/t$,
which is optimal for $(\mathrm P)$
by~\eqref{eq:general-lp-optimality-system}.

\vspace{5pt}
\noindent\textbf{Complexity.}
The procedure makes at most $m+d+2$ calls to the RMDP algorithm,
one for each of~\eqref{eq:bounded-lp-feasibility}
and~\eqref{eq:bounded-lp-optimality-test} and at most $m+d$
to recover the coordinates of $\mathbf x$ and $\mathbf y$.
Each RMDP input has dimension and constraint count polynomial in $m,d$.
The equalities added during recovery successively select nonempty
faces of the polytope in~\eqref{eq:bounded-lp-optimality-test}.
Each such face contains a vertex of that polytope, so every fixed
value is a coordinate of one of its vertices.
Cramer's rule bounds the bit lengths of these coordinates
polynomially in the original LP input length, so all RMDP inputs
have polynomial encoding length.
Since the RMDP algorithm is strongly polynomial, these calls use
polynomially many arithmetic operations and comparisons in $m,d$,
with intermediate rationals of polynomial bit length.
Forming the inputs and dividing the recovered vectors by $t$
satisfy the same bounds, so the entire procedure is strongly polynomial.
\end{proof}

\section{Proofs for weighted \texorpdfstring{$\ell_1$}{L1} and Wasserstein uncertainty}
\label{app:budget-tools}

We prove
Lemmas~\ref{lem:budget-minimizers} and~\ref{lem:budget-error},
which provide the computation costs and error bounds used in
Corollary~\ref{cor:budget-models}.

\subsection{Proof of Lemma~\ref{lem:budget-minimizers}(i)}
\label{app:weighted-oracle}

\begin{proof}
Given rational $\overline{\mathbf p}\in\Delta(\mathcal S)$,
$\mathbf w\ge\mathbf0$, $\rho\ge0$, and $\mathbf v\in\mathbb Q^S$,
we set $[\mathbf C]_{ii}=0$ and
$[\mathbf C]_{ij}=[\mathbf w]_i+[\mathbf w]_j$ for $i\ne j$,
computing these costs as needed.
To solve the transportation LP~\eqref{eq:transport-primal} with these costs,
we track the minimizing indices in~\eqref{eq:transport-dual} as
the multiplier $\lambda$ increases.
\looseness=-1 We find the first multiplier for which the corresponding transport cost
is at most $\rho$ and construct an optimal matrix from the minimizing
choices at that multiplier.

\vspace{5pt}
\noindent\textbf{Representing the functions $\psi_i$.}
In the definition of $\psi_i$ in~\eqref{eq:transport-dual},
the candidate for $j=i$ is $[\mathbf v]_i$, whereas the candidate
for $j\ne i$ is $[\mathbf v]_j+\lambda([\mathbf w]_i+[\mathbf w]_j)$.
Since $[\mathbf v]_i+2\lambda[\mathbf w]_i\ge[\mathbf v]_i$,
we obtain
\begin{equation}
  \begin{aligned}
    \psi_i(\lambda)
    &=\min\Biggl\{[\mathbf v]_i,\;
      \min_{j\in\mathcal S}
      \bigl\{[\mathbf v]_j+\lambda([\mathbf w]_i+[\mathbf w]_j)\bigr\}
      \Biggr\}=\min\Biggl\{[\mathbf v]_i,\;
      \lambda[\mathbf w]_i+
      \underbrace{\min_{j\in\mathcal S}
        \bigl\{[\mathbf v]_j+\lambda[\mathbf w]_j\bigr\}}
        _{=:\,\Psi(\lambda)}\Biggr\}.
  \end{aligned}
  \label{eq:weighted-row-envelope}
\end{equation}
We represent all $\psi_i$ in~\eqref{eq:weighted-row-envelope} in two steps.
\begin{itemize}[leftmargin=*]
\item \textbf{Step 1. Constructing $\Psi$.}
We compute the breakpoints of $\Psi$ and a minimizing index on each interval.
We process the lines in decreasing order of slope, keeping one line
with the smallest intercept for each slope.
A new line has smaller slope and can therefore replace only the final
portion of the minimum constructed so far.
Starting from the last piece, we remove a piece whenever the new line
lies at or below that piece at the piece's left endpoint.
The new piece starts at the intersection with the last remaining line,
or at zero if none remains.
Each line is added once and removed at most once, giving
$O(S\log(2+S))$ operations and comparisons including sorting.
The resulting $\Psi$ has nonincreasing slopes and at most $S-1$
positive breakpoints, since each line contributes at most one piece.
At zero and each positive breakpoint, we select the minimizing index
from the interval immediately to its right.

\item \textbf{Step 2. Computing the thresholds $\lambda_i$.}
For each $i$, we define $\lambda_i$ as the smallest $\lambda\ge0$
at which the constant $[\mathbf v]_i$ attains the minimum
in~\eqref{eq:weighted-row-envelope}, taking $\lambda_i=\infty$ if it never does.
Since $\mathbf w\ge\mathbf0$, the other term
$\lambda[\mathbf w]_i+\Psi(\lambda)$ is continuous and nondecreasing.
Thus a minimizing index for row $i$ is the index recorded for $\Psi$
when $\lambda<\lambda_i$, and $i$ when $\lambda\ge\lambda_i$.
We set $\lambda_i=0$ if $[\mathbf v]_i=\Psi(0)$.
Otherwise, binary search finds the first positive breakpoint of $\Psi$
satisfying
$\lambda[\mathbf w]_i+\Psi(\lambda)\ge[\mathbf v]_i$.
Each test evaluates $\Psi$ using its recorded index in $O(1)$ operations,
so the search costs $O(\log(2+S))$.
On the piece ending at this breakpoint, or on the final piece if no
breakpoint satisfies the inequality, we denote the recorded minimizing
index by $j(i)$.
Solving $[\mathbf v]_{j(i)}+\lambda([\mathbf w]_i+[\mathbf w]_{j(i)})=[\mathbf v]_i$
gives
$\lambda_i=\frac{[\mathbf v]_i-[\mathbf v]_{j(i)}}{[\mathbf w]_i+[\mathbf w]_{j(i)}}$
when the denominator is positive, and $\lambda_i=\infty$ otherwise.
Computing all thresholds and sorting their finite positive values together
with the breakpoints of $\Psi$ therefore costs $O(S\log(2+S))$
operations and comparisons.
We group equal values, obtaining a list of at most $2S-1$ distinct points.
\end{itemize}

\vspace{5pt}
\noindent\textbf{Finding the multiplier.}
We use the minimizing indices to construct a matrix and compare
its cost with the budget in~\eqref{eq:transport-primal}.
For a given $\lambda\ge0$, let $j$ be the selected minimizing index
of $\Psi(\lambda)$ and $\mathcal I=\{i:\lambda<\lambda_i\}$.
We define $\boldsymbol\Pi(\lambda)$ by setting
$[\boldsymbol\Pi(\lambda)]_{ij}=[\overline{\mathbf p}]_i$ for
$i\in\mathcal I$ and $[\boldsymbol\Pi(\lambda)]_{ii}=[\overline{\mathbf p}]_i$
for $i\notin\mathcal I$, with all remaining entries zero.
Then $\boldsymbol\Pi(\lambda)\ge0$ and
$\boldsymbol\Pi(\lambda)\mathbf1=\overline{\mathbf p}$, and each positive
entry occurs at an index attaining the corresponding $\psi_i(\lambda)$.
For $i\in\mathcal I$, the inequality
$\lambda[\mathbf w]_i+\Psi(\lambda)<[\mathbf v]_i$ ensures $j\ne i$.
Thus the matrix has cost
\begin{equation*}
  \langle\mathbf C,\boldsymbol\Pi(\lambda)\rangle
  =\sum_{i\in\mathcal I}[\overline{\mathbf p}]_i[\mathbf w]_i
   +[\mathbf w]_j\sum_{i\in\mathcal I}[\overline{\mathbf p}]_i.
\end{equation*}
The matrix $\boldsymbol\Pi(\lambda)$ is constant between consecutive
stored thresholds and breakpoints.
As $\lambda$ increases, $[\mathbf w]_j$ is nonincreasing and
$\mathcal I$ can only shrink, so the cost is nonincreasing.
On each interval where $\boldsymbol\Pi(\lambda)$ is constant,
the dual objective in~\eqref{eq:transport-dual} has slope
$\langle\mathbf C,\boldsymbol\Pi(\lambda)\rangle-\rho$.

If the cost at $\lambda=0$ is at most $\rho$, we set $\lambda^\star=0$.
Otherwise, we scan the merged list to find the first
$\lambda^\star>0$ at which the cost is at most $\rho$.
On the final interval of the merged list, every remaining
$i\in\mathcal I$ has $[\mathbf w]_i+[\mathbf w]_j=0$.
Otherwise, $[\mathbf v]_j+\lambda([\mathbf w]_i+[\mathbf w]_j)$
would eventually exceed $[\mathbf v]_i$, giving a further finite
threshold $\lambda_i$. Thus the final cost is zero and the search terminates.
During the scan, removing $i$ from $\mathcal I$ subtracts
$[\overline{\mathbf p}]_i[\mathbf w]_i$ and
$[\overline{\mathbf p}]_i$ from the two sums, while a change of $j$
only updates $[\mathbf w]_j$.
Each row is removed at most once and $j$ changes at most $S-1$ times,
so computing the two sums and scanning the list require
$O(S)$ operations and comparisons.

\vspace{5pt}
\noindent\textbf{Recovering an optimal distribution.}
If $\lambda^\star=0$, we set $\boldsymbol\Pi^\star=\boldsymbol\Pi(0)$.
Otherwise, we define
\begin{equation*}
  \boldsymbol\Pi^-=\lim_{\lambda\uparrow\lambda^\star}\boldsymbol\Pi(\lambda),
  \qquad
  \boldsymbol\Pi^+=\boldsymbol\Pi(\lambda^\star).
\end{equation*}
The left limit is the constant matrix on the interval immediately
to the left of $\lambda^\star$.
The matrices satisfy
$\langle\mathbf C,\boldsymbol\Pi^-\rangle>\rho
  \ge\langle\mathbf C,\boldsymbol\Pi^+\rangle$.
For each row $i$, the linear functions selected on the two adjacent
intervals equal $\psi_i$ on their respective intervals and, by continuity,
at their common endpoint $\lambda^\star$.
We choose the mixing coefficient so that the combined cost equals $\rho$,
\begin{equation}
  \boldsymbol\Pi^\star=\theta\boldsymbol\Pi^-+(1-\theta)\boldsymbol\Pi^+,
  \qquad
  \theta=\frac{\rho-\langle\mathbf C,\boldsymbol\Pi^+\rangle}
                   {\langle\mathbf C,\boldsymbol\Pi^--\boldsymbol\Pi^+\rangle}.
  \label{eq:transport-optimal-interpolation}
\end{equation}
Then $0\le\theta<1$ and
$\langle\mathbf C,\boldsymbol\Pi^\star\rangle=\rho$.
In both cases, $\boldsymbol\Pi^\star$ is feasible for~\eqref{eq:transport-primal}
and satisfies $\lambda^\star(\langle\mathbf C,\boldsymbol\Pi^\star\rangle-\rho)=0$.
Every positive entry $[\boldsymbol\Pi^\star]_{ij}$ satisfies
$[\mathbf v]_j+\lambda^\star[\mathbf C]_{ij}=\psi_i(\lambda^\star)$.
\looseness=-1 Multiplying by $[\boldsymbol\Pi^\star]_{ij}$ and summing over $i,j$ gives
\begin{equation*}
  \begin{aligned}
    \mathbf v^\top(\boldsymbol\Pi^\star)^\top\mathbf1
    &=\sum_i[\overline{\mathbf p}]_i\psi_i(\lambda^\star)
      -\lambda^\star\langle\mathbf C,\boldsymbol\Pi^\star\rangle=\sum_i[\overline{\mathbf p}]_i\psi_i(\lambda^\star)
      -\lambda^\star\rho.
  \end{aligned}
\end{equation*}
The last expression is the dual objective in~\eqref{eq:transport-dual},
so weak duality proves that $\boldsymbol\Pi^\star$ solves~\eqref{eq:transport-primal}.
By Lemma~\ref{lem:weighted-transport}, its column sums
$(\boldsymbol\Pi^\star)^\top\mathbf1$ give a minimizing transition distribution.
Forming the matrices and computing the costs, column sums, and objective
value require $O(S)$ operations, since $\boldsymbol\Pi^-$ and
$\boldsymbol\Pi^+$ each have at most $S$ nonzero entries and
$\boldsymbol\Pi^\star$ has at most $2S$.

In row $i$, the entries of $\boldsymbol\Pi^\pm$ and $\boldsymbol\Pi(0)$
are zero or $[\overline{\mathbf p}]_i$.
By~\eqref{eq:transport-optimal-interpolation}, the returned distribution
therefore has encoding length polynomial in that of
$(\overline{\mathbf p},\mathbf w,\rho)$, independently of $\mathbf v$.
The breakpoints and finite thresholds are ratios of sums or differences
of input entries, so all computed rationals have polynomial bit length
in the full input, including $\mathbf v$.
\end{proof}

\subsection{Proof of Lemma~\ref{lem:budget-minimizers}(ii)}
\label{app:wasserstein-oracle}

\begin{proof}
We solve~\eqref{eq:transport-primal} using the multiplier search and
interpolation in Appendix~\ref{app:weighted-oracle}, constructing
the functions $\psi_i$ separately for the given cost matrix $\mathbf C$.
We denote the nonempty feasible region by
\begin{equation}
  \mathcal K(\overline{\mathbf p},\rho)
  =\{\boldsymbol\Pi\ge0:\boldsymbol\Pi\mathbf1=\overline{\mathbf p},
                    \ \langle\mathbf C,\boldsymbol\Pi\rangle\le\rho\}.
  \label{eq:wasserstein-plan-family}
\end{equation}

\vspace{5pt}
\noindent\textbf{Representing the functions $\psi_i$.}
For each row $i$, we apply Step 1 in Appendix~\ref{app:weighted-oracle}
to the lines $\lambda\mapsto[\mathbf v]_j+\lambda[\mathbf C]_{ij}$,
$j\in\mathcal S$.
This computes the breakpoints and minimizing indices of $\psi_i$
in~\eqref{eq:transport-dual} in $O(S\log(2+S))$ operations and
comparisons per row.
Constructing these representations and sorting and grouping all positive
breakpoints therefore costs $O(S^2\log(2+S))$ in total.
We write the distinct positive breakpoints as
$0<\lambda_1<\cdots<\lambda_K$, where $K\le S(S-1)$, and set $\lambda_0=0$.
For $k=0,\ldots,K$, we denote by $j_i^k$ the minimizing index on the
interval immediately to the right of $\lambda_k$.
The selected lines have nonincreasing slopes, and the final line
has the smallest slope in the row, so
\begin{equation*}
  [\mathbf C]_{i,j_i^0}\ge [\mathbf C]_{i,j_i^1}\ge\cdots\ge [\mathbf C]_{i,j_i^K}
     =\min_l [\mathbf C]_{il}.
\end{equation*}
For $k\ge1$, both $j_i^{k-1}$ and $j_i^k$ attain $\psi_i(\lambda_k)$
by continuity.

We store these indices so that
$\sum_i[\overline{\mathbf p}]_i[\mathbf C]_{i,j_i^k}$ can be computed
in $O(S)$ operations for any $k$ and nominal distribution
$\overline{\mathbf p}$.
A balanced binary tree stores one index per leaf, with one leaf per row.
Updating an index copies only its leaf and the path to the root,
sharing unchanged subtrees, and costs $O(\log(2+S))$ operations.
We store the initial root and the root obtained after processing each breakpoint.
There are at most $S(S-1)$ index changes, so constructing and storing these trees requires
$O(S^2\log(2+S))$ operations and space.
Each stored tree has $O(S)$ nodes, so traversing it recovers all
$S$ indices and computes the sum in $O(S)$ operations.
The construction depends only on $\mathbf C$ and $\mathbf v$.

\vspace{5pt}
\noindent\textbf{Computing a minimizing distribution.}
For a given $\overline{\mathbf p}$ and $\rho$, if
$\sum_i[\overline{\mathbf p}]_i[\mathbf C]_{i,j_i^0}\le\rho$,
we set $[\boldsymbol\Pi^\star]_{i,j_i^0}=[\overline{\mathbf p}]_i$
for every $i$ and all remaining entries to zero.
This matrix is feasible and attains the dual objective
in~\eqref{eq:transport-dual} at $\lambda=0$, so it is optimal.
Otherwise, the sums $\sum_i[\overline{\mathbf p}]_i[\mathbf C]_{i,j_i^k}$
are nonincreasing in $k$, and nonemptiness of
$\mathcal K(\overline{\mathbf p},\rho)$ gives
$\sum_i[\overline{\mathbf p}]_i [\mathbf C]_{i,j_i^K}=\sum_i[\overline{\mathbf p}]_i\min_l [\mathbf C]_{il}\le\rho$.
Thus there is a first $k$ satisfying
\begin{equation}
  \sum_i[\overline{\mathbf p}]_i [\mathbf C]_{i,j_i^{k-1}}>\rho\ge\sum_i[\overline{\mathbf p}]_i [\mathbf C]_{i,j_i^k}.
  \label{eq:wasserstein-budget-crossing}
\end{equation}
Binary search tests $O(\log(2+S))$ breakpoints because $K\le S(S-1)$.
Each test computes the sum using the stored tree in $O(S)$ operations,
so the search costs $O(S\log(2+S))$ operations and comparisons.
We define $\boldsymbol\Pi^-$ and $\boldsymbol\Pi^+$ by
\begin{equation*}
  [\boldsymbol\Pi^-]_{i,j_i^{k-1}}
  =[\boldsymbol\Pi^+]_{i,j_i^k}
  =[\overline{\mathbf p}]_i,\qquad i\in\mathcal S,
\end{equation*}
with all remaining entries zero, and define $\boldsymbol\Pi^\star$
by~\eqref{eq:transport-optimal-interpolation}.
The cost inequalities in~\eqref{eq:wasserstein-budget-crossing} ensure
that $\boldsymbol\Pi^\star\ge0$, its row sums are $\overline{\mathbf p}$,
and its cost is $\rho$.
Every positive entry $[\boldsymbol\Pi^\star]_{ij}$ satisfies
$[\mathbf v]_j+\lambda_k[\mathbf C]_{ij}=\psi_i(\lambda_k)$.
The objective equality proved in Appendix~\ref{app:weighted-oracle}
therefore shows that $\boldsymbol\Pi^\star$ attains the dual objective
in~\eqref{eq:transport-dual} at $\lambda_k$ and is optimal.
In either case, $(\boldsymbol\Pi^\star)^\top\mathbf1$ is a minimizing
distribution. Forming the nonzero entries of $\boldsymbol\Pi^\star$
and computing its column sums and objective value take $O(S)$ operations,
since it has at most $2S$ nonzero entries.
Together with the search, this gives $O(S\log(2+S))$ operations and
comparisons per problem.

The matrix constructed at $\lambda=0$ and the matrices $\boldsymbol\Pi^\pm$
have entries zero or $[\overline{\mathbf p}]_i$ in row $i$.
By~\eqref{eq:transport-optimal-interpolation}, the returned distribution
therefore has encoding length polynomial in that of
$(\mathbf C,\overline{\mathbf p},\rho)$, independently of $\mathbf v$.
Each breakpoint is a ratio of differences of entries of $\mathbf v$
and $\mathbf C$, so all computed rationals have polynomial bit length
in the full input, including $\mathbf v$.
\end{proof}

\subsection{Proof of Lemma~\ref{lem:budget-error}(i)}
\label{app:weighted-progress}

\begin{proof}
Fix $\mathbf v$, $\mathbf w\ge\mathbf0$, and a weighted
$\ell_1$ set $\mathcal U$ with center $\overline{\mathbf p}$ and radius $\rho$.
Let $\mathbf u\ge\mathbf v$ and let $\mathbf p$
minimize $\mathbf u^\top\mathbf p$ over $\mathcal U$.
We construct $g$ satisfying~\eqref{eq:budget-objective-gap} and
bound the number of positive values it can take for fixed $\mathbf v,\mathbf w$.
For $\phi(\mathbf z)=\sum_i[\mathbf w]_i|[\mathbf z]_i-[\overline{\mathbf p}]_i|$,
we write $w_i^-$ and $w_i^+$ for its one-sided rates of change
at $\mathbf p$ when coordinate $i$ decreases or increases, respectively,
\begin{equation*}
  w_i^-=
  \begin{cases}-[\mathbf w]_i,&[\mathbf p]_i>[\overline{\mathbf p}]_i,\\ [\mathbf w]_i,&[\mathbf p]_i\le [\overline{\mathbf p}]_i,\end{cases}
  \qquad
  w_i^+=
  \begin{cases}-[\mathbf w]_i,&[\mathbf p]_i<[\overline{\mathbf p}]_i,\\ [\mathbf w]_i,&[\mathbf p]_i\ge [\overline{\mathbf p}]_i.\end{cases}
\end{equation*}
We write a direction from $\mathbf p$ as $\mathbf y-\mathbf x$,
with $\mathbf x,\mathbf y\ge\mathbf0$, and let $g$ be the optimal value of
\begin{equation}
  \begin{alignedat}{2}
    &\LPopt{max}{\mathbf x,\mathbf y}
    &\quad&\mathbf v^\top(\mathbf x-\mathbf y)\\
    &\text{\rm s.t.}
    &\quad&\left\{\begin{aligned}
      &\mathbf x,\mathbf y\ge\mathbf0,\quad [\mathbf x]_i=0\text{ if }[\mathbf p]_i=0,\\
      &\mathbf1^\top\mathbf x=\mathbf1^\top\mathbf y\le1,\\
      &\sum_i(w_i^-[\mathbf x]_i+w_i^+[\mathbf y]_i)\le0
           \text{ if }\phi(\mathbf p)=\rho.
    \end{aligned}\right.
  \end{alignedat}
  \label{eq:weighted-direction-certificate}
\end{equation}
The feasible set is compact and contains the origin, so the maximum
is attained and $g\ge0$.

\vspace{5pt}
\noindent\textbf{Verifying the bounds for $g$ in~\eqref{eq:budget-objective-gap}.}
Every feasible $(\mathbf x,\mathbf y)$ gives a direction
$\mathbf y-\mathbf x$ that preserves the simplex for sufficiently
small positive steps. The directional derivative of $\phi$ satisfies
\begin{equation}
  \begin{aligned}
    \phi'(\mathbf p;\mathbf y-\mathbf x)
    &=\sum_i\left\{\begin{aligned}
      &w_i^-[\mathbf x]_i+w_i^+[\mathbf y]_i,
        &&[\mathbf p]_i\ne[\overline{\mathbf p}]_i,\\
      &[\mathbf w]_i|[\mathbf y]_i-[\mathbf x]_i|,
        &&[\mathbf p]_i=[\overline{\mathbf p}]_i,
    \end{aligned}\right.\\
    &\le\sum_i(w_i^-[\mathbf x]_i+w_i^+[\mathbf y]_i).
  \end{aligned}
  \label{eq:weighted-directional-derivative}
\end{equation}
The inequality uses
$|[\mathbf y]_i-[\mathbf x]_i|\le[\mathbf x]_i+[\mathbf y]_i$
when $[\mathbf p]_i=[\overline{\mathbf p}]_i$, where
$w_i^-=w_i^+=[\mathbf w]_i$.
If $\phi(\mathbf p)=\rho$,~\eqref{eq:weighted-direction-certificate}--\eqref{eq:weighted-directional-derivative}
give $\phi'(\mathbf p;\mathbf y-\mathbf x)\le0$.
Along this direction, $\phi$ is affine for sufficiently small positive
steps and therefore cannot increase.
If $\phi(\mathbf p)<\rho$, continuity ensures that sufficiently small
steps still satisfy $\phi\le\rho$.
Optimality of $\mathbf p$ for $\mathbf u$ gives
$\mathbf u^\top(\mathbf y-\mathbf x)\ge0$.
Since $\mathbf u\ge\mathbf v$, it follows that
\begin{equation*}
  \mathbf v^\top(\mathbf x-\mathbf y)
  \le(\mathbf u-\mathbf v)^\top(\mathbf y-\mathbf x)
  \le\|\mathbf u-\mathbf v\|_\infty\mathbf1^\top\mathbf y
  \le\|\mathbf u-\mathbf v\|_\infty.
\end{equation*}
Maximizing over feasible $(\mathbf x,\mathbf y)$ gives
$g\le\|\mathbf u-\mathbf v\|_\infty$.
To bound the gap by $g$, we choose
$\mathbf p^\star\in\operatorname*{arg\,min}_{\mathbf z\in\mathcal U}
\mathbf v^\top\mathbf z$ and set
$[\mathbf x]_i=\max\{[\mathbf p]_i-[\mathbf p^\star]_i,0\}$, $[\mathbf y]_i=\max\{[\mathbf p^\star]_i-[\mathbf p]_i,0\}$.
Then $\mathbf1^\top\mathbf x=\mathbf1^\top\mathbf y\le1$ and
$[\mathbf x]_i=0$ whenever $[\mathbf p]_i=0$.
These choices also satisfy $[\mathbf x]_i[\mathbf y]_i=0$ for every $i$,
so equality holds in~\eqref{eq:weighted-directional-derivative}.
\looseness=-1 Convexity of $\phi$ then gives
\begin{equation*}
  \sum_i(w_i^-[\mathbf x]_i+w_i^+[\mathbf y]_i)
  =\phi'(\mathbf p;\mathbf p^\star-\mathbf p)
  \le\phi(\mathbf p^\star)-\phi(\mathbf p).
\end{equation*}
If $\phi(\mathbf p)=\rho$, the right-hand side is nonpositive
because $\mathbf p^\star\in\mathcal U$.
Thus $(\mathbf x,\mathbf y)$ is feasible
in~\eqref{eq:weighted-direction-certificate} and has objective
$\mathbf v^\top(\mathbf p-\mathbf p^\star)$,
proving~\eqref{eq:budget-objective-gap}.

\vspace{5pt}
\noindent\textbf{Counting the possible values of $g$.}
We fix $\mathbf v,\mathbf w$ and bound the number of possible positive optimal
values of~\eqref{eq:weighted-direction-certificate} as
$\overline{\mathbf p},\rho,\mathbf p$ vary.
These choices affect~\eqref{eq:weighted-direction-certificate} only
through the variables fixed at zero, whether the last constraint
is imposed, and the signs of its coefficients.
Suppose $g>0$. Every optimal pair satisfies
$\mathbf1^\top\mathbf x=\mathbf1^\top\mathbf y=1$.
If the common sum were less than one, scaling both vectors to make
it one would preserve feasibility and strictly increase the objective.
We choose an optimal pair $(\mathbf x,\mathbf y)$ with the fewest
positive coordinates and consider perturbations that keep its zero
coordinates fixed and satisfy
\begin{equation*}
  \left\{\begin{aligned}
    &\mathbf1^\top\Delta\mathbf x=0,\quad
      \mathbf1^\top\Delta\mathbf y=0,\\
    &\sum_i\bigl(w_i^-[\Delta\mathbf x]_i
                 +w_i^+[\Delta\mathbf y]_i\bigr)=0,
      \quad\text{if the last constraint in~\eqref{eq:weighted-direction-certificate} is imposed with equality.}
  \end{aligned}\right.
\end{equation*}
If a nonzero perturbation existed, sufficiently small steps in both
directions would be feasible. Optimality would therefore give
$\mathbf v^\top(\Delta\mathbf x-\Delta\mathbf y)=0$.
We could choose the sign of the perturbation so that the left-hand
side of the last constraint in~\eqref{eq:weighted-direction-certificate}
does not increase, then move in that direction until a positive coordinate
first becomes zero, preserving feasibility and the objective value.
This contradicts the choice of $(\mathbf x,\mathbf y)$.
Thus the two or three equations above admit no nonzero perturbation
supported on the positive coordinates of $(\mathbf x,\mathbf y)$,
so this pair has at most three positive coordinates.

With two positive coordinates, the mass equations give
$[\mathbf x]_i=[\mathbf y]_j=1$, yielding at most $S^2$ objective
values $[\mathbf v]_i-[\mathbf v]_j$.
With three, one vector has two positive coordinates and the other
has one, giving $2\binom{S}{2}S$ choices of positions.
The last constraint in~\eqref{eq:weighted-direction-certificate}
holds with equality. Each of its three coefficients is the corresponding
weight or its negative, giving at most $2^3$ sign choices.
Together with the two mass equations, this equality uniquely determines
the positive coordinates, as proved above. The coefficients depend
only on $\mathbf w$, and the right-hand sides are $1,1,0$.
Thus each combination of positions and signs corresponds to at most
one selected optimal pair and hence one objective value determined
by $\mathbf v,\mathbf w$.
Every positive $g$ is therefore among at most
\begin{equation*}
  S^2+2\binom{S}{2}S\,2^3\le8S^3
\end{equation*}
candidate values. Their positive values together with zero form a set
$\mathcal G$, determined only by $\mathbf v,\mathbf w$, that contains
every $g$ obtained from~\eqref{eq:weighted-direction-certificate}.
\end{proof}

\subsection{Proof of Lemma~\ref{lem:budget-error}(ii)}
\label{app:wasserstein-progress}

\begin{proof}
Fix $\mathbf v$, a nonnegative rational matrix $\mathbf C$,
and a nonempty transition set $\mathcal U$ represented by
$\mathcal K(\overline{\mathbf p},\rho)$ in~\eqref{eq:wasserstein-plan-family}.
For $\mathbf u\ge\mathbf v$, we fix
$\mathbf p\in\operatorname*{arg\,min}_{\mathbf z\in\mathcal U}
\mathbf u^\top\mathbf z$ and
choose $\widehat{\boldsymbol\Pi}\in\mathcal K(\overline{\mathbf p},\rho)$
with $\widehat{\boldsymbol\Pi}^\top\mathbf1=\mathbf p$.
Since the objective depends only on column sums,
$\widehat{\boldsymbol\Pi}$ minimizes
$\mathbf u^\top\boldsymbol\Pi^\top\mathbf1$ over
$\mathcal K(\overline{\mathbf p},\rho)$.
We use nonnegative matrices $\mathbf X,\mathbf Y\in\mathbb R^{S\times S}$
to write a direction from $\widehat{\boldsymbol\Pi}$ as
$\mathbf Y-\mathbf X$, and let $g$ be the optimal value of
\begin{equation}
  \begin{alignedat}{2}
    &\LPopt{max}{\mathbf X,\mathbf Y}
    &\quad&\mathbf v^\top(\mathbf X-\mathbf Y)^\top\mathbf1\\
    &\text{\rm s.t.}
    &\quad&\left\{\begin{aligned}
      &\mathbf X,\mathbf Y\ge\mathbf0,\quad
        [\mathbf X]_{ij}=0\text{ if }[\widehat{\boldsymbol\Pi}]_{ij}=0,\\
      &\mathbf X\mathbf1=\mathbf Y\mathbf1,\quad
        \mathbf1^\top\mathbf X\mathbf1\le1,\\
      &\langle\mathbf C,\mathbf Y-\mathbf X\rangle\le0
        \quad\text{if }\langle\mathbf C,\widehat{\boldsymbol\Pi}\rangle=\rho.
    \end{aligned}\right.
  \end{alignedat}
  \label{eq:wasserstein-transfer-certificate}
\end{equation}
The feasible set is compact and contains $(\mathbf0,\mathbf0)$,
so the maximum is attained and $g\ge0$.

\vspace{5pt}
\noindent\textbf{Verifying the bounds for $g$ in~\eqref{eq:budget-objective-gap}.}
For any feasible $(\mathbf X,\mathbf Y)$
in~\eqref{eq:wasserstein-transfer-certificate},
$(\mathbf Y-\mathbf X)\mathbf1=\mathbf0$, and
$[\mathbf Y-\mathbf X]_{ij}\ge0$ whenever
$[\widehat{\boldsymbol\Pi}]_{ij}=0$.
Thus sufficiently small positive steps from $\widehat{\boldsymbol\Pi}$
in this direction preserve nonnegativity and row sums.
The cost constraint is preserved by the last inequality
in~\eqref{eq:wasserstein-transfer-certificate} when
$\langle\mathbf C,\widehat{\boldsymbol\Pi}\rangle=\rho$,
and by continuity when $\langle\mathbf C,\widehat{\boldsymbol\Pi}\rangle<\rho$.
Optimality of $\widehat{\boldsymbol\Pi}$ gives
$\mathbf u^\top(\mathbf Y-\mathbf X)^\top\mathbf1\ge0$, hence
\begin{equation*}
  \mathbf v^\top(\mathbf X-\mathbf Y)^\top\mathbf1
  \le(\mathbf u-\mathbf v)^\top(\mathbf Y-\mathbf X)^\top\mathbf1
  \le\|\mathbf u-\mathbf v\|_\infty\mathbf1^\top\mathbf Y\mathbf1
  \le\|\mathbf u-\mathbf v\|_\infty.
\end{equation*}
The last two inequalities use $\mathbf u\ge\mathbf v$,
$\mathbf X,\mathbf Y\ge\mathbf0$, and
$\mathbf1^\top\mathbf Y\mathbf1=\mathbf1^\top\mathbf X\mathbf1\le1$.
Maximizing over $(\mathbf X,\mathbf Y)$ gives
$g\le\|\mathbf u-\mathbf v\|_\infty$.
To bound the gap by $g$, we choose a matrix
$\boldsymbol\Pi^\star$ minimizing
$\mathbf v^\top\boldsymbol\Pi^\top\mathbf1$ over
$\mathcal K(\overline{\mathbf p},\rho)$ and set
\begin{equation*}
  [\mathbf X]_{ij}=\max\{[\widehat{\boldsymbol\Pi}]_{ij}-[\boldsymbol\Pi^\star]_{ij},0\},
  \qquad
  [\mathbf Y]_{ij}=\max\{[\boldsymbol\Pi^\star]_{ij}-[\widehat{\boldsymbol\Pi}]_{ij},0\}.
\end{equation*}
Then $\mathbf Y-\mathbf X=\boldsymbol\Pi^\star-\widehat{\boldsymbol\Pi}$,
so $\mathbf X\mathbf1=\mathbf Y\mathbf1$ because
$\widehat{\boldsymbol\Pi}\mathbf1=\boldsymbol\Pi^\star\mathbf1$.
Also $\mathbf0\le\mathbf X\le\widehat{\boldsymbol\Pi}$,
which gives the zero-coordinate restrictions and
$\mathbf1^\top\mathbf X\mathbf1\le1$.
When $\langle\mathbf C,\widehat{\boldsymbol\Pi}\rangle=\rho$,
we also have
$\langle\mathbf C,\mathbf Y-\mathbf X\rangle
=\langle\mathbf C,\boldsymbol\Pi^\star\rangle-\rho\le0$.
Thus $(\mathbf X,\mathbf Y)$ is feasible
in~\eqref{eq:wasserstein-transfer-certificate} and has objective
$\mathbf v^\top\mathbf p-\min_{\mathbf z\in\mathcal U}\mathbf v^\top\mathbf z$,
proving the remaining inequality in~\eqref{eq:budget-objective-gap}.

\vspace{5pt}
\noindent\textbf{Counting the possible values of $g$.}
We write $\mathcal I=\{(i,j):[\widehat{\boldsymbol\Pi}]_{ij}>0\}$.
When $\langle\mathbf C,\widehat{\boldsymbol\Pi}\rangle=\rho$,
we introduce a multiplier $\lambda\ge0$ for the last constraint
in~\eqref{eq:wasserstein-transfer-certificate}.
For fixed $\lambda$, the Lagrangian objective is
\begin{equation*}
  \sum_{i,j}([\mathbf v]_j+\lambda[\mathbf C]_{ij})[\mathbf X]_{ij}
  -\sum_{i,k}([\mathbf v]_k+\lambda[\mathbf C]_{ik})[\mathbf Y]_{ik}.
\end{equation*}
We maximize this expression under the other constraints
in~\eqref{eq:wasserstein-transfer-certificate}.
For fixed $\mathbf X$, minimizing the second sum over $\mathbf Y$
gives $\sum_i\psi_i(\lambda)\sum_j[\mathbf X]_{ij}$,
by $\mathbf Y\mathbf1=\mathbf X\mathbf1$ and the definition
of $\psi_i$ in~\eqref{eq:transport-dual}.
Thus maximizing over $\mathbf Y$ leaves the objective
$\sum_{(i,j)\in\mathcal I}f_{(i,j)}(\lambda)[\mathbf X]_{ij}$, where
for $z=(i,j)\in\mathcal S\times\mathcal S$ we define
\begin{equation*}
  f_z(\lambda)
  =[\mathbf v]_j+\lambda[\mathbf C]_{ij}-\psi_i(\lambda)
  =\max_{k\in\mathcal S}
    \bigl\{[\mathbf v]_j-[\mathbf v]_k
       -\lambda([\mathbf C]_{ik}-[\mathbf C]_{ij})\bigr\},
  \qquad\lambda\ge0.
\end{equation*}
Each $f_z$ is convex and piecewise affine, and the term $k=j$ gives
$f_z\ge0$.
Since $\mathbf X\ge\mathbf0$ is supported on $\mathcal I$ and
$\mathbf1^\top\mathbf X\mathbf1\le1$, maximizing over $\mathbf X$
gives $\max_{z\in\mathcal I}f_z(\lambda)$.
Strong LP duality therefore gives
\begin{equation}
  g=\min_{\lambda\ge0}\max_{z\in\mathcal I}f_z(\lambda).
  \label{eq:wasserstein-transfer-dual}
\end{equation}
The functions $f_z$ depend only on $\mathbf v,\mathbf C$.
We show that the minimum in~\eqref{eq:wasserstein-transfer-dual}
is unchanged if the inner maximum is restricted to a suitable pair
of functions. We define
\begin{equation}
  \delta_{zz'}=\min_{\lambda\ge0}\max\{f_z(\lambda),f_{z'}(\lambda)\},
  \qquad z,z'\in\mathcal S\times\mathcal S.
  \label{eq:wasserstein-pair-defect}
\end{equation}
These minima are attained because the objectives are continuous,
bounded below, and have finitely many affine pieces.
We set $\delta^\star=\max_{z,z'\in\mathcal I}\delta_{zz'}$.
Restricting the maximum in~\eqref{eq:wasserstein-transfer-dual}
to any two functions gives $\delta^\star\le g$.
For the reverse inequality, each set
$\{\lambda\ge0:f_z(\lambda)\le\delta^\star\}$, $z\in\mathcal I$,
is a closed interval by continuity and convexity of $f_z$.
By~\eqref{eq:wasserstein-pair-defect}, for any $z,z'\in\mathcal I$,
there is a multiplier $\lambda\ge0$ satisfying
\begin{equation*}
  \max\{f_z(\lambda),f_{z'}(\lambda)\}
  =\delta_{zz'}\le\delta^\star.
\end{equation*}
This multiplier belongs to both corresponding intervals,
so the intervals intersect pairwise.
We choose an interval whose left endpoint is largest.
Since it intersects every other interval, this endpoint is no greater
than any right endpoint and therefore belongs to every interval.
Using this common multiplier in~\eqref{eq:wasserstein-transfer-dual}
gives $g\le\delta^\star$, proving
\begin{equation}
  g=\max_{z,z'\in\mathcal I}\delta_{zz'}.
  \label{eq:wasserstein-pair-representation}
\end{equation}
If $\langle\mathbf C,\widehat{\boldsymbol\Pi}\rangle<\rho$,
the cost constraint in~\eqref{eq:wasserstein-transfer-certificate}
is absent, giving
$g=\max_{(i,j)\in\mathcal I}\{[\mathbf v]_j-\min_k[\mathbf v]_k\}$.
Both here and in~\eqref{eq:wasserstein-pair-representation},
$g$ equals one of the terms in the maximum. It therefore belongs to
the finite set
\begin{equation*}
  \mathcal G=\{0\}
       \cup\{\delta_{zz'}:z,z'\in\mathcal S\times\mathcal S\}
       \cup\{[\mathbf v]_j-\min_k[\mathbf v]_k:j\in\mathcal S\}.
\end{equation*}
This set depends only on $\mathbf v,\mathbf C$.
There are $S^2$ possible indices $z$, hence at most $S^4$ pair values
and at most $S$ additional coordinate differences.
Thus $\mathcal G$ contains at most $S^4+S\le2S^4$ positive numbers.
\end{proof}

\end{document}